\newtheorem{theorem}{Theorem}[section]
\newtheorem{cor}[theorem]{Corollary}
\newtheorem{lemma}[theorem]{Lemma}
\newtheorem{proposition}[theorem]{Proposition}
\newtheorem{defn}[theorem]{Definition}
\newtheorem{example}[theorem]{Example}
\newtheorem{claim}[theorem]{Claim}
\newtheorem{remark}[theorem]{Remark}
\newcommand{\size}[1]{\parallel\!\! #1\!\!\parallel }
\newcommand{\Pol}{\mbox{\rm P}}
\newcommand{\NP}{\mbox{\rm NP}}
\newcommand{\LCFL}{\mbox{\rm LOGCFL}}
\newcommand{\U}{\mathcal{U}}
\newcommand{\anymaxref}{{\sc Max}}
\newcommand{\nextref}{{\sc Next}}
\newcommand{\topk}{{\sc Top-}\mbox{$\it K$}}
\newcommand{\PHI}{{\mathrm{\Phi}}}
\newcommand{\DB}{{\mbox{\rm DB}}}
\newcommand{\DBW}{{\mbox{\rm DB'}}}
\newcommand{\onDB}{{\mbox{\rm \tiny DB}}}
\newcommand{\vars}{\mathit{vars}}
\newcommand{\atoms}{\mathit{atoms}}
\newcommand{\ranks}{\mathit{wfs}}
\newcommand{\onDBW}{{\mbox{\rm \tiny DB'}}}
\newcommand{\onDBWa}{{\mbox{\rm \tiny DB'}_a}}
\newcommand{\Q}{{\mathrm{\Phi}}}
\newcommand{\rel}{\mathit{rel}}
\newcommand{\algMax}{\mbox{\tt Compute-Max}}
\newcommand{\dom}{{\it dom}}
\newcommand{\homEquiv}{\equiv_{\tt h}}
\newcommand{\atom}{\mathit{atom}}
\newcommand{\newsep}{\mathit{sep}}
\newcommand{\HG}{{\cal H}}
\newcommand{\nodes}{\mathit{nodes}}
\newcommand{\edges}{\mathit{edges}}
\newcommand{\JT}{\mathit{JT}}
\newcommand{\W}{\mathcal{W}}
\newcommand{\V}{\mathcal{V}}
\newcommand{\cJ}{\mathcal{I}}
\newcommand{\bp}{p}
\newcommand{\bq}{q}
\newcommand{\bs}{s}
\newcommand{\bl}{l}
\newcommand{\relp}{Rel_{\!\mbox{\tiny T'}}}
\newcommand{\ret}{R_{\!\mbox{\tiny T}}}
\newcommand{\retp}{R_{\!\mbox{\tiny T'}}}
\newcommand{\iet}{I_{\!\mbox{\tiny T}}}
\newcommand{\oet}{O_{\!\mbox{\tiny T}}}
\newcommand{\oetp}{O_{\!\mbox{\tiny T'}}}
\newcommand{\ietp}{I_{\!\mbox{\tiny T'}}}
\newcommand{\sch}{\it S\!cope_{\mbox{\tiny T}}}
\newcommand{\schp}{\it S\!cope_{{T'}}}
\newcommand{\join}{\bowtie}
\newcommand{\ACQ}{\mbox{\rm ACSP}}
\newcommand{\val}{{\it val}}
\newcommand{\overbar}[1]{\mkern 9mu\overline{\mkern-9mu#1\mkern-4mu}\mkern 4mu}
\newcommand{\bJT}{\overbar{\mathit{JT}}}
\newenvironment{myitemize}{\begin{list}{$\bullet$}{\setlength{\leftmargin}{13pt}
\setlength{\itemindent}{0.6\labelwidth}}} {\end{list}}
\newenvironment{myitemize2}{\begin{list}{$\bullet$}{\setlength{\leftmargin}{13pt}
\setlength{\itemindent}{0.2\labelwidth}}} {\end{list}}
\newcommand{\tuple}[1]{\langle#1\rangle}
\begin{document}

\title{Tree Projections and Constraint Optimization Problems:  Fixed-Parameter Tractability and Parallel Algorithms}

\author{Georg Gottlob$^1$, Gianluigi Greco$^2$ and Francesco Scarcello$^3$\\
\ \\
\small $^1$Department of Computer Science, University of Oxford, UK\\
\small $^2$Department of Mathematics and Computer Science, University of Calabria, Italy\\
\small $^3$DIMES, University of Calabria, Italy\\
\small  {\tt gottlob@cs.ox.ac.uk, ggreco@mat.unical.it, scarcello@deis.unical.it }}

\date{}

\maketitle

\begin{abstract}
Tree projections provide a unifying framework to deal with most structural decomposition methods of constraint satisfaction problems~(CSPs).
Within this framework, a CSP instance is decomposed into a number of sub-problems, called views, whose solutions are either already available
or can be computed efficiently. The goal is to arrange portions of these views in a tree-like structure, called tree projection, which
determines an efficiently solvable CSP instance equivalent to the original one. However, deciding whether a tree projection exists is
$\NP$-hard. Solution methods have therefore been proposed in the literature that do not require a tree projection to be given, and that either
correctly decide whether the given CSP instance is satisfiable, or return that a tree projection actually does not exist. These approaches had
not been generalized so far to deal with CSP extensions tailored for optimization problems, where the goal is to compute a solution of maximum
value/minimum cost.
The paper fills the gap, by exhibiting a fixed-parameter polynomial-time algorithm that either disproves the existence of tree projections or
computes an optimal solution, with the parameter being the size of the expression of the objective function to be optimized over all possible
solutions (and not the size of the whole constraint formula, used in related works). Tractability results are also established for the problem
of returning the best $K$ solutions. Finally, parallel algorithms for such optimization problems are proposed and analyzed.

Given that the classes of acyclic hypergraphs, hypergraphs of bounded treewidth, and hypergraphs of bounded generalized hypertree width are all
covered as special cases of the tree projection framework, the results in this paper directly apply to these classes. These classes are
extensively considered in the CSP setting, as well as in conjunctive database query evaluation and optimization.
\end{abstract}

\noindent \textbf{Keywords:} Constraint Satisfaction Problems, AI, Optimization Problems, Structural Decomposition Methods, Tree Projections,
Parallel Models of Computation, Conjunctive Queries, Query Optimization, Database Theory.

\raggedbottom

\section{Introduction}\label{sec:intro}

\subsection{Optimization in Constraint Satisfaction Problems}

Constraint satisfaction is a central topic of research in Artificial Intelligence, and has a wide spectrum of concrete applications ranging
from configuration to scheduling, plan design, temporal reasoning, and machine learning, just to name a few.

Formally, a constraint satisfaction problem (for short: CSP) instance is a triple $\mathcal{I}=\tuple {{\it Var},\U,\mathcal{C}}$, where ${\it
Var}$ is a finite set of variables, $\U$ is a finite domain of values, and $\mathcal{C}=\{C_1,C_2,...,C_q\}$ is a finite set of constraints
(see, e.g.,~\cite{D03}). Each constraint $C_v$, with $v\in \{1,...,q\}$, is a pair $(S_v,r_v)$, where $S_v\subseteq {\it Var}$ is a set of
variables called the {\em constraint scope}, and $r_v$ is a set of assignments from variables in $S_v$ to values in $\U$ indicating the allowed
combinations of values for the variables in $S_v$.
A (partial) assignment from a set of variables $\W\subseteq {\it Var}$ to $\U$ is explicitly represented by the set of pairs of the form $X/u$,
where $u\in\U$ is the value to which $X\in \W$ is mapped.
%We explicitly say ``partial'' assignment whenever we want to point out that $\W$ may be a proper subset of ${\it Var}$.
%
An assignment $\theta$ \emph{satisfies} a constraint $C_v$ if its restriction to $S_v$, i.e., the set of pairs $X/u\in \theta$ such that $X\in
S_v$, occurs in $r_v$.
A \emph{solution} to $\mathcal{I}$ is a (total) assignment $\theta:{\it Var}\mapsto \U$ for which $q$ satisfying assignments $\theta_1\in
r_1,...,\theta_q\in r_q$ exist such that $\theta=\theta_1\cup...\cup \theta_q$. Therefore, a solution is a total assignment that satisfies all
the constraints in $\mathcal{I}$.

By \emph{solving a CSP instance} we usually just mean finding any arbitrary solution. However, when assignments are associated with weights
because of the semantics of the underlying application domain, we might instead be interested in the corresponding \emph{optimization problem}
of finding the solution of maximum or minimum weight (short: \anymaxref\ and {\sc Min} problems), whose modeling is possible in several
variants of the basic CSP framework, such as the \emph{valued} and \emph{semiring-based CSPs}~\cite{BFMRSV96}. Moreover, we might be interested
in the \topk\ problem of enumerating the best (w.r.t.~\anymaxref\ or {\sc Min}) $K$ solutions in form of a ranked list (see, e.g.,
\cite{FD10,BDGM12}),\footnote{Related results on graphical models, conjunctive query evaluation, and computing homomorphisms on relational
structures are transparently recalled hereinafter in the context of constraint satisfaction.} or even in the \nextref\ problem of computing the
next solution (w.r.t.~such an ordering) following one that is at hand~\cite{BRSVW10}.

CSP instances, as well as their extensions tailored to model optimization problems, are computationally intractable. Indeed, even just deciding
whether a given instance admits a solution is a well-known $\NP$-hard problem, which calls for practically effective algorithms and heuristics,
and for the identification of specific subclasses, called ``islands of tractability'', over which the problem can be solved efficiently. In
this paper, we consider the latter perspective to attack CSP instances, by looking at \emph{structural properties} of constraint scopes.

\subsection{Structural Decomposition Methods and Tree Projections}

The avenue of research looking for islands of tractability based on structural properties
originated from the observation that constraint satisfaction is tractable on \emph{acyclic} instances (cf.~\cite{MONTANARI197495,Y81}), i.e.,
on instances whose associated hypergraph (whose hyperedges correspond one-to-one to the sets of variables in the given constraints) is
acyclic.\footnote{There are different notions of hypergraph acyclicity. In the paper, we consider $\alpha$-acyclicity, which is the most
liberal one~\cite{F83}.}

Motivated by this result, \emph{structural decomposition methods} have been proposed in the literature as approaches to transform any given
cyclic CSP into an equivalent acyclic one by organizing its constraints or variables into a polynomial number of clusters and by arranging
these clusters as a tree, called \emph{decomposition tree}. The satisfiability of the original instance can be then checked by exploiting this
tree, with a cost that is exponential in the cardinality of the largest cluster, also called {\em width} of the decomposition, and polynomial
if the width is bounded by a constant (see~\cite{GGLS16} and the references therein).
Similarly, by exploiting this tree, solutions can be computed even to CSP extensions tailored for optimization problems, again with a cost that
is polynomial over bounded-width instances. For instance, we know that (in certain natural optimization settings) \anymaxref\ is feasible in
polynomial time over instances whose underlying hypergraphs are acyclic~\cite{KS06}, have bounded \emph{treewidth}~\cite{FD10}, or have bounded
\emph{hypertree width}~\cite{GGS09,GS11}.

Despite their different technical definitions, there is a simple framework encompassing all structural decomposition methods,\footnote{The
notion of \emph{submodular} width~\cite{M13} does not fit this framework, as it is not purely structural.} which is the framework of the
\emph{tree projections}~\cite{GS84}.
The basic idea of these methods is indeed to ``cover'' all the given constraints via a polynomial number of clusters of variables and to
arrange these clusters as a tree, in such a way that the \emph{connectedness condition} holds, i.e., for each variable $X\in {\it Var}$, the
subgraph induced by the clusters containing $X$ is a tree. In particular, any cluster identifies a subproblem of the original instance, and it
is required that all solutions to this subproblem can either be computed efficiently, or are already available (e.g., from previous
computations).
A tree built from the available clusters and covering all constraints is called a \emph{tree projection}~\cite{GS84,SS93,GMS09,GrecoSIC17}. In
particular, whenever such clusters are required to satisfy additional conditions, tree projections reduce to specific decomposition methods.
For instance, if we consider candidate clusters given by all subproblems over $k+1$ variables at most (resp., over any set of variables
contained in the union of $k$ constraints at most), then tree projections correspond to \emph{tree decompositions}~\cite{RS86,D03} (resp.,
\emph{generalized hypertree decompositions}~\cite{GLS02,GMS09}), and $k$ is their associated {width}.

Deciding whether a tree projection exists is $\NP$-hard in general, that is, when a set of arbitrary clusters/subproblems is
given~\cite{GMS09}. Moreover, the problem remains intractable is some specific settings, such as (bounded width) generalized hypertree
decompositions~\cite{GMS09}.
Therefore, designing tractable algorithms within the framework of tree projections is not an easy task. Ideally we would like to efficiently
solve the instances without requiring that a tree projection be explicitly computed (or provided as part of the input).
For standard CSP instances, algorithms of this kind have already been exhibited~\cite{SS93,GS84,CD05,GS10}. These algorithms are based on
\emph{enforcing pairwise-consistency}~\cite{BFMY83}, also known in the CSP community as {\em relational arc consistency} (or arc consistency on
the dual graph)~\cite{D03}, {\em 2-wise consistency}~\cite{G86}, and $R(*,2)C$~\cite{KWRCB10}. Note that these algorithms are mostly used in
heuristics for constraint solving algorithms.
The idea is to repeatedly take---until a fixpoint is reached---any two constraints $C_i=(S_i,r_i)$ and $C_j=(S_j,r_j)$ and to remove from $r_i$
all assignments $\theta_i$ that cannot be extended over the variables in $S_j$, i.e., for which there is no assignment $\theta_j\in r_j$ such
that the restrictions of $\theta_i$ and $\theta_j$ over the variables in $S_i\cap S_j$ coincide.
Here, the crucial observation is that the order according to which pairs of constraints are processed is immaterial, so that this procedure is
equivalent to Yannakakis' algorithm~\cite{Y81}, which identifies a correct processing order based on the knowledge of a tree
projection.\footnote{The algorithm has been originally proposed for acyclic instances. For its application within the tree projection setting,
the reader is referred to~\cite{GS10}.}
Actually, it is even unnecessary to know that a tree projection exists at all, because any candidate solution can be certified in polynomial
time.
Indeed, these algorithms are designed in a way that, whenever some assignment is computed that is subsequently found not to be a solution, then
the (promised) existence of a tree projection is disproved. We define these algorithms computing certified solutions as \emph{promise-free},
with respect to the existence of a tree projection (cf.~\cite{CD05,GS10}). We note that, so far, this kind of solution approach has not been
generalized in the literature to deal with CSP extensions tailored for optimization problems.

\subsection{Contributions}

All previous algorithms proposed in the literature for computing the best CSP solutions in polynomial
time~\cite{FD10,GGS09,KDLD05,BMR97,TJ03,GG13,GS11,DBLP:journals/corr/JoglekarPR15,AboKhamis:2016:FQA:2902251.2902280}
(or, more generally, for optimizing functions in different application domains---see, e.g.,~\cite{LS90}) require the knowledge of some suitable
tree projection, which provides at each node a list of potentially good partial evaluations with their associated values to be propagated
within a dynamic programming scheme.
The main conceptual contribution of the present paper is to show that this knowledge is not necessary, since promise-free algorithms can be
exhibited in the tree projection framework even when dealing with optimization problems.

More formally, we consider a setting where the given CSP instance $\mathcal{I}$ is equipped with a valuation function $\mathcal{F}$ to be
maximized over the feasible solutions. The function is built from basic weight functions defined on subsets of variables occurring in
constraint scopes, combined via some binary operator $\oplus$.\footnote{In fact, our results are designed to hold in a more general setting
where different binary operators may be used together in the definition of more complex valuation functions. However, for the sake of
presentation, we shall mainly focus on a single operator, in the spirit of the (standard) valued and semiring-based CSP settings.} Moreover, we
assume that a set of subproblems $\V$ is given together with their respective solutions.
Then, within this setting,

\vspace{-2mm}
\begin{itemize}
\item[$\rhd$] We provide a {\em fixed-parameter polynomial-time} algorithm~\cite{down-fell-99} for {\anymaxref} that either computes a
    solution (if one exists) having the best weight according to $\mathcal{F}$, or says that no tree projection can be built by using the
    available subproblems in $\V$. In any case, the algorithm does not output any wrong answer, because the computed solutions are
    certified. More precisely, the algorithm runs in time $f(\kappa)\times n^c$, where the parameter $\kappa$ is the number of basic
    functions occurring in $\mathcal{F}$, $n$ is the size of the input, and $c$ is a fixed natural number. Thus, the running time has no
    exponential dependency on the input, but possibly on the fixed parameter $\kappa$.

\item[$\rhd$] We show that the  \topk\ problem of returning the best $K$ solutions over all  possible solutions is fixed-parameter
    tractable, too. As we may have an exponential number of solutions (w.r.t.~$n$), tractability means here having a promise-free algorithm
    that computes the desired output {\em with fixed-parameter polynomial delay}: The first solution is computed  in fixed-parameter
    polynomial-time, and any other solution is computed within fixed-parameter polynomial-time after the previous one.

\item[$\rhd$] Moreover, we complement the above research results, by studying the setting where a tree projection is given at hand. In this
    case, we show that the task of computing the best solutions over a set of output variables is not only feasible in polynomial time (as
    we already know from the literature pointed out above), but it is even possible to define parallel algorithms that can exploit the
    availability of machines with multiple processors.
\end{itemize}

Concerning our main technical contributions, we stress here that different kinds of fixed-parameter polynomial-time algorithms can be defined
for the problems of interests when varying the underlying parameter of interest. For instance, a trivial choice would be to consider the
overall number of constraints involved in the CSP at hand. In fact, our parameter is very often much smaller, so that our algorithms can be
useful in all those applications where the optimization function consists of few basic functions, while the number of constraints is large
(which makes infeasible computing any tree projection).

\paragraph{Organization} The rest of the paper is organized as follows.
Section~\ref{sec:preliminaries} illustrates some basic notions about CSPs and their structural properties. The formal framework for equipping
CSP instances with optimization functions is introduced in Section~\ref{sec:framework}.
Our fixed-parameter tractability results are illustrated in Section~\ref{sec:main}.
Parallel algorithms are presented in Section~\ref{sec:parallel}.
Relevant related works are discussed in Section~\ref{sec:related}, and concluding remarks are drawn in Section~\ref{sec:conclusion}.

\section{Preliminaries}\label{sec:preliminaries}

\paragraph{Logic-Based Modeling of Constraint Satisfaction} Let $\mathcal{I}=\tuple
{{\it Var},\U,\mathcal{C}}$ be a CSP instance, with $\mathcal{C}=\{(S_1,r_1),(S_2,r_2),...,(S_q,r_q)\}$.
Following~\cite{KV00}, we shall exploit throughout the paper the logic-based characterization of $\mathcal{I}$ as a pair $(\PHI,\DB)$, which
simplifies the illustration of structural tractability results.
In particular, $\PHI$ is the \emph{constraint formula} (associated with $\mathcal{I}$), i.e., a conjunction of atoms of the form $r_{1}({\bf
u_1})\wedge\cdots\wedge r_{q}({\bf u_q})$ where ${\bf u_v}$, for each $v\in \{1,...,q\}$, is obtained by listing all the variables in the scope
$S_v$. The set of variables in $\PHI$ is denoted by  $\vars(\PHI)$, while the set of atoms occurring in $\PHI$ is denoted by $\atoms(\PHI)$.
Moreover, $\DB$ is the \emph{constraint database}, i.e., a set of ground atoms encoding the allowed tuples of values for each constraint, built
as follows. For each constraint index $v\in \{1,...,q\}$ and for each assignment $\theta_v\in r_v$, $\DB$ contains the ground atom
$r_v(a_1,...,a_{|S_v|})$ where if $X_i$ is the $i$-th variable in the list ${\bf u_v}$, then $a_i=\theta_v(X_i)$ holds for each
$i\in\{1,...,|S_v|\}$. No further ground atom is in $\DB$.

In the following, for any set $\W$ of variables and any assignment $\theta$, $\theta[\W]$ denotes the partial assignment obtained by
restricting $\theta$ to the variables in $\W$. Therefore, a (total) substitution $\theta$ is a solution to $(\PHI,\DB)$ if $\theta[S_v]\in r_v$
holds for each $v\in \{1,...,q\}$. The set of all solutions to the CSP instance $(\PHI,\DB)$ is denoted by $\PHI^\onDB$. Moreover, for any set
$\W$ of variables, $\PHI^\onDB[\W]$ denotes the set $\{ \theta[\W] \mid \theta\in \PHI^\onDB \}$.

\medskip

\paragraph{Structural Properties of CSP Instances}
The structure of a constraint formula $\PHI$ is best represented by its associated hypergraph $\HG_\PHI=(N,H)$, where $N=\vars(\PHI)$, i.e.,
variables are viewed as nodes, and where $H=\{S_1,...,S_v\}$, i.e., for each atom in $\PHI$, $H$ contains a hyperedge including all its
variables.
For any hypergraph $\HG$, we denote the sets of its nodes and of its hyperedges by $\nodes(\HG)$ and $\edges(\HG)$, respectively.

A hypergraph $\HG$ is {\em acyclic} if it has a {\em join tree}~\cite{BG81}. A {\em join tree} $\JT$ of $\HG$ is a labeled tree $(V,E,\chi)$,
where for each vertex $v\in V$, it holds that $\chi(v)\in \edges(\HG)$, and where the following conditions are satisfied:
\begin{description}
  \item[{Covering Condition}:] $\forall h\in \edges(\HG)$, for some vertex $v$ of $\JT$, $h=\chi(v)$ holds;

  \item[{Connectedness Condition}:] for each pair of vertices $v_1,v_2$ in $V$ such that $\chi(v_1)\cap\chi(v_2)=\W\neq\emptyset$, $v_1$
      and $v_2$ are connected in $\JT$ (via edges from $E$) and $\W\subseteq\chi(v)$, for every vertex $v$ in the unique path linking $v_1$
      and $v_2$ in $\JT$.
\end{description}

Note that this definition is apparently more liberal than the traditional one (in~\cite{BG81}), where there is a one-to-one correspondence
between hyperedges and vertices of the join tree. We find it convenient to allow multiple occurrences of the same hyperedge in the $\chi$
labels of different vertices of $\JT$, but it is straightforward to show that a standard join tree may be obtained from $\JT$ by repeatedly
contracting edges of the form $\{v_1,v_2\}$, where $\chi(v_1)\subseteq \chi(v_2)$ (until such a one-to-one correspondence is met).

\medskip

\paragraph{Decomposition Methods}
Structural decomposition methods have been proposed in the literature in order to provide a measure of the degree of acyclicity of hypergraphs,
and in order to generalize positive computational results from acyclic hypergraphs to nearly-acyclic ones. Despite their different technical
definitions, there is a simple framework encompassing all known (purely structural) decomposition methods. The framework is based on the
concept of \emph{tree projection}~\cite{GS84}, which is recalled below.

\begin{defn}\em
For two hypergraphs $\HG_1$ and $\HG_2$, we say that $\HG_2$ \emph{covers} $\HG_1$, denoted by $\HG_1\leq \HG_2$, if each hyperedge of $\HG_1$
is contained in at least one hyperedge of $\HG_2$. Let $\HG_1\leq \HG_2$. Then, a \emph{tree projection} of $\HG_1$ with respect to $\HG_2$ is
an acyclic hypergraph $\HG_a$ such that $\HG_1\leq \HG_a \leq \HG_2$. Whenever such a hypergraph $\HG_a$ exists, we say that the pair
$(\HG_1,\HG_2)$ has a tree projection. \hfill $\Box$
\end{defn}

\begin{figure*}[t]
  \centering
  \includegraphics[width=\textwidth]{./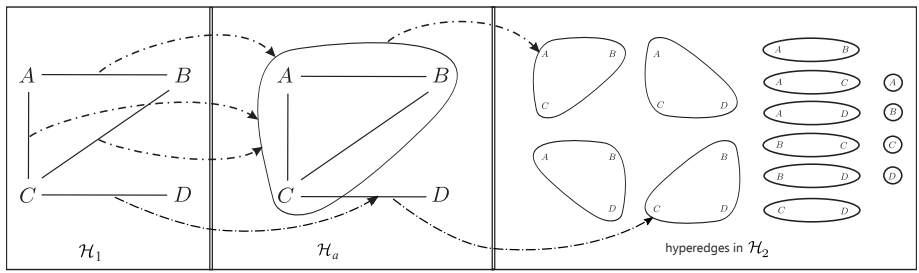}
  \caption{Structures discussed in Example~\ref{ex:treeDecomposition}.}\label{fig:Example-TW}
\end{figure*}

\begin{example}\em
Consider the hypergraph $\HG_1$ depicted in Figure~\ref{fig:Example-TW}, and the hypergraph $\HG_2$ whose hyperedges are listed on the right of
the same figure. Note that $\HG_1$ is (just) a graph and it contains a cycle over the nodes $A$, $B$, and $C$.

The acyclic hypergraph $\HG_a$ shown in the middle is a tree projection of $\HG_1$ w.r.t.~$\HG_2$. For instance, note that the cycle is
``absorbed'' by the hyperedge $\{A,B,C\}$, which is in its turn trivially contained in a hyperedge of $\HG_2$. \hfill $\lhd$
\end{example}

%In order to see how structural decomposition methods fit the framework of tree projections, we recall that any of these methods can be viewed
 Following~\cite{GS10}, tree projections can be used to solve any CSP instance $(\PHI,\DB)$ whenever we have (or we can build) an additional
 pair $(\V,\DBW)$ such that:
\begin{itemize}
\item  $\V$ is a set of atoms (hence, corresponding to a set of constraint scopes). Each atom in $\V$ clusters together the variables of a
    subproblem whose solutions are assumed to be available in the \emph{constraint database} $\DBW$ and that can be exploited in order
      to answer the original CSP instance $(\PHI,\DB)$.
%      \footnote{More formally, if $\PHI_\V$ denotes the constraint formula such that
%      $\atoms(\PHI_\V)=\V$, then the decomposition method can be equivalently seen as an algorithm producing the novel CSP instance
% $(\PHI_\V,\DBW)$.}
 Atoms in $\V$ will be called \emph{views}, and $\V$ will be called \emph{view set}. It is required that, for each atom $q\in\atoms(\PHI)$,
     $\V$ contains a \emph{base view} $w_q$ with the same list of variables as $q$.

  \item $\DBW$ is a constraint database that satisfies the following conditions:
  \begin{itemize}
\item[(i)] $w_q^\onDBW \subseteq q^\onDB$ holds for each base view $w_q\in\V$; that is, base views should be at least as restrictive as
    atoms in the constraint formula;

\item[(ii)] $w^\onDBW\supseteq \PHI^\onDB[w]$ holds for each $w\in \V$; that is, any view cannot be more restrictive than the
    constraint formula, otherwise correct solutions may be deleted by performing operations involving such views.
\end{itemize}
Such a database $\DBW$ is said \emph{legal} for $\V$ w.r.t.~$\Q$ and $\DB$.
\end{itemize}

The pair $(\V,\DBW)$ is used as follows. Let $\HG_\V$ denote the view hypergraph precisely containing, for each view $w$ in $\V$, one hyperedge
over the variables in $w$.
We look for a \emph{sandwich formula of $\PHI$ w.r.t.~$\V$}, that is, a constraint formula $\PHI_a$ such that $\atoms(\PHI_a)$ includes all
base views and $\HG_{\PHI_a}$ is a tree projection of $\HG_{\PHI}$ w.r.t.~$\HG_\V$. By exploiting the sandwich formula $\PHI_a$, solving $\PHI$
can be reduced to answering an acyclic instance, hence to a task which is feasible in polynomial time.
Indeed, by projecting the assignments of any legal database $\DBW$ over the (portions of the) views used in $\PHI_a$, a novel database $\DBW_a$
can be obtained such that $\PHI_a^\onDBWa=\PHI^\onDB$~\cite{GS84}.

%\footnote{Note that we do not cover here the decomposition method based on \emph{submodular width}~\cite{M13}, which is not based on
%polynomial-time tasks (but it is fixed-parameter tractable, if the size of the constraint formula is used as the fixed parameter).} The acyclic
% hypergraph $\HG_{\PHI_a}$ can be seen as a \emph{decomposition tree} in the terminology of structural decomposition methods.

Most structural decomposition methods of constraint satisfaction problems can be viewed as special instances of this approach, where the
peculiarities of each method lead to different ways of building the additional view set $\V$, with its associated database $\DBW$. For
instance, the methods based on {\em generalized hypertree decompositions}~\cite{GLS02,GMS09} and \emph{tree decompositions}~\cite{RS86}, for a
constant \emph{width} $k$, fit into the framework as follows:
\begin{description}
  \item[$k$-width generalized hypertree decompositions:] The method uses a set $h\mbox{-}\V_k$ of views including, for each subformula
      $\PHI'$ of $\PHI$ with $\atoms(\PHI')\subseteq \atoms(\PHI)$ and $|\atoms(\PHI')|\leq k$, a view that is built over the set of all
      variables on which these atoms are defined (hence, base views are obtained for $k$=1) and whose assignments in the corresponding
      constraint database $h\mbox{-}\DBW_k$ are all solutions to $(\PHI',\DB)$.

  \item[$k$-width tree decompositions:] The method uses the set $\V_k$ of views consisting of the base views plus all the views that can be
      built over all possible sets of at most $k+1$ variables. In the associated constraint relations in $\DBW_k$, base views consist of
      the assignments in the corresponding atoms in $\DB$, whereas each of the remaining views contains all possible assignments that can
      be built over them, hence $|U|^{k+1}$ assignments at most, where $U$ is the size of the largest domain over the selected $k+1$
      variables.
\end{description}

\begin{example}\label{ex:treeDecomposition}\em
Consider a CSP instance $(\PHI,\DB)$ such that $\PHI=r_1(A,B)\wedge r_2(B,C)\wedge r_3(A,C) \wedge r_4(C,D)$  and where $r_i(0,0)$ and
$r_i(1,1)$ are the only two ground atoms in $\DB$, for each $i\in\{1,2,3,4\}$.
The constraint hypergraph associated with $\PHI$ is precisely the hypergraph $\HG_1$ illustrated in Figure~\ref{fig:Example-TW}. Since
$\HG_\PHI=\HG_1$ is not acyclic, our goal is to apply a structural decomposition method for transforming the original instance $\PHI$ into a
novel acyclic one $\PHI_a$ that ``covers'' all constraints in $\PHI$ and is equivalent to it.
To this end, let us consider the application on $(\PHI,\DB)$ of the {tree decomposition method} with $k$ being the associated width, resulting
in the pair $(\V_k,\DBW_k)$.
%
%In this case, the view set $\V_k$ consists of the base views plus all the views that can be built over all possible sets of at most $k+1$
%variables. In the associated constraint relations in $\DBW_k$, base views consist of the assignments in the corresponding atoms in $\DB$,
%whereas each of the remaining views contains all possible assignments that can be built over them, hence $|U|^{k+1}$ assignments at most, where
%$U$ is the size of the largest domain over the selected variables.
%
For instance, the base view $w_{r_4(C,D)}(C,D)$ is in $\V_k$ and its associated tuples in $\DBW_k$ are $w_{r_4(C,D)}(0,0)$ and
$w_{r_4(C,D)}(1,1)$. Moreover, for each natural number $k>1$, a view having the form $w_{A,B,C}(A,B,C)$ is in $\V_k$ and the associated tuples
in $\DBW_k$ are $w_{A,B,C}(\alpha,\beta,\gamma)$, with $\alpha,\beta,\gamma\in \{0,1\}$.
In particular, for $k=2$, the hypergraph $\HG_{\V_2}$ precisely coincides with the hypergraph $\HG_2$ (whose hyperedges are) illustrated in
Figure~\ref{fig:Example-TW}. Consider then the constraint formula

{\small
$$\PHI_a=w_{A,B,C}(A,B,C)\wedge w_{r_1(A,B)}(A,B)\wedge w_{r_2(B,C)}(B,C)\wedge w_{r_3(A,C)}(A,C) \wedge w_{r_4(C,D)}(C,D),$$
}

\vspace{-2mm}\noindent and note that $\HG_{\PHI_a}$ coincides with the acyclic hypergraph $\HG_a$, which is a tree projection\footnote{The fact
that $\HG_a$ is a tree projection of $\HG_\PHI$ w.r.t.~$\HG_{\V_2}$ witnesses that the treewidth of $\HG_\PHI$ is 2. In general, a tree
projection of $\HG_\PHI$ w.r.t.~$\HG_{\V_k}$ exists if and only if $\HG_\PHI$ has treewidth $k$ at most (see~\cite{GS10,GS13}).} of $\HG_\PHI$
w.r.t.~$\HG_{\V_2}$. Then, solving $(\PHI,\DB)$ is equivalent to solving $(\PHI_a,\DBW_a)$ where $\DBW_a$ is just the restriction of $\DBW_2$
over the atoms in $\atoms(\PHI_a)$.\hfill $\lhd$
\end{example}

\section{Valuation Functions and Basic Results}\label{sec:framework}

In this section, we illustrate a formal framework for equipping constraint formulas with valuation functions suited to express a variety of
optimization problems. Moreover, we introduce and analyze a notion of embedding as a way to represent and study the interactions between
constraint scopes and valuation functions.

In the following we assume that a domain $\U$ of values, a constraint formula $\PHI$, and a set $\mathbb{D}$ of weights totally ordered by a
relation~$\geq$ are given. Moreover, on the set $\mathbb{D}$, we define $\max$ and $\min$ as the operations returning any $\geq$-maximum and
the $\geq$-minimum weight, respectively, over a given set of weights.

\subsection{Formal Framework}

Let $\W\subseteq \vars(\PHI)$ be a set of variables. Then, a function $f$ associating each assignment $\theta: \W \mapsto \U$ with a weight
$f(\theta)\in \mathbb{D}$ is called a \emph{weight function} (for $\PHI$), and we denote by $\vars(f)$ the set $\W$ on which it is defined. If
an assignment $\theta': \W'\mapsto \U$ with $\W'\supseteq \vars(f)$ is given, then we write $f(\theta')$ as a shorthand for
$f(\theta'[\vars(f)])$.

\begin{defn}\label{def:valuationfunction}\em
Let $\oplus$ be a closed, commutative, and associative binary operator over $\mathbb{D}$ being, moreover, distributive over $\max$.
A \emph{valuation function} $\mathcal{F}$ (for $\Q$ over $\oplus$) is an expression of the form $f^1_{{\bf u}_1}\oplus\dots\oplus f^m_{{\bf
u}_m}$, with $m\geq 1$. The set of all weight functions occurring in $\mathcal{F}$ is denoted by $\ranks(\mathcal{F})$.
For an assignment $\theta:\vars(Q)\mapsto \U$, $\mathcal{F}(\theta)$ is the weight $f^1_{{\bf u}_1}(\theta)\oplus\dots\oplus f^m_{{\bf
u}_m}(\theta)$.\hfill $\Box$
\end{defn}

As an example, note that valuation functions built for $\oplus=\mbox{`}+\mbox{'}$ basically\footnote{For more information on weighted CSPs, see
Section~\ref{sec:related}.} correspond to those arising in the classical setting of {\em weighted} CSPs, where combination of values in the
constraints come associated with a cost and the goal is to find a solution minimizing the sum of the costs over the constraints.

A constraint formula $\PHI$ equipped with a valuation function $\mathcal{F}$ is called a \emph{(constraint) optimization formula}, and is
denoted by $\PHI_\mathcal{F}$.
For an optimization formula $\Q_\mathcal{F}$ and a constraint database $\DB$, we define the total order $\succeq_\mathcal{F}$ over the
assignments in $\Q^\onDB$ such that for each pair $\theta_1$ and $\theta_2$ in $\Q^\onDB$, $\theta_1 \succeq_\mathcal{F} \theta_2$ if and only
if ${\mathcal{F}}(\theta_1)\geq {\mathcal{F}}(\theta_2)$.

For a set $O\subseteq \vars(\Q)$, we also define $\succeq_{\mathcal{F},O}$ as the total order over the assignments in $\Q^\onDB[O]$ as follows.
For each pair $\theta_1$ and $\theta_2$ of assignments in $\Q^\onDB[O]$, $\theta_1 \succeq_{\mathcal{F},O} \theta_2$ if and only if there is an
assignment $\theta_1'\in \Q^\onDB$ with $\theta_1= \theta_1'[O]$ such that $\theta_1' \succeq_\mathcal{F} \theta'_2$ holds, for each assignment
$\theta'_2\in \Q^\onDB$ such that $\theta_2=\theta_2'[O]$.
Note that $\succeq_{\mathcal{F},O}$ reflects a descending order over real numbers. To have an ascending order, we can consider operators
distributing over $\min$ (rather than over $\max$), and define the order $\succeq_{\mathcal{F},O}^{asc}$ such that $\theta_1
\succeq_{\mathcal{F},O}^{asc} \theta_2$ if and only if $\theta_2 \succeq_{\mathcal{F},O} \theta_1$. Our results are presented by focusing,
w.l.o.g., on $\succeq_{\mathcal{F},O}$ only.

Two problems that naturally arise with constraint optimization formulas are stated next. The two problems receive as input an optimization
formula $\Q_\mathcal{F}$, a set $O\subseteq \vars(\Q)$ of distinguished variables, and a constraint database $\DB$:

\begin{description}
\item[\underline{\anymaxref}{($\PHI_\mathcal{F},O,\mathrm{DB}$)}:] Compute an assignment $\theta\in \PHI^\onDB[O]$ such that there is no
    assignment $\theta'\in \PHI^\onDB[O]$ with $\theta' \succ_{\mathcal{F},O} \theta$;\footnote{As usual, the fact that $\theta_1\succeq
    \theta_2$ and $\theta_2\not\succeq \theta_1$ hold is denoted by $\theta_1 \succ \theta_2$.} Answer \texttt{NO SOLUTION}, if
    $\PHI^\onDB=\emptyset$.

\item[\underline{\topk}{($\PHI_\mathcal{F},O,\mathrm{DB}$)}:] Compute a list $(\theta_1,...,\theta_{K'})$ of distinct assignments from
    $\PHI^\onDB[O]$, where $K'=\min \{K,|\PHI^\onDB[O]|\}$, and where for each $j\in\{1,...,K'\}$, there is no assignment $\theta'\in
    \PHI^\onDB[O]\setminus \bigcup_{i=1}^{j-1}\{\theta_i\}$ with $\theta'\succ_{\mathcal{F},O} \theta_j$. Note that the parameter $K$ is an
    additional input of the problem \topk\ and, as usual, is assumed to be given in binary notation. This means, in particular, that the
    answers to this problem can be exponentially many when compared to the input size.
\end{description}

\subsubsection{Structured Valuation Functions}

Our results are actually given in the more general setting of the \emph{structured} valuation functions, where different binary operators can
be used in the same constraint formula, and the order according to which the basic weight functions have to be processed is syntactically
guided by the use of parentheses (in this case the evaluation order of operators does matter, in general).

Formally, a \emph{structured} valuation function $\mathcal{F}$ is either a weight function or an expression of the form
$(\mathcal{F}_\ell\oplus_b \mathcal{F}_r)$, where $\mathcal{F}_\ell$ and $\mathcal{F}_r$ are in turn structured valuation functions, and
$\oplus_b$ is a binary operator with the same properties as the operator in Definition~\ref{def:valuationfunction}.

Clearly enough, any structured valuation function built with one binary operator can be transparently viewed as a standard valuation function
by just omitting the parenthesis. On the other hand, given a valuation function $\mathcal{F}$, we can easily built the set
$\mathit{svf}(\mathcal{F})$ of all possible equivalent structured valuation functions, by just considering all possible legal ways of adding
parenthesis to $\mathcal{F}$.

Working on the elements of $\mathit{svf}(\mathcal{F})$ appears to be easier in the algorithms we shall illustrate in the following sections
and, accordingly, our presentation will be focused on structured valuation functions. We will discuss in Section~\ref{sec:aggregation} how to
move from structured valuation functions to equivalent standard valuation functions.

\smallskip

\begin{figure*}[t]
 \centering
    \includegraphics[width=0.99\textwidth]{./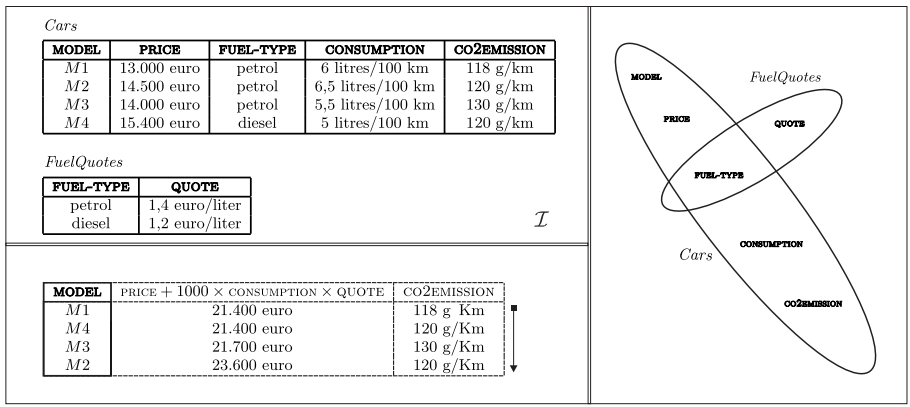}\vspace{-3mm}
    \caption{Illustration of Example~\ref{ex:intro}.}\label{fig:intro}\vspace{-3mm}
\end{figure*}

\begin{example}\label{ex:intro}\em
Consider a simple configuration scenario defined in terms of the CSP instance $(\PHI,\DB)$, where
$$
\begin{array}{lll}
\PHI & = & \mbox{\textit{Cars}\sc (model,price,fuel-type,consumption,co2emission)}\ \wedge\\
     &   & \mbox{\textit{FuelQuotes}\textsc{(fuel-type,quote)},}
\end{array}
$$
\noindent and where the atoms in $\DB$ are those shown in Figure~\ref{fig:intro} using an intuitive graphical notation.
Note that the instance is trivially satisfiable.

In fact, we are usually not interested in finding just \emph{any} solution in this setting, but would rather like to single out one that
matches as much as possible our preferences over the possible configurations. For instance, we might be interested in computing (the solution
corresponding to) a car minimizing the sum of its price plus the cost that is expected to be paid, for the given quotation of the fuel, to
cover 100.000 kilometers. Moreover, for cars that are equally ranked w.r.t.~this first criterion, we might want to give preference to cars
minimizing the emission of $CO_2$. For a sufficiently large constant $B$ (which can be treated in a symbolic way), this requirement can be
modeled via the function $\mathcal{F}=((B\times \mathcal{F}_1)+\mathcal{F}_2)$ such that
$$
\begin{array}{l}
\mathcal{F}_1  = ( f_{(\mbox{\small \sc price})} + (1000\times(f_{(\mbox{\small \sc consumption})}\times f_{(\mbox{\small \sc quote})})) \\
\mathcal{F}_2  = f_{(\mbox{\small \sc co2emission})},
\end{array}
$$

\noindent where $f_{(X)}(\theta)=\theta[X]$ is the identity weight function on each variable $X$ and where any real number is viewed as a
constant weight function. \hfill$\lhd$
\end{example}

\subsection{Structured Valuation Functions and Embeddings}

It is easily seen that structured valuation functions introduce further dependencies among the variables, which are not reflected in the basic
hypergraph-based representation of the underlying CSP instances.
Therefore, when looking at islands of tractability for constraint optimization formulas, this observation motivates the definition of a novel
form of structural representation where the interplay between functions and constraint scopes is made explicit.
In order to formalize this structural representation, we introduce the concept of \emph{parse tree} of a structured valuation function.

\begin{defn}\label{def:treeFO} \em
Let $\mathcal{F}$ be a structured valuation function. Then, the \emph{parse tree} of $\mathcal{F}$ is a labeled rooted tree
$PT(\mathcal{F})=(V,E,\tau)$, where $\tau$ maps vertices either to variables or to binary operators, defined inductively as follows:
\begin{itemize}
  \item If $\mathcal{F}$ is a weight function $f$, then $PT(\mathcal{F})=(\{f\},\emptyset,\tau_f)$ where $\tau_f(f)=\vars(f)$. That is,
      $PT(\mathcal{F})$ has no edges and a unique (root) node $f$, labeled by the variables occurring in $f$.

  \item Assume that $\mathcal{F}=(\mathcal{F}_\ell \oplus \mathcal{F}_r)$ with $PT(\mathcal{F}_\ell)=(V_\ell,E_\ell,\tau_\ell)$ and
      $PT(\mathcal{F}_r)=(V_r,E_r,\tau_r)$, and with $p_\ell$ and $p_r$ being the root nodes of  $PT(\mathcal{F}_\ell)$ and
      $PT(\mathcal{F}_r)$, respectively. Then, $PT(\mathcal{F})=(V_\ell \cup V_r\cup \{p\},E_\ell \cup E_r \cup
      \{\{p,p_\ell\},\{p,p_r\}\},\tau_p)$ is rooted at a fresh node $p$, with the labeling function $\tau_p$ such that $\tau_p(p)=\oplus$
      and its restrictions over $V_\ell$ and $V_r$ coincide with $\tau_\ell$ and $\tau_r$, respectively.
\end{itemize}

Let $O$ be a set of variables, and let $p$ be the root of $PT(\mathcal{F})$. Then, the \emph{output-aware} parse tree of $\mathcal{F}$
w.r.t.~$O$ is the labeled tree ${\it tree}(\mathcal{F},O) = (V\cup \{O\},E\cup \{ \{O,p\} \},\tau_O)$ rooted at a fresh node $O$, and where
$\tau_O$ is such that $\tau_O(O)=O$ and its restriction over $V$ coincides with $\tau$.\hfill $\Box$
\end{defn}

Now, we define the concept of embedding as a way to characterize how the parse tree of a structured function interacts with the constraints of
an acyclic constraint formula.

\begin{defn}\label{def:embedding}\em
Let $\mathcal{F}$ be a structured valuation function for a constraint formula $\PHI$, let $\HG_a$ be an acyclic hypergraph with
$\vars(\PHI)\subseteq \nodes(\HG_a)$, let $O\subseteq \vars(\PHI)$ be a set of variables, and let ${\it tree}(\mathcal{F},O) =
(V_O,E_O,\tau_O)$ be the associated output-aware parse tree.

We say that the pair $(\mathcal{F},O)$ can be \emph{embedded in $\HG_a$} if there is a join tree $\JT=(V,E,\chi)$ of $\HG_{a}$ and an injective
mapping $\xi : V_O \mapsto V$, such that every vertex $p\in V_O$ is associated with a vertex $\xi(p)$ of $\JT$, called \emph{$p$-separator},
which satisfies the following conditions: \vspace{-2mm}
\begin{itemize}
\item[(1)] $\tau_O(p)\cap\nodes(\HG_a) \subseteq \chi(\xi(p))$, i.e., the variables occurring in $p$ occur in the labeling of $\xi(p)$,
    too; and

\item[(2)] there is no pair $q$, $q'$ of vertices adjacent to $p$ in ${\it tree}(\mathcal{F},O)$ such that their images $\xi(q)$ and
    $\xi(q')$ are not separated by $\xi(p)$, i.e., such that they occur together in some connected component of the forest $(V \setminus
    \xi(p), \{ e\in E \mid e\cap \xi(p)=\emptyset\})$.
\end{itemize}

\noindent The mapping $\xi$ is called an \emph{embedding of $(\mathcal{F},O)$ in $\HG_{a}$}, and $\JT$ is its witness.\hfill $\Box$
\end{defn}

Intuitively, condition (1) states that any leaf node $p$ of the parse tree (i.e., with $\tau_O(p)\cap\nodes(\HG_a)\neq \emptyset$) is mapped
into a $p$-separator whose $\chi$-labeling covers the variables involved in the domain of the underlying weight function. Moreover, it requires
that the root node is mapped into a node whose $\chi$-labeling covers the output variables in $O$. On the other hand, condition (2) guarantees
that the structure of the parse tree is ``preserved'' by the embedding. This is explained by the example below.

\begin{figure}[t]
 \centering
    \includegraphics[width=0.8\textwidth]{./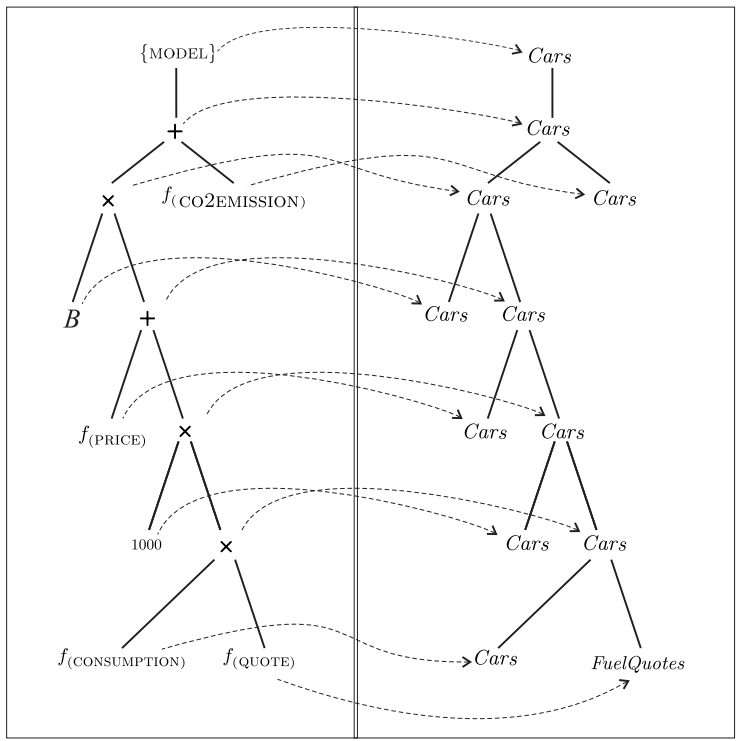}\vspace{-2mm}
    \caption{The join tree (right) and the parse tree (left) in the setting of Example~\ref{ex:introPT}.}\label{fig:introHG}
\end{figure}

\begin{example}\label{ex:introPT}\em Recall the setting of Example~\ref{ex:intro} and the output-aware parse tree
shown on the left of Figure~\ref{fig:introHG}. Observe that $\HG_\PHI$ is acyclic, as it is witnessed by the join tree depicted on the right.
Moreover, note that the figure actually shows that there is an embedding of $(\mathcal{F},\{\mbox{\sc  \small MODEL}\})$ in $\HG_{\PHI}$ that
maps each node to (the hyperedge containing the variables of) the atom $\it Cars$, except for the leaf $f_{(\mbox{\small \sc quote})}$ which is
mapped to $\it FuelQuotes$.
Note also that the root is mapped to $\it Cars$, which indeed covers the output variable $\mbox{\sc  \small MODEL}$, and that mapping constant
functions is immaterial. \hfill $\lhd$
\end{example}

\subsection{Properties of Embeddings for Structured Valuation Functions}

We shall now analyze some relevant properties of embeddings, which are useful for providing further intuitions on this notion and will be used
in our subsequent explanations.

Let $\mathcal{F}$ be a structured valuation function for a constraint formula $\PHI$, let $\HG_a$ be an acyclic hypergraph with
$\vars(\PHI)\subseteq \nodes(\HG_a)$, and let $O\subseteq \vars(\PHI)$ be a set of variables.
Let $\xi : V_O \mapsto V$ be any injective mapping and denote by $V_\xi\subseteq V$  its image.
% Note that the inverse of $\xi$ is well-defined over $V_\xi$.
Thus, for each vertex $v\in V_\xi$, its inverse $\xi^{-1}(v)$ is the vertex in the parse tree whose image under $\xi$ is precisely $v$. In the
following, let us view $\JT$ as a tree rooted at the vertex $\xi(O)$, where $O$ is the root of ${\it tree}(\mathcal{F},O)$. Moreover, in any
rooted tree, we say that a vertex $v$ is a descendant of $v'$, if either $v$ is a child of $v'$, or $v$ is a descendant of some child of $v'$.

\begin{theorem}\label{thm:property}
Assume that $\xi$ is an embedding of $(\mathcal{F},O)$ in $\HG_{a}$, with $\JT=(V,E,\chi)$ being its witness. Let $v$ and $v'$ be two distinct
vertices in $V_\xi$. Then, $v$ is a descendant of $v'$ in $\JT$ if and only if $\xi^{-1}(v)$ is a descendant of $\xi^{-1}(v')$ in ${\it
tree}(\mathcal{F},O)$.
\end{theorem}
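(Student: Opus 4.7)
The plan is to reduce the theorem to a single traversal lemma that makes the correspondence between the two trees explicit. Call this helper lemma~(H): for every pair $p,q \in V_O$ whose unique parse-tree path is $p = p_0, p_1, \ldots, p_k = q$, the unique $\JT$-path from $\xi(p)$ to $\xi(q)$ visits the vertices $\xi(p_0), \xi(p_1), \ldots, \xi(p_k)$ in exactly this order. I would prove~(H) by induction on $k$, with the cases $k \le 1$ being immediate. In the inductive step I would invoke condition~(2) of Definition~\ref{def:embedding} at the internal vertex $p_{k-1}$: since both $p_{k-2}$ and $p_k$ are adjacent to $p_{k-1}$ in the parse tree, their images $\xi(p_{k-2})$ and $\xi(p_k)$ are separated by $\xi(p_{k-1})$ in $\JT$, which (since $\JT$ is a tree) means the unique $\JT$-path between them passes through $\xi(p_{k-1})$. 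Composing this with the inductive hypothesis on the sub-path $p_0, \ldots, p_{k-1}$ and invoking the uniqueness of paths in the tree $\JT$ yields the claimed ordered traversal.

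With~(H) in hand, the ($\Leftarrow$)-direction of the theorem is immediate. Writing $p = \xi^{-1}(v)$ and $p' = \xi^{-1}(v')$, assume that $p$ is a descendant of $p'$ in the parse tree rooted at $O$. Then $p'$ lies on the parse-tree path from $p$ to $O$, so specializing~(H) with $q = O$ tells us that $v' = \xi(p')$ lies on the $\JT$-path from $v = \xi(p)$ to the root $\xi(O)$ of $\JT$; hence $v'$ is an ancestor of $v$ in $\JT$, which is exactly the statement that $v$ is a descendant of $v'$ in $\JT$.

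For the ($\Rightarrow$)-direction I would argue contrapositively: assume $p$ is \emph{not} a descendant of $p'$ in the parse tree, and show $v$ is not a descendant of $v'$ in $\JT$. Let $c$ be the lowest common ancestor of $p$ and $p'$ in the parse tree; by hypothesis $c \ne p'$. If $c = p$, then $p'$ is a descendant of $p$, so the already-proven $(\Leftarrow)$ direction (with the roles of $v,v'$ swapped) gives that $v'$ is a descendant of $v$ in $\JT$, which combined with $v \ne v'$ forbids $v$ from being a descendant of $v'$. Otherwise $c$ is a proper parse-tree ancestor of both $p$ and $p'$; let $c_p, c_{p'}$ be the (necessarily distinct) children of $c$ that lie on the parse-tree paths to $p$ and $p'$, respectively. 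Condition~(2) applied at $c$ gives that $\xi(c_p)$ and $\xi(c_{p'})$ are separated by $\xi(c)$ in $\JT$. Applying~(H) to the parse-tree paths from $p$ to $c$ and from $p'$ to $c$, the $\JT$-path from $v$ to $\xi(c)$ passes through $\xi(c_p)$ and the $\JT$-path from $v'$ to $\xi(c)$ passes through $\xi(c_{p'})$; consequently, after removing $\xi(c)$ from $\JT$, the vertex $v$ lies in the same connected component as $\xi(c_p)$ while $v'$ lies in the component of $\xi(c_{p'})$. These two components are distinct, so $v$ and $v'$ sit in distinct subtrees of $\JT$ rooted at $\xi(c)$, which rules out either being an ancestor of the other in the rooted $\JT$.

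I expect the main subtlety to be the ($\Rightarrow$)-direction: (H) alone only pins down where the $\xi$-images of the parse-tree ancestors of $p$ land on the $\JT$-path from $v$ to the root; it does not, on its own, rule out that some $\xi(p')$ sitting on an \emph{unrelated} branch of the parse tree happens to land on that same $\JT$-path. It is precisely the separation clause of condition~(2), invoked at the LCA $c$ and combined with~(H), that precludes this coincidence. The edge cases where $c_p = p$ or $c_{p'} = p'$ (i.e.\ when $p$ or $p'$ is itself a child of $c$) are harmless, since~(H) trivially identifies $v$ or $v'$ itself as the separating vertex.
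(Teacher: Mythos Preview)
Your approach is correct and genuinely different from the paper's. The paper proceeds by structural induction on $\JT$ from the root downward, coupled with a helper claim that is narrower than your~(H): it only treats root-to-descendant paths $q_1,\dots,q_m$ in the parse tree, and it carries the extra hypothesis that the intermediate images $\xi(q_i)$ are already known to be descendants of $\xi(q_1)$ in $\JT$. This makes the paper's claim dependent on the outer induction and forces a somewhat involved case analysis in both directions. Your lemma~(H), by contrast, is a clean standalone statement about arbitrary parse-tree paths; once it is available, the $(\Leftarrow)$-direction is immediate and the $(\Rightarrow)$-direction reduces to a short LCA argument. This is more transparent and slightly more general.

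One small point to tighten in your $(\Rightarrow)$-direction, Case~2: from ``$v$ and $v'$ lie in distinct components of $\JT \setminus \xi(c)$'' you cannot yet conclude that they ``sit in distinct subtrees of $\JT$ rooted at $\xi(c)$''. One of the components of $\JT \setminus \xi(c)$ is the one containing the root $\xi(O)$, and a priori nothing you have said rules out that, say, $v'$ lies there (in which case $v'$ would be an ancestor of $\xi(c)$, and the separation by $\xi(c)$ would be perfectly compatible with $v$ being a descendant of $v'$). You need one more invocation of~(H), this time on the parse-tree path from $p$ (respectively $p'$) all the way up to $O$; since $c$ lies on that path, (H) places $\xi(c)$ on the $\JT$-path from $v$ (respectively $v'$) to $\xi(O)$, which says precisely that both $v$ and $v'$ are proper descendants of $\xi(c)$ in the rooted $\JT$. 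With that extra line, your conclusion that neither is an ancestor of the other follows as stated.
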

\begin{proof}
We prove the property by structural induction, from the root to the leaves of $\JT$. In the base case, $v'$ is the root of $\JT$ and thus
$\xi^{-1}(v')$ is the root of ${\it tree}(\mathcal{F},O)$. In this case, the result is trivially seen to hold.
Now assume that the property holds on any vertex $v''\in V_\xi$ in the path connecting the root and a vertex $v'$. That is, $v$ is a descendant
of $v''$ in $\JT$ if and only if $\xi^{-1}(v)$ is a descendant of $\xi^{-1}(v'')$ in ${\it tree}(\mathcal{F},O)$. We show that the property
holds on $v'$, too. To this end, we first claim the following.

\begin{claim}\label{claim:path}
Let $q_1,...,q_m$ be a path in ${\it tree}(\mathcal{F},O)$ such that: {(i)} $q_i$ is a child of $q_{i-1}$, for each $i\in\{2,...,m\}$; and
{(ii)} $\xi(q_i)$ is a descendant in $\JT$ of $\xi(q_1)$, for each $i\in\{2,...,m-1\}$. Then, $\xi(q_i)$ is a descendant of $\xi(q_{i-1})$, for
each $i\in\{2,...,m\}$.
\end{claim}
\begin{itemize}
  \item[] \emph{Proof.} We prove the property by induction. Consider first the case where $i=2$. The fact that $\xi(q_2)$ is a descendant
      of $\xi(q_1)$ is immediate by \emph{(ii)}. Then, assume that the property holds up to an index $i$, with $i\in\{2,...,m-1\}$. We show
      that it holds on $i+1$, too. Indeed, by inductive hypothesis, we know that $\xi(q_i)$ is a descendant of $\xi(q_{i-1})$, which is in
      turn a descendant of $\xi(q_{1})$ by \emph{(ii)}. Consider the vertex $q_i$, and recall that since $\xi$ is an embedding, $\xi(q_i)$
      disconnects $\xi(q_{i+1})$ from $\xi(q_{i-1})$. This means that $\xi(q_{i+1})$ is a descendant of $\xi(q_i)$. \hfill $\diamond$
\end{itemize}

\noindent We now resume the main proof.

\begin{description}
  \item[(only-if)] Assume that $v$ is a descendant of $v'$. Hence, $v$ is a descendant of some vertex $v''\in V_\xi$ and we can apply the
      inductive hypothesis to derive that $\xi^{-1}(v)$ and $\xi^{-1}(v')$ are both descendant of $\xi^{-1}(v'')$ in ${\it
      tree}(\mathcal{F},O)$.
      Assume, for the sake of contradiction, that $\xi^{-1}(v)$ is not a descendant of $\xi^{-1}(v')$. We distinguish two cases.

      In the first case $\xi^{-1}(v')$ is a descendant of $\xi^{-1}(v)$. This means that there is a path
      $\xi^{-1}(v'')=q_1,...,\xi^{-1}(v),...,q_m=\xi^{-1}(v')$. Note that on this path we can apply the inductive hypothesis in order to
      conclude that $\xi(q_i)$ is a descendant of $\xi(q_1)$, for each $i\in\{2,...,m\}$. Therefore, we are in the position to apply
      Claim~\ref{claim:path}, and we conclude that $\xi(q_i)$ is a descendant of $\xi(q_{i-1})$, for each $i\in\{2,...,m\}$. In particular,
      by transitivity, we get that $\xi(q_m)$ is a descendant of $\xi(\xi^{-1}(v))$. That is, $v'$ is a descendant of $v$. Contradiction.

      The only remaining possibility is that there are two distinct vertices $q$ and $q'$ that are children of a vertex $\bar q$, which is
      a descendant of $\xi^{-1}(v'')$ or is precisely $\xi^{-1}(v'')$, and such that $\xi^{-1}(v)$ (resp., $\xi^{-1}(v')$) is a descendant
      of $q$ (resp., $q'$) or coincides with $q$ (resp., $q'$) itself. Consider then the paths $\xi^{-1}(v''),...,\bar q,q,...,v$ and
      $\xi^{-1}(v''),...,\bar q,q',...,v'$. Similarly to the case discussed above, note that on each of them we can apply
      Claim~\ref{claim:path}. Thus, we get that $\xi(q)$ and $\xi(q')$ are descendant of $\xi(\bar q)$. Moreover, either $v$ is a
      descendant of $\xi(q)$, or $\xi(q)$ coincides with $v$. Similarly,  either $v'$ is a descendant of $\xi(q')$, or $\xi(q')$ coincides
      with $v'$. Finally, recall that $\xi$ is an embedding, and hence $\xi(\bar q)$ must disconnect $\xi(q)$ and $\xi(q')$. Therefore, it
      disconnects $v$ and $v'$, too. Contradiction with the fact that $v$ is a descendant of $v'$.

  \item[(if)] Assume that $\xi^{-1}(v)$ is a descendant of $\xi^{-1}(v')$ in ${\it tree}(\mathcal{F},O)$. Assume, for the sake of
      contradiction, that $v$ is not a descendant of $v'$. Because of the only-if part, we are guaranteed that $v'$ is in any case not a
      descendant of $v$. Therefore, there is a vertex  $\bar v$ disconnecting $v$ and $v'$ and such that $v$ and $v'$ are both descendant
      of $\bar v$. We can now apply the inductive hypothesis on the vertex $v''\in V_\xi$ in the path connecting $\bar v$ and the root, and
      that is the closest to $\bar v$ (possibly coinciding with it). Therefore, we know that $\xi^{-1}(v)$ and $\xi^{-1}(v')$ are both
      descendant of $\xi^{-1}(v'')$. Hence, $\xi^{-1}(v')$ occurs in the path connecting $\xi^{-1}(v'')$ and $\xi^{-1}(v)$. Consider then
      the path $\xi^{-1}(v'')=q_1,...,\xi^{-1}(v'),...,q_m=\xi^{-1}(v)$. Because of the inductive hypothesis, we know that $\xi(q_i)$ is a
      descendant of $\xi(q_1)$, for each $i\in\{2,...,m\}$. Therefore, we are in the position of applying Claim~\ref{claim:path} and, by
      transitivity, we derive that $\xi(q_m)$ is a descendant of $\xi(\xi^{-1}(v'))$. That is, $v$ is a descendant of $v'$. Contradiction.
\end{description}

\vspace{-10mm}\ \\
\end{proof}

In words, the above result tells us that embeddings preserve the descendant relationship. In fact, preserving this relationship suffices for an
embedding to exist.

\begin{theorem}\label{thm:injective}
Let $\xi : V_O \mapsto V$ be an injective function satisfying condition (1) in Definition~\ref{def:embedding} for ${\it tree}(\mathcal{F},O) =
(V_O,E_O,\tau_O)$ and for a join tree $\JT=(V,E,\chi)$ of $\HG_{a}$. Assume that for each pair $v,v'$ of distinct vertices in $V_\xi$, $v$ is a
descendant of $v'$ in $\JT$ if and only if $\xi^{-1}(v)$ is a descendant of $\xi^{-1}(v')$ in ${\it tree}(\mathcal{F},O)$. Then, there is an
embedding $\xi'$ of $(\mathcal{F},O)$ in $\HG_{a}$ (with witness $\JT$), which can be built in polynomial time from $\xi$.
\end{theorem}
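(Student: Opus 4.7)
The plan is to construct $\xi'$ by keeping $\xi$ unchanged at the leaves of the parse tree and at the root~$O$ of ${\it tree}(\mathcal{F},O)$, and by reassigning every internal non-root vertex to a strategic vertex of $\JT$. First I would root $\JT$ at $\xi(O)$ and, processing the internal non-root vertices of ${\it tree}(\mathcal{F},O)$ bottom-up, I would set $\xi'(p) := \mathrm{LCA}_{\JT}(\xi'(q_1),\xi'(q_2))$, where $q_1,q_2$ are the two children of $p$ (recall that the operators of Definition~\ref{def:valuationfunction} are binary, so internal non-root vertices of ${\it tree}(\mathcal{F},O)$ have exactly two children). Since a least-common-ancestor query is solvable in polynomial time, this produces $\xi'$ in polynomial time from $\xi$. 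A routine induction using associativity of LCA yields the basic invariant: for every non-root vertex $p$, $\xi'(p) = \mathrm{LCA}_{\JT}(\{\xi(\ell)\mid \ell\in L(p)\})$, where $L(p)$ denotes the set of leaves of the parse tree descending from~$p$.

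Next, a parallel bottom-up induction exploiting the descendant-preservation hypothesis shows that $\xi'(p)$ is either $\xi(p)$ or a descendant of $\xi(p)$ in $\JT$, for every non-root $p$: each $\xi(\ell)$ with $\ell\in L(p)$ is a descendant of $\xi(p)$, so $\xi(p)$ is itself a common ancestor of the set whose LCA defines $\xi'(p)$. With this in hand, condition~(1) of Definition~\ref{def:embedding} is immediate: it is inherited from $\xi$ at the leaves and at $O$, and at every other vertex $p$ the label $\tau_O(p)$ is a binary operator, so $\tau_O(p)\cap\nodes(\HG_a)=\emptyset$.

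For condition~(2) I would argue vertex by vertex. Leaves and $O$ have at most one neighbour in ${\it tree}(\mathcal{F},O)$, so the condition is vacuous there. At an internal non-root vertex $p$ with children $q_1,q_2$ and parent $\mathrm{par}(p)$, observe first that the siblings $q_1,q_2$ are incomparable in the parse tree, so by descendant-preservation $\xi(q_1)$ and $\xi(q_2)$ are incomparable in $\JT$ and their subtrees are disjoint; the previous step places $\xi'(q_i)$ in the subtree of $\xi(q_i)$, so $\xi'(q_1)$ and $\xi'(q_2)$ are also incomparable in $\JT$, and the standard LCA property then lands them in subtrees rooted at two distinct children of $\xi'(p)$. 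Furthermore, $\xi'(\mathrm{par}(p))$ is a strict ancestor of $\xi'(p)$ (either because $\mathrm{par}(p)=O$ and $\xi'(O)=\xi(O)$ is the root of $\JT$, or because $\xi'(\mathrm{par}(p))$ is itself the LCA of $\xi'(p)$ and an incomparable vertex), so it lies in the upward component of $\JT\setminus\{\xi'(p)\}$. All three adjacent pairs at $p$ are therefore separated by $\xi'(p)$.

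The main obstacle, and the place where the full strength of the hypothesis is used, is injectivity. Suppose $\xi'(p)=\xi'(p')=v$ with $p\neq p'$. If $p=O$ then $v=\xi(O)$ is the root of $\JT$, while the previous step gives that $\xi(p')$ is an ancestor of $v$, forcing $\xi(p')=\xi(O)$ and contradicting injectivity of $\xi$. Otherwise $\xi(p)$ and $\xi(p')$ are both ancestors of $v$ in $\JT$, hence comparable; WLOG $\xi(p)$ is a proper descendant of $\xi(p')$, and by descendant-preservation $p$ is then a proper descendant of $p'$ in the parse tree. Since internal parse-tree vertices are binary and every parse-tree subtree contains a leaf, $L(p)\subsetneq L(p')$; picking any $\ell\in L(p')\setminus L(p)$, the basic invariant makes $\xi(\ell)$ a descendant of $v$, and since $v$ is a descendant of $\xi(p)$, also a descendant of (or equal to) $\xi(p)$. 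Descendant-preservation and injectivity of $\xi$ then force $\ell$ to be $p$ or a proper descendant of $p$ in the parse tree, and since $\ell$ is a leaf this gives $\ell\in L(p)$, contradicting the choice of $\ell$.
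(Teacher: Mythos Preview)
Your proof is correct and follows essentially the same route as the paper. Both constructions keep $\xi$ on the leaves and the root and replace each internal non-root vertex $p$ by the unique vertex of $\JT$ that has $\xi(q_1)$ and $\xi(q_2)$ as descendants lying in different components after its removal; that vertex is precisely $\mathrm{LCA}_{\JT}(\xi(q_1),\xi(q_2))$, which (since $\xi'(q_i)$ sits in the subtree of $\xi(q_i)$ and those subtrees are disjoint) coincides with your $\mathrm{LCA}_{\JT}(\xi'(q_1),\xi'(q_2))$. So your recursive, leaf-based description and the paper's ``separating vertex'' description produce the very same map $\xi'$.

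Two presentational differences are worth noting. First, by naming the construction as an LCA you make polynomial-time computability immediate, whereas the paper leaves the reader to extract that from the existence statement for $\bar v$. Second, you explicitly verify injectivity of $\xi'$, which the definition of embedding requires but the paper's proof does not address; your argument via the leaf-set invariant $\xi'(p)=\mathrm{LCA}_{\JT}(\{\xi(\ell):\ell\in L(p)\})$ is clean and closes that gap. The verification of condition~(2) is organized differently (you argue via LCA properties, the paper tracks positions along paths between the original $\xi$-images), but the content is the same.
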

\begin{proof}
Based on $\xi : V_O \mapsto V$, we build a function $\xi': V_O \mapsto V$ as follows. Let $x$ be any node in $V_O$. If $x$ is a leaf node in
${\it tree}(\mathcal{F},O)$ or it is the root, then we set $\xi'(x)=\xi(x)$. Otherwise, i.e., if $x$ is an internal node with children $y$ and
$z$, then we first observe that, by hypothesis, $\xi(y)$ and $\xi(z)$ are both descendant of $\xi(x)$. Moreover, $\xi(y)$ (resp., $\xi(z)$) is
not a descendant of $\xi(z)$ (resp., $\xi(y)$), and therefore there is a vertex $\bar v$ in $V$ possibly coinciding with $\xi(x)$ such that
$\xi(y)$ and $\xi(z)$ occur in different components of $(V \setminus \bar v, \{ e\in E \mid e\cap \bar v=\emptyset\})$ as descendants of $\bar
v$---in particular, whenever $\bar v\neq \xi(x)$, we have that $\bar v$ is a descendant of $\xi(x)$. For the node $x$, we now define
$\xi'(x)=\bar v$.

Note that $\xi'$ trivially satisfies condition (1) in Definition~\ref{def:embedding}, as $\xi'$ differs from $\xi$ only over non-leaf nodes
different from the root. We claim that $\xi'$ satisfies condition (2), too.

Recall that, by Definition~\ref{def:treeFO}, the output-aware parse tree is binary, and its root is the only vertex having one child. Consider
next any vertex $p\in V_O$ with parent $s$ and children $q$ and $q'$. Indeed, if $p$ is the root or a leaf, then condition (2) in
Definition~\ref{def:embedding} trivially holds.
By the above construction (setting $x=p$), we know that $\xi(q)$ and $\xi(q')$ occur in different components of $(V \setminus \xi'(p), \{ e\in
E \mid e\cap \xi'(p)=\emptyset\})$ as descendants of $\xi'(p)$. Moreover, whenever $\xi'(q)\neq \xi(q)$, we are guaranteed that $\xi'(q)$ is a
descendant of $\xi(q)$ (again by the above construction, this time setting $x=q$). Similarly, either $\xi'(q')=\xi(q')$, or $\xi'(q')$ is a
descendant of $\xi(q')$. Hence, $\xi'(q)$ and $\xi'(q')$ occur in different components of $(V \setminus \xi'(p), \{ e\in E \mid e\cap
\xi'(p)=\emptyset\})$ as descendants of $\xi'(p)$.

Consider now the parent $s$ and the child $q$---the same line of reasoning applies to the child $q'$. By hypothesis, we know that $\xi(p)$
occurs in the path connecting $\xi(s)$ and $\xi(q)$, with $\xi(q)$ being a descendant of $\xi(s)$. Moreover, by construction of $\xi'$ (over
$s$, $p$, and $q$), we have that: either $\xi'(s)$ occurs in the path connecting $\xi(s)$ and $\xi(p)$, or $\xi'(s)=\xi(s)$; either $\xi'(p)$
occurs in the path connecting $\xi(p)$ and $\xi(q)$, or $\xi'(p)=\xi(p)$; and $\xi'(q)$ is either a descendant of $\xi(q)$, or coincides with
$\xi(q)$. Therefore, $\xi'(p)$ occurs in the path connecting $\xi'(s)$ and $\xi'(q)$. That is, $\xi'(s)$ and $\xi'(q)$ occur in different
components of $(V \setminus \xi'(p), \{ e\in E \mid e\cap \xi'(p)=\emptyset\})$.

By putting all together, we have shown that condition (2) in Definition~\ref{def:embedding} holds on any vertex $p\in V_O$.
\end{proof}

\section{Structural Tractability in the Tree Projection Setting}\label{sec:main}

The concept of embedding has been introduced in Section~\ref{sec:framework} as a way to analyze the interactions of valuation functions with
acyclic instances. However, the concept can be easily coupled with the tree projections framework in order to be applied to instances that are
not precisely acyclic. This coupling is formalized below.

\begin{defn}\em
Let $\mathcal{F}$ be a structured valuation function for a constraint formula $\PHI$, let $O\subseteq \vars(\PHI)$ be a set of variables, and
let $\V$ be a view set for $\PHI$.
We say that $(\mathcal{F},O)$ can be \emph{embedded in $(\Q,\V)$} if there is a sandwich formula $\Q_a$ of $\Q$ w.r.t.~$\V$ such that
$(\mathcal{F},O)$ can be embedded in the acyclic hypergraph $\HG_{\Q_a}$. If $O=\emptyset$, then we just say that $\mathcal{F}$ can be embedded
in $(\Q,\V)$.\hfill $\Box$
\end{defn}

\begin{example}\em
Recall the setting of Example~\ref{ex:treeDecomposition} and the valuation function $\mathcal{F}=f_a$, where $f_a(\theta)=\theta(A)$, for each
solution $\theta$. It is immediate to check that $\mathcal{F}$ can be embedded in $(\PHI,\V_2)$. Indeed, consider the sandwich formula $\PHI_a$
depicted in Figure~\ref{fig:Example-TW}, and note that $(\mathcal{F},\emptyset)$ can be embedded in $\HG_{\PHI_a}$. In fact, as $\mathcal{F}$
consists of a weight function only, this is witnessed by any join tree $\JT$ of $\HG_{\PHI_a}$ because we can always build an embedding that
maps $f_a$ to a vertex $q$ of $\JT$ such that $\chi(q)\supseteq \{A\}$.
Note that, for this kind of valuation functions, checking the existence of an embedding always reduces to checking the existence of a tree
projection.\hfill $\lhd$
\end{example}

Recall that deciding whether a pair of hypergraphs has a tree projection is an $\NP$-complete problem~\cite{GMS09}, so that the notion of
embedding can hardly be exploited in a constructive way when combined with tree projections. However, we show in this section that the
knowledge of an embedding (and of a tree projection) is not necessary to compute the desired answers. Indeed, a \emph{promise-free} algorithm
can be exhibited that is capable of returning a solution to \anymaxref\ (and \topk) or to check that the given instance is not embeddable.
The algorithm is in fact rather elaborated, and we start by illustrating some useful properties that can help the intuition.

Hereinafter, let $\Q$ be a constraint formula, $\DB$ a constraint database, $O$ a set of variables, and $\mathcal{F}$ a {structured} valuation
function, all of them being provided as input to our reasoning problems. Moreover, to deal with the setting of tree projections, we assume that
a view set $\V$ for $\Q$ plus a constraint database $\DB\mbox{\rm '}$ that is legal for $\V$ w.r.t.~$\Q$ and $\DB$ are provided.
Accordingly, to emphasize the role played by these structures, the problems of interest will be denoted as
{\anymaxref}{($\PHI_\mathcal{F},O,\mathrm{DB},\V,\DBW$)} and  {\topk}{($\PHI_\mathcal{F},O,\mathrm{DB},\V,\DBW$)}.

\subsection{Useful Properties of Embeddings}

For a constraint database $\DB$ and an atom $a$, the set $a^\onDB$ will be also denoted by $\rel(a,\DB)$. Substitutions in $\rel(a,\DB)$ will
be also viewed as the ground atoms in $\DB$ to which they are unambiguously associated.
If $q$ is an atom, $\Q^\onDB[q]$ denotes $\Q^\onDB[\vars(q)]$.

Without loss of generality, assume that, for each weight function $f_i\in\ranks(\mathcal{F})$, $\V$ contains a {\em function view} $w_{f_i}$
over the variables in $\vars(f_i)$. Indeed, if $w_{f_i}\not\in \V$, then we can just define $w_{f_i}$ as the projection over $\vars(f_i)$ of
any view $w\in \V$ such that $\vars(w)\supseteq \vars(f_i)$. In particular, if such a view $w$ does not exist, then we can immediately conclude
that $\mathcal{F}$ cannot be embedded in $(\Q,\V)$.

Let us define $\overline \DB$ as the constraint database obtained (in polynomial time) by \emph{enforcing pairwise consistency} on $\DBW$
w.r.t.~$\V$~\cite{BFMY83}. The method consists of repeatedly applying, till a fixpoint is reached, the following constraint propagation
procedure: Take any pair $w$ and $w'$ of views in $\V$, and delete from $\DB$ any (ground atom associated with an) assignment $\theta$ in
$\rel(w,\DB)$ for which no assignment $\theta'\in \rel(w',\DB)[\vars(w)\cap \vars(w')]$ exists with $\theta'\subseteq \theta$. In words, the
procedure removes, for each view $w$, all its associated assignments $\theta$ that cannot be extended to some assignment in each of the
remaining views.
In the database terminology, this is called a \emph{semijoin} operation over $w$ and $w'$.

The crucial property enjoyed by the database $\overline \DB$, which we shall intensively use in our elaborations, is recalled below.

\begin{proposition}[\cite{GS10}]
Assume there exists a tree projection $\HG_a$ of $\HG_\PHI$ with respect to $\HG_\V$. Then, {$w^{\overline \onDB}[h]=\Q^\onDB[h]$} holds,
for every $w\in \V$ and $h\subseteq \vars(w)$ such that there is a hyperedge $h_a$ of $\HG_a$ with $h\subseteq h_a$.
\end{proposition}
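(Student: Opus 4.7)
The equality splits into two inclusions, which I would prove separately. The nontrivial direction reduces the given instance to an acyclic one on $\HG_a$ and then invokes, as a black box, Yannakakis' theorem that on acyclic instances pairwise consistency implies global consistency.

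For the easy inclusion $\Q^\onDB[h]\subseteq w^{\overline{\onDB}}[h]$, I would observe that pairwise consistency never destroys the projection of a genuine solution. Indeed, by legality of $\DBW$, $\Q^\onDB[v]\subseteq v^{\onDBW}$ holds for every $v\in\V$; and given any $\theta\in\Q^\onDB$, the projections $\theta[v]$ and $\theta[v']$ trivially agree on $\vars(v)\cap\vars(v')$ for any two views, so no semijoin step can delete $\theta[v]$ from $v^{\onDBW}$. Hence $\Q^\onDB[w]\subseteq w^{\overline{\onDB}}$, and projecting further on $h\subseteq\vars(w)$ gives the inclusion.

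For the reverse inclusion $w^{\overline{\onDB}}[h]\subseteq \Q^\onDB[h]$, I would fix $\theta\in w^{\overline{\onDB}}[h]$ and build a solution of $(\Q,\DB)$ whose restriction to $h$ equals $\theta$. Since $\HG_a\leq\HG_\V$, for each hyperedge $h_a'\in\edges(\HG_a)$ I pick a view $w_{h_a'}\in\V$ with $h_a'\subseteq\vars(w_{h_a'})$ and set $r_{h_a'}:=w_{h_a'}^{\overline{\onDB}}[h_a']$. The first key step is to show that the acyclic instance on $\HG_a$ with relations $\{r_{h_a'}\}_{h_a'\in\edges(\HG_a)}$ is pairwise consistent: for any two hyperedges $h_a',h_a''$ of $\HG_a$, pairwise consistency of the full views $w_{h_a'}$ and $w_{h_a''}$ in $\overline{\DB}$ (which is precisely what enforcing pairwise consistency guarantees) descends, by projection, to pairwise consistency of $r_{h_a'}$ and $r_{h_a''}$ on the common nodes $h_a'\cap h_a''$. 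Because $h\subseteq h_a\subseteq\vars(w_{h_a})\cap\vars(w)$, the given $\theta$ lifts first to some $\theta_w\in w^{\overline{\onDB}}$ with $\theta_w[h]=\theta$, and then, by pairwise consistency between $w$ and $w_{h_a}$, extends to a tuple $\tilde\theta\in w_{h_a}^{\overline{\onDB}}$ with $\tilde\theta[h]=\theta$; its projection $\tilde\theta[h_a]$ lies in $r_{h_a}$.

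By Yannakakis' theorem the acyclic, pairwise-consistent instance on $\HG_a$ is globally consistent, so $\tilde\theta[h_a]$ extends to an assignment $\sigma:\nodes(\HG_a)\to\U$ with $\sigma[h_a']\in r_{h_a'}$ for every $h_a'\in\edges(\HG_a)$. To certify $\sigma\in\Q^\onDB$, I would use the base views: for each atom $q\in\atoms(\Q)$, the base view $w_q\in\V$ satisfies $\vars(w_q)=\vars(q)$ and, by legality, $w_q^{\onDBW}\subseteq q^\onDB$; since pairwise consistency only deletes tuples, also $w_q^{\overline{\onDB}}\subseteq q^\onDB$. Because $\HG_\Q\leq\HG_a$, some $h_a'\in\edges(\HG_a)$ contains $\vars(q)$; using pairwise consistency between $w_{h_a'}$ and $w_q$ on the common set $\vars(q)=\vars(w_q)\subseteq\vars(w_{h_a'})$ one gets $\sigma[\vars(q)]\in w_q^{\overline{\onDB}}\subseteq q^\onDB$. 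Hence $\sigma$ satisfies every atom of $\Q$, so $\sigma\in\Q^\onDB$ and $\sigma[h]=\theta\in\Q^\onDB[h]$.

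The main obstacle is the bookkeeping in the propagation step: tracking carefully how pairwise consistency of full views in $\V$ transfers, via projection onto the hyperedges of $\HG_a$, to pairwise consistency of the derived relations $\{r_{h_a'}\}$, and then how an assignment globally consistent on $\HG_a$ translates, via the base views, into a genuine solution of the original constraint formula $\Q$. The passage from pairwise to global consistency on the acyclic skeleton is the only nonelementary ingredient and is taken as a black box.
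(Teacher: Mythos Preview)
The paper does not prove this proposition; it is quoted from the cited reference~\cite{GS10} and used as a black box throughout. There is therefore no in-paper argument to compare against.

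That said, your proposal is correct and is essentially the standard argument one finds for this fact. The easy inclusion follows from legality of $\DBW$ together with the observation that semijoins never delete a tuple that is the projection of a full solution. For the hard inclusion, building the acyclic instance on $\HG_a$ by projecting the pairwise-consistent view relations, checking that pairwise consistency is inherited under projection (since $h_a'\cap h_a''\subseteq\vars(w_{h_a'})\cap\vars(w_{h_a''})$), and then invoking the Beeri--Fagin--Maier--Yannakakis theorem that pairwise consistency implies global consistency on acyclic schemes is exactly the intended route. The final step---reading off satisfaction of each atom of $\Q$ via its base view and the covering $\HG_\Q\leq\HG_a$---is precisely how the tree-projection framework is set up to transfer solutions back to the original formula. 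Your bookkeeping is sound; nothing is missing.
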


For any partial assignment $\theta: \mathcal{W} \mapsto \U$, where $\mathcal{W}\subseteq \vars(\Q)$, and for any constraint optimization
formula $\PHI_\mathcal{F}$, denote by $\max_{\mathcal{F}}(\theta)$ the maximum weight that any assignment $\theta'\in \Q^\onDB$  with
$\theta'[\mathcal{W}]=\theta$ can get according to $\mathcal{F}$, that is, $\max_{\mathcal{F}}(\theta)=\max (\{ \mathcal{F}(\theta') \mid
\theta'\in \Q^\onDB \mbox{ and } \theta'[\mathcal{W}]=\theta\}\cup\{\bot\})$, where $\bot$ denotes the minimum weight in the codomain of the
valuation function.

Let $p$ be any vertex of $\mathit{tree}(\mathcal{F},O)$ occurring in the parse tree $PT(\mathcal{F})$. Let $\mathcal{F}_p$ denote the
subexpression of $\mathcal{F}$ whose parse tree is the subtree rooted at $p$.
Note that if $\mathcal{F}_p=f_i$ holds for some weight function $f_i$, then $\forall \theta\in w_{f_i}^{\overline{\onDB}},
f_i(\theta)=\max_{\mathcal{F}_p}(\theta)$ holds by construction.
Assume now that $p$ is a (non-leaf) vertex labeled by $\oplus$, and let $\ell$ and $r$ be its children. Then, the maximum weight of
$\mathcal{F}_p=({\mathcal{F}_\ell\oplus\mathcal{F}_r})$ is bounded by the aggregation via $\oplus$ of the maximum weights that can be achieved
over its $\mathcal{F}_\ell$ and $\mathcal{F}_r$.

\begin{lemma}\label{lem:bound}$\max_{\mathcal{F}_\ell\oplus\mathcal{F}_r}(\theta)$$\leq$$\max_{\mathcal{F}_\ell}(\theta) \oplus
\max_{\mathcal{F}_r}(\theta)$ holds for each partial assignment $\theta$.
\end{lemma}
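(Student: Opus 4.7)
The plan is to unfold the definition of $\max_{\mathcal{F}_\ell\oplus\mathcal{F}_r}(\theta)$, bound each summand pointwise for every candidate extension $\theta'$ of $\theta$, and then exploit monotonicity of $\oplus$ to aggregate the two bounds.

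The key preliminary step is to extract monotonicity of $\oplus$ from the assumption (in Definition~\ref{def:valuationfunction}) that $\oplus$ distributes over $\max$. Suppose $a\geq b$ are two weights. Then $\max(a,b)=a$, and for any weight $c$ distributivity yields $c\oplus a = c\oplus\max(a,b)=\max(c\oplus a, c\oplus b)$, so $c\oplus a\geq c\oplus b$. By commutativity of $\oplus$, this gives monotonicity in both arguments simultaneously: if $a_1\geq a_2$ and $b_1\geq b_2$, then $a_1\oplus b_1\geq a_2\oplus b_2$.

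Next, I would unfold the left-hand side. By definition, $\max_{\mathcal{F}_\ell\oplus\mathcal{F}_r}(\theta)$ is $\max (S\cup\{\bot\})$, where $S=\{(\mathcal{F}_\ell\oplus\mathcal{F}_r)(\theta')\mid \theta'\in \Q^\onDB,\ \theta'[\mathcal{W}]=\theta\}$ and $\mathcal{W}$ is the domain of~$\theta$. For any such extension $\theta'$, by definition of $\mathcal{F}_\ell\oplus\mathcal{F}_r$, we have $(\mathcal{F}_\ell\oplus\mathcal{F}_r)(\theta')=\mathcal{F}_\ell(\theta')\oplus\mathcal{F}_r(\theta')$. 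Moreover, each individual value satisfies $\mathcal{F}_\ell(\theta')\leq \max_{\mathcal{F}_\ell}(\theta)$ and $\mathcal{F}_r(\theta')\leq \max_{\mathcal{F}_r}(\theta)$, since $\theta'$ is one of the candidate extensions entering the definitions of $\max_{\mathcal{F}_\ell}(\theta)$ and $\max_{\mathcal{F}_r}(\theta)$. Applying the monotonicity derived above, we conclude
\[
(\mathcal{F}_\ell\oplus\mathcal{F}_r)(\theta')\ \leq\ \max_{\mathcal{F}_\ell}(\theta)\oplus\max_{\mathcal{F}_r}(\theta).
\]
This bound holds uniformly for every $\theta'$ considered, hence also for $\max S$, and therefore for $\max_{\mathcal{F}_\ell\oplus\mathcal{F}_r}(\theta)$.

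Finally, I would cover the degenerate case in which no $\theta'\in\Q^\onDB$ extends $\theta$: here the left-hand side collapses to $\bot$, which is trivially dominated by the right-hand side since $\bot$ is the minimum of the codomain. The main obstacle is essentially a bookkeeping point rather than a mathematical one, namely making precise the fact that the inequality in Lemma~\ref{lem:bound} need not be an equality: two different extensions of $\theta$ may witness $\max_{\mathcal{F}_\ell}(\theta)$ and $\max_{\mathcal{F}_r}(\theta)$ separately, and no single extension need realize both maxima at once, which is exactly why we obtain an upper bound via $\oplus$ rather than an exact identity.
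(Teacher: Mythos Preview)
Your proposal is correct and follows essentially the same approach as the paper: pick an extension realizing the left-hand side, bound each factor by the corresponding $\max_{\mathcal{F}_\ell}(\theta)$ and $\max_{\mathcal{F}_r}(\theta)$, and combine via monotonicity of $\oplus$. Your version is simply more explicit than the paper's, which hides the monotonicity argument under ``the properties of $\oplus$'' and does not spell out the degenerate case.
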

\begin{proof}
Let $\bar \theta$ be an assignment such that $\max_{\mathcal{F}_\ell\oplus\mathcal{F}_r}(\theta)=\mathcal{F}_\ell(\bar
\theta)\oplus\mathcal{F}_r(\bar \theta)$. Then, the result follows by the properties of $\oplus$ and since $\mathcal{F}_\ell(\bar \theta)\leq
\max_{\mathcal{F}_\ell}(\theta)$ and $\mathcal{F}_r(\bar \theta)\leq \max_{\mathcal{F}_r}(\theta)$.
\end{proof}

Assume now that there exists a tree projection $\HG_a$ of $\HG_\PHI$ with respect to $\HG_\V$, and that the pair $(\mathcal{F},O)$ can be
\emph{embedded in $\HG_a$}. Let  $\JT=(V,E,\chi)$ be the join tree of $\HG_{a}$ and $\xi : V_O \mapsto V$ be the injective mapping of
Definition~\ref{def:embedding}. Then, we show that the inequality in Lemma~\ref{lem:bound} is tight on separators, thus the operation of
choosing the best sets of partial assignments computed in the subtrees distributes over $\oplus$.

\begin{lemma}\label{lem:distribution} $\max_{\mathcal{F}_\ell\oplus\mathcal{F}_r}(\theta) = \max_{\mathcal{F}_\ell}(\theta) \oplus
\max_{\mathcal{F}_r}(\theta)$ holds for each {\em $\theta\in w^{\overline \onDB}[q_a]$}, where $q_a$ is a $p$-separator for some $p\in V_O$
%i.e., an atom in $\Q_a$ with $\vars(q_a)=\chi(\xi(p))$,
and $w\in \V$ is a view with $\vars(w)\supseteq \vars(q_a)$.
\end{lemma}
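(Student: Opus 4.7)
First I would note that $\leq$ is immediate from Lemma~\ref{lem:bound}, so the content is the reverse inequality. The strategy is to glue two separately-optimal extensions of $\theta$ into one assignment that witnesses both maxima simultaneously.

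To get started, I would apply the pairwise-consistency proposition quoted from~\cite{GS10} to the hyperedge $\vars(q_a)$ of $\HG_a$ (the hypothesis $\vars(w) \supseteq \vars(q_a)$ is exactly what is required): this gives $\theta \in w^{\overline{\onDB}}[\vars(q_a)] = \PHI^\onDB[\vars(q_a)]$, so $\theta$ extends to a full solution. In particular $\max_{\mathcal{F}_\ell}(\theta)$ and $\max_{\mathcal{F}_r}(\theta)$ are both distinct from $\bot$, and I may fix witnesses $\theta_\ell^*, \theta_r^* \in \PHI^\onDB$ of these two maxima, each restricting to $\theta$ on $\vars(q_a)$.

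The heart of the proof is the gluing step. Root $\JT$ at $q_a$. By Theorem~\ref{thm:property} the images $\xi(\ell)$ and $\xi(r)$ are both descendants of $q_a$; by condition~(2) of Definition~\ref{def:embedding} (applied to the pair $\ell, r$ adjacent to $p$), they lie in distinct components of $\JT \setminus \{q_a\}$. Let $T_R$ denote the subtree containing $\xi(r)$, and call a variable \emph{right-private} if it appears in some $\chi(v)$ with $v \in T_R \setminus \{q_a\}$ but not in $\vars(q_a)$. The join-tree connectedness condition forces any variable appearing both inside $T_R \setminus \{q_a\}$ and outside $T_R$ to lie in $\chi(q_a) = \vars(q_a)$, so the right-private variables are exactly those whose ``$\JT$-footprint'' is contained in $T_R \setminus \{q_a\}$. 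Define $\theta^*$ to agree with $\theta_r^*$ on the right-private variables and with $\theta_\ell^*$ everywhere else. This is well-defined because both witnesses coincide with $\theta$ on $\vars(q_a)$, which is the only place where their prescriptions could conflict.

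What remains is routine verification. For $\theta^* \in \PHI^\onDB$: every atom of $\PHI$ has its scope inside some $\chi(v)$ (since $\HG_a$ covers $\HG_\PHI$); if $v \in T_R$, its scope consists only of right-private variables plus variables in $\vars(q_a)$, so $\theta^*$ matches $\theta_r^*$ on it; otherwise, the connectedness argument shows no scope variable is right-private, so $\theta^*$ matches $\theta_\ell^*$ on it. For the two weight equalities: by Theorem~\ref{thm:property}, every leaf $f_i$ of the subtree of $\mathit{tree}(\mathcal{F},O)$ rooted at $r$ is mapped by $\xi$ to a descendant of $\xi(r)$, hence into $T_R$, so $\vars(f_i)$ consists only of right-private variables and variables in $\vars(q_a)$; thus $\theta^*$ and $\theta_r^*$ agree on $\vars(f_i)$ and therefore $\mathcal{F}_r(\theta^*) = \mathcal{F}_r(\theta_r^*) = \max_{\mathcal{F}_r}(\theta)$. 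The symmetric argument gives $\mathcal{F}_\ell(\theta^*) = \max_{\mathcal{F}_\ell}(\theta)$, and together these yield $\max_{\mathcal{F}_\ell \oplus \mathcal{F}_r}(\theta) \geq \mathcal{F}_\ell(\theta^*) \oplus \mathcal{F}_r(\theta^*) = \max_{\mathcal{F}_\ell}(\theta) \oplus \max_{\mathcal{F}_r}(\theta)$. The main obstacle is showing that the gluing produces a genuine solution; this is where one must carefully lean on the join-tree connectedness condition to ensure that every variable that could create a conflict between $\theta_\ell^*$ and $\theta_r^*$ is already pinned down by $\theta$ itself.
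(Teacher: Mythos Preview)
Your proposal is correct and follows essentially the same approach as the paper: both proofs use Theorem~\ref{thm:property} to place the images of the subtrees rooted at $\ell$ and $r$ into distinct components of $\JT\setminus\{\xi(p)\}$, then invoke the connectedness condition of join trees to conclude that the shared variables are pinned down by $\theta$, so that optimal extensions on the two sides can be combined independently. The paper states this last step at a high level (``extensions can be freely combined''), whereas you carry out the explicit gluing of $\theta_\ell^*$ and $\theta_r^*$ and verify membership in $\PHI^\onDB$ atom by atom; this is the same argument, just unpacked.
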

\begin{proof}
Recall that $\xi(p)$ separates its children $\ell$ and $r$ in the join tree $\JT$ of $\HG_{\Q_a}$. Observe that, by Lemma~\ref{thm:property},
the images of all vertices of the subtree rooted at $\ell$ (resp., $r$) belong to the same connected component $C_\ell$ (resp., $C_r$) of
$\JT\setminus\xi(p)$. In particular, $C_\ell\neq C_r$. Thus, from the connectedness condition of join trees, every variable that $C_\ell$ and
$C_r$ have in common must be included in $\chi(\xi(p))$. In fact, any partial assignment $\theta\in w^{\overline \onDB}[q_a]$ provides a value
for all these variables. Thus, all possible extensions of $\theta$ to $C_\ell$ are independent of their extensions to $C_r$, so that they can
be freely combined. Hence, we can safely obtain $\max_{\mathcal{F}_\ell\oplus\mathcal{F}_r}(\theta)$ by computing $\oplus$ over the maximum
weights obtained for $\theta$ looking at $\mathcal{F}_\ell$ and $\mathcal{F}_r$ in a separate way.
\end{proof}

In the light of Lemma~\ref{lem:bound} and Lemma~\ref{lem:distribution}, it is not difficult to define a bottom-up algorithm that, given the
tree projection $\HG_a$ and the embedding $\xi$, processes from the leaves to the root each vertex $p$ of $(\mathcal{F},O)$ computing the
maximum weights that can be achieved by combining the results coming from its children, and using some view covering the $p$-separator.

However, because deciding whether there is a tree projection (and compute one, if one exists) is $\NP$-hard~\cite{GMS09}, such a na\"ive
approach to solve \anymaxref\ and \topk\ is impractical for large constraint formulas. We need a method that is able to perform the computation
even when an embedding is not given.

\subsection{Algorithm \algMax}

\begin{figure*}[t]
\centering \centering \fbox{
\parbox{0.82\textwidth}{
\hspace{-1mm}
\begin{tabular}{l}
  \hspace{-2mm}\textbf{Input}: A constraint formula $\Q$; a structured valuation function $\mathcal{F}$;\\
  \hspace{-2mm}\phantom{\textbf{Input}: }a view set $\V$ for $\Q$;\\
  \hspace{-2mm}\phantom{\textbf{Input}: }a constraint database $\DB\mbox{\rm '}$ legal for $\V$ w.r.t.~$\Q$ and $\DB$; and,\\
  \hspace{-2mm}\phantom{\textbf{Input}: }a set of variables $O\subseteq \vars(\Q)$;\\
  \hspace{-2mm}\textbf{Output}: An assignment, \texttt{NO SOLUTION}, or \texttt{FAIL}; \hspace{-20mm}\\
  \hline\\
  \vspace{-6mm}\ \hspace{-80mm} \\

  \hspace{-2mm}\textbf{begin} \\
  \hspace{-4mm}\ \ \ \ $\DB_1 :=$ {\it EnforcePairwiseConsistency}$(\V,\DBW)$;\\
  \hspace{-4mm}\ \ \ \ \textbf{if} some database relation is empty \textbf{then} %\\
  %\hspace{-4mm}\ \ \ \ \ \ \ \
  \textbf{Output} \texttt{NO SOLUTION};\\
  %\hspace{-4mm}\ \ \ \ Let $\DB_1$ be the modified version of $\DBW$ that contains the function views;\\
  \hspace{-4mm}\ \ \ \ Let $p_1,\dots,p_s$ be a topological ordering of the vertices of $\mathit{tree}(\mathcal{F},O)$;\\
  \hspace{-4mm}\ \ \ \ Initialize the sets of candidate separators $\newsep_1,\dots,\newsep_s$,\\
  \hspace{-4mm}\ \ \ \ \ \ \ \ and let $\DB_1$ be the resulting constraint database;\\
  \hspace{-4mm}\ \ \ \ \textbf{for} i:=1 \textbf{to} s-1 \textbf{do}\\
  \hspace{-4mm}\ \ \ \ \ \ \ \ $\DB_i' := {\it evaluate}(\newsep_i,\DB_i)$;\\
  \hspace{-4mm}\ \ \ \ \ \ \ \ \textbf{if} $\newsep_i$ is empty \textbf{then} %\\
  %\hspace{-4mm}\ \ \ \ |\ \ \ \ \ \ \ \
  \textbf{Output} \texttt{FAIL};\\
  \hspace{-4mm}\ \ \ \ \ \ \ \ $\DB_{i+1}:= {\it propagate}(\newsep_i,\DB_i')$\\
%  \hspace{-4mm}\ \ \ \ \textbf{end} \\
  \hspace{-4mm}\ \ \ \ $\DB_{s+1} :={\it evaluate}(\newsep_s,\DB_s)$;\\
  \hspace{-4mm}\ \ \ \ \textbf{if} $\newsep_s$ is empty \textbf{then} %\\
  %\hspace{-4mm}\ \ \ \ \ \ \ \
  \textbf{Output} \texttt{FAIL};\\
  \hspace{-4mm}\ \ \ \ \textbf{else}  \\
  \hspace{-4mm}\ \ \ \ \ \ \ \ \textbf{let} ${\it rel}= [w X]^{\onDB_{s+1}}$, where $\newsep_s=\{[w X] \}$;\\
%  \hspace{-4mm}\ \ \ \ |\ \ \ \ Let $T = \{ t\in {\it rel} \mid t[X]=\max \{{\it rel}[X]\} \}$;\\
  \hspace{-4mm}\ \ \ \ \ \ \ \ \textbf{Output} any assignment from $\{ \theta\in {\it rel} \mid \theta[X]=\max \{{\it rel}[X]\} \}$;\\
%  \hspace{-4mm}\ \ \ \ \textbf{end} \\
  \hspace{-2mm}\textbf{end}.\\
\end{tabular}
}} \caption{\textbf{Algorithm} \algMax.}\label{fig:algoritmoPromise}
\end{figure*}

Our approach to solve  \anymaxref\ and \topk\ even without the knowledge of (a sandwich formula and of) an embedding is based on the Algorithm
\algMax\ shown in Figure~\ref{fig:algoritmoPromise}.

Assume that the vertices $p_1,\dots,p_s$ of $\mathit{tree}(\mathcal{F},O)$ are numbered according to some topological ordering of this tree
(from leaves to root).
As no tree projection and no embedding are known, we miss the relevant information about which views behave as separators. This is dealt with
in \algMax\ by maintaining, for each vertex $p_i$, a set $\newsep_i$ of views that are \emph{candidates} to be $p_i$-separators in some tree
projections and w.r.t.~to some embedding. These sets are managed as follows.

\medskip \noindent \textbf{Initialization.}
Define $\DB_1$ as the database obtained by enforcing pairwise-consistency on $\DBW$ w.r.t.~$\V$. Let $p_i$ be a vertex of
$\mathit{tree}(\mathcal{F},O)$, and consider three cases:

\begin{myitemize}
\item[$\mbox{\it \sc Leaf node:}$] {$p_i$ is a leaf associated with the weight function~$f_i$.} Then, $\newsep_i$ contains only an
    ``augmented'' view $[w_{f_i} X^{(w_{f_i})}_i]$ over a fresh relation symbol, and over all variables in $\vars(w_{f_i})$ plus the fresh
    variable $X^{(w_{f_i})}_i$. Accordingly, $\DB_1$ is enlarged to contain the relation:
$$
\small \rel([w_{f_i} X^{(w_{f_i})}_i],\DB_1)=\{ \theta\cup \{X^{(w_{f_i})}/f_i(\theta)\} \mid \theta\in w_{f_i}^{\onDB_1} \}.
$$
Thus, the auxiliary variable $X^{(w_{f_i})}_i$ is meant to store the weight of the function $f_i=\mathcal{F}_{p_i}$ for each assignment of the
view $w_{f_i}$. Note that the sample set $\newsep_i$ includes just one (augmented) function view, as  $w_{f_i}$ can always be used as a
$p_i$-separator.

\item[$\mbox{\it \sc Internal node:}$] {$p_i$ is a non-leaf vertex having two children named $p_r$ and $p_t$ in the given ordering.} Then, for
    each $w\in \V$, $w'\in\newsep_r$, and $w''\in\newsep_t$, the set $\newsep_i$ includes the augmented view $[w X^{(w')}_{r} X^{(w'')}_{t}]$,
    over the variables in $\vars(w)$ plus the fresh variables $X^{(w')}_{r}$ and $X^{(w'')}_t$. Accordingly, $\DB_1$ is enlarged to contain the
    relation:
$$
\small \rel([w X^{(w')}_{r} X^{(w'')}_{t}],\DB_1)=\{ \theta\cup \{X^{(w')}_{r}/\mathit{noval},X^{(w'')}_{t}/\mathit{noval}\} \mid \theta\in w^{\onDB_1} \}.
$$
Intuitively, augmented views store the weights derived during the computation for functions $\mathcal{F}_{p_r}$ and $\mathcal{F}_{p_t}$.
Initially, we consider the constant $\mathit{noval}$, meaning that no weight is currently available. Note that for internal nodes we need to
keep all the possible views in $\V$ as candidates for being $p_i$-separators.
%, and that weights for $\mathcal{F}_{p_r}$ (resp., $\mathcal{F}_{p_t}$) may be
%determined by all the possible weigths propagated via some sample view in $\newsep_r$ (resp., $\newsep_t$).

\item[$\mbox{\it \sc Root:}$] {$p_i=p_s$ is the root whose only child is $p_{s-1}$.} Then, let us chose any view $w_O\in \V$ such that
    $O\subseteq \vars(w_o)$, which exists for otherwise there would be no embedding. For each view $w\in \newsep_{s-1}$, the set $\newsep_s$
    includes the augmented view $[w_O X^{(w)}_{s-1}]$, whose relations in $\DB_1$ are:
$$
\small \rel([w_O X^{(w)}_{s-1}],\DB_1)=\{ \theta\cup\{X^{(w)}_{s-1}/\mathit{noval}\} \mid  \theta\in w_O^{\onDB_1}[O]\}.
$$
\end{myitemize}

\noindent During the initialization, $\DB_1$ is modified so as to include augmented views. However, the projection of each augmented view over
the variables occurring in $\Q$ gives precisely the original underlying view. Thus, Lemma~\ref{lem:bound} and Lemma~\ref{lem:distribution} hold
over (the modified) $\DB_1$ and the augmented views. During the computation, a sequence of such constraint databases $\DB_2,...,\DB_s$ is
constructed. For each of them, the equivalence with $\DB_1$ when considering projections over the variables in the original views is
guaranteed.

\medskip \noindent \textbf{Main Loop.}
After their initialization, views in $\newsep_i$ are incrementally processed, from $i=1$ to $i=s$, via the functions \emph{evaluate} and
\emph{propagate}. Both functions receive as input a candidate separator and a current constraint database, and produce as output a novel
constraint database. The functions

\medskip \noindent \emph{Step} $\it \DB_i':=\mathit{evaluate}(\newsep_i,\DB_i)$.
The goal of this step is to evaluate functions over the candidates in $\newsep_i$ and to filter out those that cannot be $p_i$-separators. When
invoked according to the topological ordering, it will be guaranteed that the active domain in $\DB_i$ of any variable of any augmented view in
$\newsep_i$ does not include $\mathit{noval}$, because functions associated with the children have been previously evaluated.
We distinguish three cases:

\begin{myitemize}
\item[$\mbox{\it \sc Leaf node:}$] In this case, no operation is required, as $\newsep_i$ contains one good augmented view, by initialization.

\item[$\mbox{\it \sc Internal node:}$] For every $w\in \V$, recall that $\newsep_i$ contains the view $[w X^{(w')}_{r} X^{(w'')}_{t}]$, for
    each pair $w'\in \newsep_r$ and $w''\in \newsep_t$. Let $\oplus$ be the label of $p_i$, and for any assignment $\bar \theta\in
    w^{{\onDB_1}}$, let $\mathit{marg}(\bar \theta,w,w',w'')$ be the maximum of $\theta[X^{(w')}_{r}]\oplus \theta[X^{(w'')}_{t}]$ taken over
    all the assignments $\theta\in [w X^{(w')}_{r} X^{(w'')}_{t}]^{\onDB_i}$ such that $\bar \theta=\theta[w]$. This is often called
    marginalization of $[w X^{(w')}_{r} X^{(w'')}_{t}]$ w.r.t.~$X^{(w')}_{r}\oplus X^{(w'')}_{t}$. Let $\mathit{best}(\bar \theta,w)$ denote
the minimum weight of $\mathit{marg}(\bar \theta,w,w',w'')$ over all the possible pairs of views $w'$ and $w''$, and define for $w$ the
augmented view $[w X^{(w)}_{i}]$, whose associated relation in $\DB_i$ is:
$$
\small \rel([w X^{(w)}_{i}],\DB_i)=\{ \theta\cup\{ X^{(w)}_{i}/\mathit{best}(\bar \theta,w)\} \mid \bar \theta\in w^{\onDB_1}\}.
$$

Then, $\newsep_i$ is modified by including only all augmented views of the form $[w X^{(w)}_{i}]$.

The rationale of this step can be understood by first recalling that every assignment in a $p_i$-separator is associated with the largest
weight over its possible extensions to full answers according to $\mathcal{F}_{p_i}$ (cf. Lemma~\ref{lem:bound} and
Lemma~\ref{lem:distribution}). In fact, when analyzing the algorithm, we shall show that good candidates to act as $p_i$-separators are those
having the minimum marginalized weights for each one of their assignments, which therefore motivates the definition of the term
$\mathit{best}(\bar \theta,w)$. Based on this fact, we actually delete from $\newsep_i$ every view whose maximum weight over all its
assignments is not the minimum over the maximum weights of all other views. This way $|\newsep_i|\leq |\V|$, and the maximum weight stored
somewhere in the constraint database is bounded by its real maximum over the answers of the given constraint formula. Therefore, no space
explosion may occur, neither in terms of number of samples nor in terms of size of the weights stored in the views.

\item[$\mbox{\it \sc Root:}$] We drop all views from $\newsep_s$,  but one view (if any) of the form $[w_O X^{(w)}_{s-1}]$ such that for each
    $\theta\in [w_O X^{(w)}_{s-1}]^{{\onDB_s}}$, $\theta[X^{(w)}_{s-1}]\leq \theta'[X^{(w')}_{s-1}]$ holds over any $[w_O X^{(w')}_{s-1}]\in
    \newsep_s$, and any $\theta'\in [w_O X^{(w')}_{s-1}]^{{\onDB_s}}$ with $\theta'[w_O]=\theta[w_O]$.
\end{myitemize}

\medskip \noindent \emph{Step} $\it \DB_{i+1}:=\mathit{propagate}(\newsep_i,\DB_i')$. Let $p_j$ be the parent of $p_i$. In this step, we propagate
the information of the views in $\newsep_i$ into $\newsep_j$. For any variable $X$, let $\dom(X)$ denote its active domain in $\DB_{i}'$.
     %Moreover, let $\dom_{nv}(X)=\dom(X)\setminus \{\mathit{noval}\}$ if $\dom(X)\supset \{\mathit{noval}\}$; and $\dom_{nv}(X)=\dom(X)$, otherwise.
    Then, for each view $[w X^{(w)}_i]\in \newsep_i$, propagation is implemented via the following steps (1)---(6):
\begin{myitemize2}
\item[(1)] initialize a set $\V_i = \{[w X^{(w)}_i] \}$;

\item[(2)] add to $\V_i$ all augmented views $[w_b X^{(w)}_i]$ for each $w_b\in\V$, and to $\DB_{i}'$ their corresponding relations of the form
    $\rel([w_b X^{(w)}_i],\DB'_i)=w_b^{\onDB_i'}\times \dom(X^{(w)}_i)$; that is, these views are not restrictive w.r.t.~$X_i$ because all its
    possible weights are considered;

\item[(3)] add to $\V_i$ all the views of the form $[w' X^{(w)}_i X^{(w'')}_r]$ which are stored in $\newsep_j$.
Denote by $R$ the  projection  of $\rel([w' X^{(w)}_i X^{(w'')}_r],\DB_i')$ over all variables but $X^{(w)}_i$, and update $\rel([w' X^{(w)}_i
X^{(w'')}_r],\DB_i')$ to be the relation containing all assignments $\theta$ such that $\theta[w' X^{(w'')}_r]\in R$ and
$\theta[X^{(w)}_i]\in\dom(X^{(w)}_i)$. Repeat the step for the symmetrical case of those views having the form $[w' X^{(w'')}_r X^{(w)}_i ]$.

\item[(4)] update $\DB_{i+1}'$ with the result of {\it EnforcePairWiseConsistency}$(\V_i,\DB_i')$;

\item[(5)] remove from $\DB_{i+1}'$ the relations added at step~(2).

\item[(6)] replace each view $[w' X^{(w)}_i X^{(w'')}_r]\in \newsep_j$ by its marginalization w.r.t.~$X^{(w)}_i$, that is, remove from
    $\rel([w' X^{(w)}_i X^{(w'')}_r],\DB_{i+1}')$ any assignment $\theta$ for which there is an assignment
$\theta'$ in the same relation with $\theta[w' X^{(w'')}_r]=\theta'[w' X^{(w'')}_r]$ and $\theta[X^{(w)}_i]<\theta'[X^{(w)}_i]$. Repeat for all
views of the form $[w' X^{(w'')}_r X^{(w)}_i ]$.
\end{myitemize2}

Note that the goal of  steps (1)---(4) above is to filter views of the form $[w' X^{(w)}_i X^{(w'')}_r]$ that are stored in $\newsep_j$, by
keeping the assignments that agree with the weights stored in the view $[w X^{(w)}_i]$. This is done by enforcing local consistency via the
augmented views $[w_b X^{(w)}_i]$ added at step~(2). In particular, such augmented views (as well as the target view $[w' X^{(w)}_i
X^{(w'')}_r]$) do not constrain the weights for the variable $X^{(w)}_i$, but just propagate the information in $[w X^{(w)}_i]$, as they are
initialized by associating the whole active domain $\dom(X^{(w)}_i)$ with each assignment in the corresponding original views of $\V$. In
particular, as their role is just to propagate the information from sample $[w X^{(w)}_i]$ to sample $[w' X^{(w)}_i X^{(w'')}_r]$, they are
eventually removed in step (5) from the constraint database. Finally, note that the same assignment can be propagated with different associated
weights. The final ingredient (used at the end of each ``evaluate step'') is to retain the assignment with the minimum associated weight.

\medskip \noindent \textbf{Concluding Step.} If after the last invocation of the evaluation step $\newsep_s$ contains one view, then
output any of its assignments having the maximum associated weight.

\subsection{``Depromisization'' and Analysis Overview}\label{sec:deprom}

The analysis of \algMax\ is rather technical, and its details are deferred to the Appendix. Observe that algorithm \algMax\ is a \emph{promise}
algorithm, in that it is guaranteed  to correctly return a solution to \anymaxref\  under the hypothesis that some embedding exists (the
``promise''). Actually, we can show that its correctness does not require that the constraint optimization formula can be embedded in a tree
projection of the entire CSP instance. Indeed, we can show that it suffices that an embedding exists for some \emph{homomorphically equivalent}
subformula.

\begin{remark}\em
We consider the usual computational setting where each mathematical operation costs $1$ time unit. However, all the algorithms described in the
following are such that the total size of the weights computed during their execution is polynomially bounded w.r.t.~the combined size of the
input and the size of the value of any optimal solution (assuming that the promise holds).
This is a sensitive issue because, in the adopted computational setting, one may compute in polynomial-time weights of size exponential
w.r.t.~the input size.
\end{remark}

To state our main result, recall first that, whenever $\Q'$ is a subformula of $\Q$, i.e., $\atoms(\Q')\subseteq \atoms(\Q)$, we say that $\Q'$
is \emph{homomorphically equivalent} to $\Q$, denoted by $\Q'\homEquiv \Q$, if there is a homomorphism from $\Q$ to $\Q'$, i.e.,  mapping $h:
\vars(Q) \mapsto \vars(Q')$ such that for each $r_{i}({\bf u_i})\in \atoms(Q)$, it holds that $r_{i}(h({\bf u_i}))\in \atoms(Q')$.
For any set $O$ of variables, denote by $\atom(O)$ a fresh atom over the variables in $O$.

\begin{theorem}\label{thm:promiseWeakMax}
Algorithm~\algMax\ runs in polynomial time. It outputs {\em \texttt{NO SOLUTION}}, only if $\PHI^\onDB=\emptyset$.
Moreover, it computes %in output polynomial-space%todo eliminato
an answer (if any) to {\em \anymaxref($\PHI_\mathcal{F},O,\mathrm{DB},\V,\DBW$)}, with $\mathcal{F}$ being a structured valuation function, if
$(\mathcal{F},O)$ can be embedded in $(\Q',\V)$ for some subformula $\Q'$ of $\Q\wedge\atom(O)$ such that $\Q'\homEquiv \Q\wedge\atom(O)$. It
outputs {\em \texttt{FAIL}}, only if this condition does not hold.
\end{theorem}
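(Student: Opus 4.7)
The plan is to establish the four claims of the theorem in sequence, using the preparatory Lemmas~\ref{lem:bound} and~\ref{lem:distribution} together with a careful inductive invariant along the topological order of $\mathit{tree}(\mathcal{F},O)$.

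First I would handle \emph{polynomial runtime}. Enforcing pairwise consistency runs in polynomial time in $\|\DBW\|$ and $|\V|$. After initialization, $|\newsep_i|$ is $O(|\V|^3)$ at internal nodes but is shrunk to at most $|\V|$ by the \emph{evaluate} step (which keeps at most one augmented view per base $w\in\V$). Each of the $s$ iterations performs a bounded number of projections, joins, and marginalizations over augmented relations whose size is polynomial in $\|\DB_1\|\times |\V|$, and whose weight domain is dominated by the finite set of weights arising from the $f_i$-evaluations at the leaves (in line with the remark on weight-size in Section~\ref{sec:deprom}). This gives a global polynomial bound.

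Next I would prove the \texttt{NO SOLUTION} clause. Pairwise consistency on $(\V,\DBW)$ is sound: if some relation becomes empty, then $\PHI^\onDB$ must be empty, because any solution to $(\PHI,\DB)$ projects to an assignment surviving in every base view $w_q$, and such surviving assignments must extend consistently across overlapping views. Crucially, this argument is insensitive to whether we view $\PHI$ or any homomorphically equivalent $\Q'$, since a homomorphism maps solutions to solutions of the image.

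The bulk of the work is the main correctness statement. Under the promise that $(\mathcal{F},O)$ embeds in $(\Q',\V)$ with $\Q'\homEquiv \Q\wedge\atom(O)$, fix a witness tree projection $\HG_a$, join tree $\JT$, and embedding $\xi$. I would establish by induction on the topological order $p_1,\dots,p_s$ the following invariant: for each $p_i$, the view $w_i^\star$ that $\xi$ assigns as $p_i$-separator survives the \emph{evaluate} step (possibly as the augmented view $[w_i^\star X^{(w_i^\star)}_i]$), and for every $\bar\theta\in (w_i^\star)^{\overline\onDB}$ the stored weight equals $\max_{\mathcal{F}_{p_i}}(\bar\theta)$ computed on $\Q$. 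The base case (leaves) holds by definition of $[w_{f_i}X^{(w_{f_i})}_i]$. For the inductive step at an internal $p_i$ with children $p_r,p_t$, Lemma~\ref{lem:distribution} applied at the $p_i$-separator $\xi(p_i)$ gives that
\[
\max_{\mathcal{F}_{p_r}\oplus\mathcal{F}_{p_t}}(\bar\theta) \;=\; \max_{\mathcal{F}_{p_r}}(\bar\theta)\oplus \max_{\mathcal{F}_{p_t}}(\bar\theta),
\]
for each $\bar\theta$ in the projection of $w_i^\star$; by the inductive hypothesis, the combined marginalization computed by \emph{evaluate} on the triples $(w_i^\star,w_r^\star,w_t^\star)$ gives the correct value. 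Lemma~\ref{lem:bound} ensures that for every other candidate $(w,w',w'')$ the marginalization is an \emph{upper bound} on the true $\max$, so taking the \emph{minimum} $\mathit{best}(\bar\theta,w)$ across candidates is always sound (it cannot fall below the real optimum) and, for $w=w_i^\star$, is tight. Hence the pruning never discards the true separator's weight. The \emph{propagate} step then transports these weights into the relations attached to the parent via pairwise-consistency-like semijoins; using $\HG_a$ as a tree projection of $\HG_{\PHI'}$ and the property of $\overline\DB$ (which remains the same on $\Q$ or $\Q'$, by homomorphic equivalence), the result again equals $\PHI^\onDB[\cdot]$ on the relevant variables. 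Concluding at the root, the view surviving in $\newsep_s$ carries, for every $\bar\theta\in w_O^\onDB[O]$, the weight $\max_\mathcal{F}(\bar\theta)$, so selecting the maximum over the output column yields a correct answer to \anymaxref.

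Finally, the \texttt{FAIL} clause follows by contraposition from the invariant: if the embedding exists, every \emph{evaluate} step preserves at least the separator view $w_i^\star$, so no $\newsep_i$ becomes empty. The hardest part will be justifying that the pruning-to-minimum rule in \emph{evaluate} is sound for the whole set of candidates simultaneously—i.e., that a view that looks ``best'' at one node cannot mislead the computation at its parent. This is where the interplay between Lemma~\ref{lem:bound} (upper-bound property) and Lemma~\ref{lem:distribution} (tightness exactly at true separators) is delicate, and where the homomorphic-equivalence extension requires checking that pairwise consistency and all projections used in \emph{propagate} depend only on the semantics of $(\PHI,\DB)$, not on its syntactic form, so replacing $\PHI$ by $\PHI'\homEquiv \PHI\wedge\atom(O)$ does not alter any computed relation.
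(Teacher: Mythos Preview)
Your overall strategy matches the paper's: an inductive invariant along the topological order, with Lemma~\ref{lem:bound} supplying upper bounds and Lemma~\ref{lem:distribution} supplying tightness at true separators. However, your stated invariant is too weak to close the induction. You only track that the \emph{true} separator view $w_i^\star$ stores the exact value $\max_{\mathcal{F}_{p_i}}(\bar\theta)$. The paper's invariant (its Lemma~\ref{lem:propagation}) carries a second clause: \emph{every} view $[w'X]\in\newsep_i$ stores, for each of its assignments, a weight $\geq \max_{\mathcal{F}_{p_i}}(\theta[w'])$. This second clause is what actually justifies the minimum-selection rule. Your appeal to Lemma~\ref{lem:bound} for ``every other candidate $(w,w',w'')$'' does not work on its own: Lemma~\ref{lem:bound} compares the \emph{true} child maxima, but the algorithm combines \emph{stored} weights from $w',w''$, and those are only known to dominate the true maxima if the second clause holds inductively for the children. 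Without it, a non-separator child view could in principle carry a weight below the true optimum, and the minimum selection at the parent could then undercut the correct value and discard $w_i^\star$.

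Your treatment of \emph{propagate} is also too light. Pairwise consistency on the augmented views $\V_i$ only transports weights correctly if a tree projection exists for the \emph{augmented} instance that includes the fresh weight variable $X_i^{(w)}$. The paper handles this explicitly: because $X_i^{(w)}$ is added to every view in $\V_i$, one obtains an augmented acyclic hypergraph $\HG_a^+$ that is a tree projection of the augmented formula, and Proposition~\ref{fact:equiv2} then yields the claimed equality on $h_r\cup\{X_i^{(w)}\}$. For the upper-bound clause on arbitrary (non-separator) propagation targets, the paper even needs to add auxiliary views $\V_+$ with maximally liberal relations to force a covering tree projection and then argue by contradiction. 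You flag this area as ``the hardest part'' but do not supply the mechanism; the induction will not close without both the second invariant clause and the augmented-tree-projection argument for \emph{propagate}.
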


Interestingly, \algMax\ can be used as  a subroutine for an algorithm that incrementally builds a solution in $\Q^\onDB$, hence yielding a
promise-free algorithm, i.e., an algorithm that either computes a correct solution or disproves some given promise (which is $\NP$-hard to be
checked), in our case the existence of an embedding. The full proof of the following result is given in the Appendix, but a proof idea is
discussed below.

\begin{theorem}\label{thm:nopromiseMAX}
There is a polynomial-time algorithm for structured valuation functions that either solves {\em
\anymaxref($\PHI_\mathcal{F},O,\mathrm{DB},\V,\DBW$)}, %in output polynomial-space,%todo eliminato
or disproves that $\mathcal{F}$ can be embedded in~$(\Q,\V)$.
\end{theorem}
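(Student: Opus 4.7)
The plan is to use algorithm \algMax\ as a subroutine inside an outer loop that incrementally builds a full assignment in $\Q^{\onDB}$, variable by variable, and then verifies the constructed assignment against every constraint of $\Q$. If any call to \algMax\ returns \texttt{FAIL} or if the final verification fails, the outer algorithm safely reports that $\mathcal{F}$ cannot be embedded in $(\Q,\V)$; otherwise it returns the projection of the full assignment onto $O$ as the answer to \anymaxref.

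Concretely, enumerate $\vars(\Q)$ as $X_1,\dots,X_n$ with the variables of $O$ listed first, and maintain a partial assignment $\sigma$, initially empty. At iteration $i$, enlarge $\V$ and $\DBW$ with a unary singleton view $w_{X_j}^\sigma$ for each $j<i$ whose relation contains the single tuple $\sigma(X_j)$; set $O_i=O\cup\{X_1,\dots,X_i\}$; and invoke \algMax\ on the enlarged instance. If the answer is \texttt{NO SOLUTION} we propagate it, if it is \texttt{FAIL} we halt and declare non-embeddability, and otherwise we set $\sigma(X_i)$ to the value assigned to $X_i$ in the returned assignment. After $n$ iterations $\sigma$ is total; checking $\sigma[S_v]\in r_v$ for every atom $r_v(\mathbf{u}_v)\in\atoms(\Q)$ takes linear time, and we output $\sigma[O]$ on success or declare non-embeddability on failure.

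The correctness of the above scheme hinges on the following invariant: as long as $\mathcal{F}$ is embeddable in $(\Q,\V)$ and some view of $\V$ covers $O$, the weaker precondition required by Theorem~\ref{thm:promiseWeakMax} keeps holding throughout the iterations. The reason is that pinning a decided variable via a unary view leaves the hypergraph $\HG_\Q$ unchanged (only the database shrinks), while the enlarged output atom $\atom(O_i)$ is homomorphically equivalent to $\atom(O)$ augmented by unary singleton atoms, because the extra variables in $O_i\setminus O$ have been constrained to a single value. Hence the witness join tree of the original embedding can be adapted---by attaching each unary view to a vertex whose $\chi$-label already contains its variable, and if needed re-rooting at a vertex that also covers $O$---to yield an embedding in a subformula homomorphically equivalent to $\Q\wedge\atom(O_i)$. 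Under this invariant, every call to \algMax\ succeeds and returns the optimal extension of $\sigma$ to $X_i$, so the final $\sigma$ is an optimal solution and the verification never fails; contrapositively, a \texttt{FAIL} or a failed verification contradicts the invariant and therefore disproves the initial embeddability.

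The principal technical obstacle is to carry out this structural bookkeeping carefully, and in particular to verify the cover and separation conditions of Definition~\ref{def:embedding} after both the unary pinning views and the enlarged output atom are inserted. The key observations are that a singleton atom over a single variable imposes only the minimal cover requirement at its $p$-separator image, and that the separator for the new root $O_i$ can be chosen within the existing join tree because the pinned variables' unary views can be absorbed into any vertex whose $\chi$-label already contains them. Once this bookkeeping is in place, the overall algorithm performs $n$ polynomial-time calls to \algMax\ plus one linear-time verification, which yields the claimed polynomial-time bound.
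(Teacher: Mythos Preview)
Your overall strategy---iteratively fix variables, then certify the final assignment---is the right shape, but the way you handle the growing output set $O_i$ does not work, and this is the crux of the argument.

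The precondition of Theorem~\ref{thm:promiseWeakMax} requires that $(\mathcal{F},O_i)$ be embeddable in some $(\Q',\V')$. By Definition~\ref{def:embedding}, the root of ${\it tree}(\mathcal{F},O_i)$, whose label is $O_i$, must be mapped to a join-tree vertex $v$ with $O_i\subseteq\chi(v)$. Since $\chi(v)$ is a hyperedge of a tree projection and every such hyperedge is contained in some view of $\V'$, you need a single view whose scope contains all of $O_i$. But your enlarged view set consists of the original views plus \emph{unary} singleton views, so no such view exists in general once $O_i$ has grown beyond what a single original view covers. Your argument that ``$\atom(O_i)$ is homomorphically equivalent to $\atom(O)$ augmented by unary singleton atoms because the extra variables have been constrained to a single value'' conflates semantics (the database) with syntax: homomorphic equivalence is defined on atoms and relation symbols, and $\atom(O_i)$ is a fresh atom that cannot be mapped to anything else. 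Even granting some semantic reading, the tree-projection and embedding conditions are purely structural and do not see that a variable's domain has shrunk to a single value. Consequently your invariant fails, \algMax\ may legitimately output \texttt{FAIL} (or a wrong partial optimum) even when $\mathcal{F}$ \emph{is} embeddable in $(\Q,\V)$, and your algorithm would then wrongly disprove embeddability.

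The paper avoids this obstacle by a different and essential trick: at each step it calls \algMax\ with the \emph{singleton} output $\{X\}$ and with the view set $\V_X$ obtained by adding the current variable $X$ to \emph{every} view. Then any view covers $\{X\}$, the acyclicity of the tree projection is trivially preserved, and one checks that $(\mathcal{F},\{X\})$ is embeddable in $(\Q,\V_X)$ iff $\mathcal{F}$ is embeddable in $(\Q,\V)$. There is also a second, smaller gap in your proposal: your final verification only tests that $\sigma$ satisfies all constraints, not that $\mathcal{F}(\sigma)$ equals the weight $v_{\max}$ reported by the first call. Without this weight check (and without checking that every call returns the \emph{same} $v_{\max}$), a run in which the promise fails could still produce a feasible but non-optimal $\sigma$, which your algorithm would then output as if it were a certified answer.
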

\begin{proof}[Proof Idea]
Given a variable $X\in \vars(\Q)$, we invoke \algMax\ with $O=\{X\}$ and with a modified set of views $\V_X$, which are obtained by augmenting
each original view in $\V$ with the variable $X$ (and by modifying the original legal database $\DBW$ accordingly). Note that
$(\mathcal{F},\{X\})$ can be embedded in $(\Q,\V_X)$ if and only if $\mathcal{F}$ can be embedded in $(\Q,\V)$.
By the application of Theorem~\ref{thm:promiseWeakMax} on the modified instance, if \algMax\ returns \texttt{NO SOLUTION} (resp.,
\texttt{FAIL}), then we can terminate the computation, by returning that there is no solution, i.e., $\PHI^\onDB=\emptyset$ (resp., the promise
that $\mathcal{F}$ can be embedded in $(\Q,\V)$ is disproved---observe that the equivalent promise that  $(\mathcal{F},\{X\})$ can be embedded
in $(\Q,\V_X)$  is indeed more stringent than the one in Theorem~\ref{thm:promiseWeakMax} for $\V_X$ and $O=\{X\}$). Therefore, let us assume
that we get an assignment $\theta_X$ with an associated weight $z$.
The variable $X$ is then deleted from the constraint formula and from the views, and the original constraint database is modified so as to keep
only assignments where $X$ is fixed to $\theta_X$.

The process is then iterated over all the variables.
It can be shown that the promise is disproved if in some subsequent step \algMax\ does not return the same weight $z$, or if at the end of the
computation the assignment we have computed, say $\theta$, is not a solution or $\mathcal{F}(\theta)\neq z$. Otherwise, $\theta$ can be
returned as a certified solution to \anymaxref.
\end{proof}

The above is the basis for getting the corresponding tractability result for \topk. Note that, since $\PHI^\onDB$ may have an exponential
number of assignments, tractability of enumerating such assignments means here having algorithms that list them \emph{with polynomial delay}
(WPD): An algorithm $\rm M$ solves WPD a computation problem $\rm P$ if there is a polynomial $p(\cdot)$ such that, for every instance of $\rm
P$ of size $n$,  $\rm M$ discovers whether there are no solutions in time $O(p(n))$; otherwise, it outputs all desired solutions in such a way
that a new solution is computed within $O(p(n))$ time after the previous one. Note that, in general, an algorithm running WPD may well use
exponential time and space.

Note, moreover, that in the result below, when the algorithm discovers that the promise does not hold, i.e., when $\mathcal{F}$ cannot be
embedded in $(\Q,\V)$, then it stops the computation but we are still guaranteed that all solutions returned so far (which might be even
exponentially many) constitute a solution to \topk${'}$, for some $K'\leq K$. That is, the algorithm  computes a (possibly empty) certified
prefix of a solution to {\em \topk}. Again, the proof of the following result is elaborated in the Appendix.

\begin{theorem}\label{thm:nopromiseTOPK}
There is a polynomial-delay algorithm for structured valuation functions that either solves {\em
\topk($\PHI_\mathcal{F},O,\mathrm{DB},\V,\DBW$)}, or disproves that $\mathcal{F}$ can be embedded in $(\Q,\V)$; in the latter case, before
terminating, it computes a (possibly empty) certified prefix of a solution.
\end{theorem}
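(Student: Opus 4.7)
The plan is to layer a Lawler/Murty partition-enumeration scheme on top of the promise-free algorithm of Theorem~\ref{thm:nopromiseMAX}. I would maintain a priority queue $\mathcal{Q}$ whose elements are pairs $(\theta,\mathcal{R})$, where $\mathcal{R}\subseteq \PHI^\onDB[O]$ is a ``search region'' described implicitly by a conjunction of equalities and at most one inequality on the output variables and $\theta\in\mathcal{R}$ is a $\succeq_{\mathcal{F},O}$-best element of $\mathcal{R}$; $\mathcal{Q}$ is ordered by $\mathcal{F}(\theta)$. First I would call Theorem~\ref{thm:nopromiseMAX} on the input to obtain the overall best assignment $\theta_1$, insert $(\theta_1,\PHI^\onDB[O])$ into $\mathcal{Q}$, and then enter the main loop.

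At round $j$ I would extract the maximum element $(\theta_j,\mathcal{R}_j)$ from $\mathcal{Q}$, output $\theta_j$, and then, fixing an enumeration $Y_1,\dots,Y_d$ of the variables in $O$, cover $\mathcal{R}_j\setminus\{\theta_j\}$ by the at most $d$ pairwise disjoint subregions
\[
\mathcal{R}_j^{(k)}=\{\theta\in\mathcal{R}_j \mid \theta[Y_i]=\theta_j[Y_i]\ \forall i<k,\ \theta[Y_k]\neq\theta_j[Y_k]\},\qquad k=1,\dots,d.
\]
Each $\mathcal{R}_j^{(k)}$ would be realised as the $O$-projection of the solutions of a modified instance obtained from $(\PHI,\mathrm{DB},\V,\DBW)$ by deleting from every view and base relation containing $Y_i$ (for $i<k$) any tuple that violates the prefix equality, and from every relation containing $Y_k$ any tuple with $Y_k=\theta_j[Y_k]$. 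On each non-empty modified instance I would invoke the algorithm of Theorem~\ref{thm:nopromiseMAX} and insert the returned best solution into $\mathcal{Q}$; whenever some MAX call returns \texttt{FAIL} I would halt and return the prefix produced so far.

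Correctness and the promise-free property rest on two facts. First, the partitioning is exhaustive and disjoint, and the regions generated along the execution form a laminar family on $\PHI^\onDB[O]$; the standard Lawler invariant then yields that the $j$-th element produced is the $j$-th best element of $\PHI^\onDB[O]$. Second, the embedding of $(\mathcal{F},O)$ in $(\PHI,\V)$ is a purely structural property of $\PHI$, $\V$, $\mathcal{F}$, and $O$, independent of the databases, and my modifications shrink $\mathrm{DB}$ and $\DBW$ symmetrically, so both legality conditions on $\DBW$ are preserved (the containment $w_q^\onDBW\subseteq q^\onDB$ is maintained since both sides are cut symmetrically, and $\PHI^\onDB[w]$ can only shrink). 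Hence the promise persists on every modified instance, a \texttt{FAIL} returned by any sub-invocation genuinely disproves the promise, and by Lawler's invariant the prefix produced so far is the true top-$j$ list and is therefore certified. Polynomial delay is immediate, since between two consecutive outputs I perform at most $d=|O|$ polynomial-time invocations of Theorem~\ref{thm:nopromiseMAX}, plus priority-queue operations on a queue of polynomial size.

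The hard part will be to check that this scheme respects the bit-size constraint on intermediate weights recalled in the Remark in Section~\ref{sec:deprom}, since \algMax\ is called polynomially many times per output and one must rule out an accumulation of weight bits across rounds. This is in fact transparent: database restriction only removes tuples and never introduces new ones, so every weight computed inside a restricted invocation is dominated by a weight that could already arise during a single \algMax\ run on a suitable sub-database of the original input, and the polynomial bound already established for \algMax\ therefore transfers directly.
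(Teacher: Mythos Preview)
Your approach is essentially the paper's own: both use Lawler's partition-enumeration scheme, invoking the promise-free \anymaxref\ algorithm of Theorem~\ref{thm:nopromiseMAX} on restricted databases and exploiting that the embedding promise is purely structural and hence invariant under database restriction. The paper implements the restriction slightly differently---it first augments $\Q$ with fresh unary atoms $\atom_X(\{X\})$ for each $X\in O$ (with full-domain relations) and then realises each Lawler subregion by modifying only these unary relations---whereas you filter every relation and view containing the relevant variable; both are correct and the legality argument you sketch goes through.

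One small but genuine slip: you assert that the priority queue has ``polynomial size''. This is false. Since $K$ is given in binary, after $j$ extractions the queue may contain up to $j\cdot|O|$ items, which is exponential in the input for large $K$. The paper addresses this explicitly: what saves polynomial delay is that priority-queue operations cost logarithmic time in the queue size, hence polynomial in the input even when the queue itself is exponentially large. Replace your last clause accordingly and the argument is complete.
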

\begin{proof}[Proof Idea]
The result can be established by exploiting a method proposed by Lawler~\cite{L72} for ranking solutions to discrete optimization problems. In
fact, the method has been already discussed in the context of inference in graphical models~\cite{FD10} and in conjunctive query
evaluation~\cite{KS06}.
Reformulated in the CSP context, for a CSP instance over $n$ variables, the idea is to first compute the optimal solution (w.r.t.~the functions
specified by the user), and then recursively process $n$ constraint databases, obtained as suitable variations of the database at hand where
the current optimal solution is no longer a solution (and no relevant solution is missed). By computing the optimal solution over each of these
new constraint databases, we get $n$ candidate solutions that are progressively accumulated in a priority queue over which operations (e.g.,
retrieving any minimal element) take logarithmic time w.r.t.~its size.
Therefore, even when this structure stores data up to \emph{exponential space} (so that its construction required overall exponential time),
basic operations on it are still feasible in polynomial time. The procedure is repeated until $K$ (or all) solutions are returned.
Thus, whenever the \anymaxref\ problem of computing the optimal solution is feasible in polynomial time (over the %class of all
instances %that may be
generated via this process), we can solve with polynomial delay the \topk\ problem of returning the best $K$-ranked ones.
In fact, we can show that the constraint database can always be updated according to the approach by Lawler~\cite{L72}, while being still in
the position of applying Theorem~\ref{thm:nopromiseMAX}, and hence by iteratively solving \topk.
\end{proof}

\subsection{Results for Evaluation Functions}\label{sec:aggregation}

Our analysis has been conducted so far over \emph{structured} valuation functions, whose syntactic form plays a crucial role with respect to
the existence of an embedding in some tree projection. However, when we focus instead on valuation functions only (built over a single binary
operator $\oplus$) the specific form of the constraint formula should not matter because $\oplus$ is by definition a commutative and
associative operator. Therefore, to deal properly with this setting, we adopt a more semantic approach in which the only sensitive issue is the
existence of a tree projection.
To formalize the result, recall that, for a valuation function $\mathcal{F}$, $\mathit{svf}(\mathcal{F})$ denotes the set of all equivalent
structured valuation functions.

\begin{theorem}\label{thm:aggregation}
Let $\Q$ be a constraint formula, let $O$ be a set of variables, let $\mathcal{F}$ be a valuation function, and let $\V$ be a view set for
$\Q$. Then, the following statements are equivalent:
\begin{itemize}
  \item[(1)] There is a function $\mathcal{F}'\in \mathit{svf}(\mathcal{F})$ such that $(\mathcal{F'},\emptyset)$ can be embedded in
      $(\Q,\V)$;

  \item[(2)] There is a tree projection $\HG_a$ of $\HG_\Q$ w.r.t.~$\HG_\V$ such that:
  \begin{itemize}
  \item[(a)] there is a hyperedge $h\in \edges(\HG_a)$ with $h\supseteq O$;
  \item[(b)] for each weight function $f\in\ranks(\mathcal{F})$, there is a hyperedge $h_f\in \edges(\HG_a)$ such that $h_f\supseteq
      \vars(f)$.
  \end{itemize}
\end{itemize}
\noindent Moreover whenever (2) holds and  the tree projection $\HG_a$ is given, then a function $\mathcal{F'}$ as in (1) can be built in
polynomial time.
\end{theorem}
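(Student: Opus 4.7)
The plan is to prove the two implications separately. The direction (1) $\Rightarrow$ (2) should follow almost directly from Definition~\ref{def:embedding}, while (2) $\Rightarrow$ (1) requires an explicit construction that I expect to be the main technical work.

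For (1) $\Rightarrow$ (2), I would start from the sandwich formula $\Q_a$ of $\Q$ w.r.t.~$\V$ and the embedding $\xi$ with witness join tree $\JT$ that together witness (1). Taking $\HG_a := \HG_{\Q_a}$, which is a tree projection of $\HG_\Q$ w.r.t.~$\HG_\V$ by definition of a sandwich formula, it remains to verify (a) and (b). Both are immediate applications of condition~(1) of Definition~\ref{def:embedding}: applied to the root of the output-aware parse tree it gives $O \subseteq \chi(\xi(O))$ with $\chi(\xi(O)) \in \edges(\HG_a)$, which is (a); applied to any leaf $f$ it gives $\vars(f) \subseteq \chi(\xi(f))$ with $\chi(\xi(f)) \in \edges(\HG_a)$, which is (b).

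For (2) $\Rightarrow$ (1), I would fix a join tree $\JT = (V, E, \chi)$ of $\HG_a$, use (a) to choose and root $\JT$ at a vertex $v_O$ with $O \subseteq \chi(v_O)$, and use (b) to pick, for each $f \in \ranks(\mathcal{F})$, a vertex $v_f \in V$ with $\vars(f) \subseteq \chi(v_f)$. Then I would construct both $\mathcal{F}'$ and the embedding $\xi$ by a single bottom-up traversal of the rooted $\JT$: at each vertex $v$, the procedure gathers the subexpressions delivered by the active children of $v$ (those whose subtree contains some $v_f$), together with the weight functions $f$ with $v_f = v$, and combines them via $\oplus$ into a single expression using a left-associative binary chain. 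Each new $\oplus$ node is mapped by $\xi$ to a fresh copy of $v$ that is spliced into an augmented tree $\JT'$ along an edge incident to $v$, carrying the same $\chi$-label as $v$. This vertex duplication preserves both the covering and connectedness conditions, so $\JT'$ is again a join tree of $\HG_a$. The expression obtained at $v_O$ is the desired $\mathcal{F}' \in \mathit{svf}(\mathcal{F})$, and when the tree projection is part of the input the entire procedure runs in polynomial time, which yields the ``moreover'' claim.

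To verify that the constructed $\xi$ is really an embedding, I would invoke Theorem~\ref{thm:injective} rather than check Definition~\ref{def:embedding} directly. Condition~(1) of that definition holds by construction: leaves map to the chosen $v_f$'s, the root maps to $v_O$, and internal $\oplus$ nodes carry no variables from $\nodes(\HG_a)$. The descendant-preservation hypothesis of Theorem~\ref{thm:injective} follows from the post-order nature of the traversal: each $\oplus$ node's image is a copy of the $\JT$-vertex at which its two subexpressions are combined, and that vertex is necessarily an ancestor in $\JT'$ of the host of either subexpression's image. The main obstacle I anticipate is precisely this bookkeeping: ensuring that, after duplicating vertices to host multiple $\oplus$ combinations at the same $v$, the parent/child relations in $\JT'$ really line up with those of the parse tree. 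Using Theorem~\ref{thm:injective} should confine the difficulty to this descendant invariant, which is a clean structural property of the traversal and sidesteps a direct verification of the separator condition~(2).
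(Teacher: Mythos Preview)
Your proposal is correct and follows essentially the same approach as the paper. For $(2)\Rightarrow(1)$ the paper also roots a join tree of $\HG_a$ at a vertex covering $O$, attaches each weight function to a covering vertex, builds the structured expression along the tree while duplicating vertices to keep the map injective, and then invokes Theorem~\ref{thm:injective}; the only cosmetic differences are that the paper first adds a fresh leaf copy $\bar p_f$ for every $f$ (and a fresh root $\bar p_O$), binarizes the resulting tree, and then recurses top-down, whereas you splice copies on-the-fly during a bottom-up pass.
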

\begin{proof}
The fact that $(1)\Rightarrow(2)$ is immediate by the definition of embedding. Therefore, let us focus on showing that $(2)\Rightarrow(1)$
holds, too.

Assume that $\HG_a$ is a tree projection of $\HG_\Q$ w.r.t.~$\HG_\V$ satisfying the conditions stated in $(2)$. Consider a join tree
$\JT=(V,E,\chi)$ of $\HG_a$ and let $p_O$ be a vertex in $V$ such that $\chi(p_O)\supseteq h$, which exists by (2).(a). Let us root (to
simplify the exposition below) the tree $\JT$ at $p_O$.
Recall that $\ranks(\mathcal{F})$ is the set of all weight functions occurring in $\mathcal{F}$, and for each $f\in \ranks(\mathcal{F})$, let
$p_f\in V$ be any vertex such that $\chi(p_f)= h_f$, which exists by (2).(b).
Based on $\JT$, we build a novel join tree $\bJT=(\bar V,\bar E,\bar \chi)$ as follows: for each function $f\in \ranks(\mathcal{F})$, we create
a new node $\bar p_f$ such that $\bar \chi(\bar p_f)=\chi(p_f)$ and we add this node in $\bJT$ as a child of $p_f$. Moreover, we create a new
node $\bar p_O$ whose only child is $p_O$ and such that $\bar \chi(\bar p_O)=\chi(p_O)$. This new node will act as the root of $\bJT$. All the
other nodes, edges, and labeling remain the same as in $\JT$.
Finally, we process $\bJT$ in order to make it binary. To this end, if a vertex $p$ in $\bJT$ has children $c_1,...,c_n$ with $n>2$, then we
modify $\bJT$ by removing the edges connecting $p$ and $c_i$, for each $i\in \{2,...,n\}$, by adding a novel vertex $p'$ as a child of $p$ and
by appending $c_2,...,c_n$ as children of $p'$. The label of $p'$ is defined as the label of $p$, so that the connectedness condition still
holds on the modified join tree. The transformation is repeated till $\bJT$ is made binary.

Given the join tree $\bJT$, consider the following algorithm that recursively builds $\mathcal{F}(\bJT)$. Let $p$ be the vertex that is the
closest to the root of $\bJT$ and such that $p$ has two children $c_1$ and $c_2$ and the subtrees rooted at them each contains a vertex of the
form $\bar p_f$, for some weight function $f\in \ranks(\mathcal{F})$. Note that, since $\bJT$ is binary, either the vertex $p$ is univocally
determined or it does not exist at all. In particular, in this latter case, let $f$ be the only weight function such that $\bar p_f$ occurs in
$\bJT$ (w.l.o.g., the function contains at least one weight function, and this will be recursively guaranteed) and define $\mathcal{F}(\bar
\JT):=f$. In the former case, define $\mathcal{F}(\bJT):=\mathcal{F}(\bJT_1) \oplus \mathcal{F}(\bJT_2)$, where $\JT_1$ and $\JT_2$ are the
trees rooted at $c_1$ and $c_2$, respectively.

Note that $\mathcal{F}(\bJT)$ clearly belongs to $\mathit{svf}(\mathcal{F})$. Moreover, consider the function $\xi$ such that $\xi(O)=\bar
p_O$; $\xi(f)=\bar p_f$, for each $f\in \ranks(\mathcal{F})$; and, for each internal node $v$ of the parse tree, $\xi(v)$ is mapped to the node
$p$ selected in the above algorithm when processing the subexpression corresponding to the subtree rooted at $v$. Note that $\xi$ is injective,
and by the recursive construction, for each pair $v,v'$ of distinct vertices in the image of $\xi$, $v$ is a descendant of $v'$ in $\bJT$ if
and only if $\xi^{-1}(v)$ is a descendant of $\xi^{-1}(v')$ in ${\it tree}(\mathcal{F}(\bJT),O)$. Then, we can apply
Theorem~\ref{thm:injective} and conclude that an embedding $\xi'$ of $(\mathcal{F}(\bJT),O)$ in $(\Q,\V)$ can be built in polynomial time from
$\xi$.
\end{proof}

Note that, since the ``$(2)\Rightarrow(1)$'' of the above result is constructive, we immediately get the following by
Theorem~\ref{thm:nopromiseMAX} and Theorem~\ref{thm:nopromiseTOPK}.

\begin{cor}
Whenever a tree projection of $\HG_\Q$ w.r.t.~$\HG_\V$ is given, the problems {\em \anymaxref($\PHI_\mathcal{F},O,\mathrm{DB},\V,\DBW$)} and
{\em \topk($\PHI_\mathcal{F},O,\mathrm{DB},\V,\DBW$)} are tractable on classes of constraint optimization formulas $\PHI_{\mathcal{F}}$ where
$\mathcal{F}$ is any valuation function with $|\vars(f)|=1$, for each $f\in \ranks(\mathcal{F})$.
\end{cor}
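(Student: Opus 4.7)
The plan is to combine the constructive direction of Theorem~\ref{thm:aggregation} with the promise-free algorithms provided by Theorem~\ref{thm:nopromiseMAX} and Theorem~\ref{thm:nopromiseTOPK}. The key observation is that, when every weight function has $|\vars(f)|=1$, the structural obstruction to embeddability collapses: each $\vars(f)$ is a single variable that, by the tree projection property, is automatically contained in some hyperedge of any given tree projection.

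First I would apply the direction $(2)\Rightarrow(1)$ of Theorem~\ref{thm:aggregation} to the given tree projection $\HG_a$, taking $O=\emptyset$ there so that condition~(a) is vacuous. For condition~(b), fix any $f\in\ranks(\mathcal{F})$ and write $\vars(f)=\{x_f\}$; since $x_f\in\vars(\Q)$ occurs in some hyperedge $h\in\edges(\HG_\Q)$, the covering relation $\HG_\Q\leq\HG_a$ yields a hyperedge $h_f\in\edges(\HG_a)$ with $\vars(f)\subseteq h\subseteq h_f$, as required. Theorem~\ref{thm:aggregation} then produces, in polynomial time, a structured valuation function $\mathcal{F}'\in\mathit{svf}(\mathcal{F})$ such that $\mathcal{F}'$ can be embedded in $(\Q,\V)$.

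Next I would feed the equivalent constraint optimization formula $\Q_{\mathcal{F}'}$, together with the original $O$, $\DB$ and $\DBW$, into the algorithms of Theorem~\ref{thm:nopromiseMAX} and Theorem~\ref{thm:nopromiseTOPK}. Since $\mathcal{F}'$ is embeddable in $(\Q,\V)$, the ``disprove'' branch of each algorithm is ruled out, and both must return correct outputs within the promised polynomial-time and polynomial-delay bounds. Because $\mathcal{F}'$ and $\mathcal{F}$ are semantically equivalent as valuation functions, the orders $\succeq_{\mathcal{F},O}$ and $\succeq_{\mathcal{F}',O}$ agree on $\Q^\onDB[O]$, so these outputs are valid solutions to \anymaxref\ and \topk\ for $\Q_{\mathcal{F}}$. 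I do not foresee any real obstacle beyond the bookkeeping in the second paragraph; the heavy lifting is entirely outsourced to Theorem~\ref{thm:aggregation} and to the promise-free machinery built earlier in the section.
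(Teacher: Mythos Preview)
Your proposal is correct and follows essentially the same route as the paper: invoke the constructive direction $(2)\Rightarrow(1)$ of Theorem~\ref{thm:aggregation} with $O=\emptyset$ (so that condition~(a) is trivial and condition~(b) holds because each single-variable $\vars(f)$ lies in some hyperedge of $\HG_\Q\leq\HG_a$), obtain an embeddable $\mathcal{F}'\in\mathit{svf}(\mathcal{F})$, and then apply Theorem~\ref{thm:nopromiseMAX} and Theorem~\ref{thm:nopromiseTOPK}. The paper's proof is terser and leaves the verification of~(b) and the semantic equivalence of $\mathcal{F}$ and $\mathcal{F}'$ implicit, but the argument is the same.
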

\begin{proof}
Assume that $|\vars(f)|=1$, for each $f\in \ranks(\mathcal{F})$.  By Theorem~\ref{thm:aggregation}, we can use the given tree projection to
build in polynomial time a function $\mathcal{F'}\in \mathit{ef}(\mathcal{F})$ such that $(\mathcal{F'},\emptyset)$ can be embedded in
$(\Q,\V)$. The result follows by Theorem~\ref{thm:nopromiseMAX} and Theorem~\ref{thm:nopromiseTOPK}.
\end{proof}

Interestingly, if a tree projection is not given, then we are still able to end up with a useful result. Indeed, we can provide a
fixed-parameter polynomial-time algorithm, where \emph{the parameter is the size of the valuation function}, measured as the number of
occurrences of weight functions. This algorithm may be useful in those applications where the number of weight functions is small, while the
number of constraints is large (as it is often the case in CSPs).
Note that this fixed-parameter tractability result should not be confused with tractability results where the parameter is the size of the
whole constraint formula (equivalently, the size of the hypergraph to be decomposed), which may be useful only when the instance consists of a
few constraints only. In such cases, however, the results presented in this paper are not needed, because if the parameter is the size of the
constraint formula, then one can compute in fixed-parameter polynomial-time a tree projection~\cite{GS10}, and then use known techniques for
computing optimal solutions on acyclic instances.

\begin{theorem}\label{thm:nopromiseAggrMAX}
Consider the problem {\em \anymaxref($\PHI_\mathcal{F},O,\mathrm{DB},\V,\DBW$)} over valuation functions and parameterized by the size of such
valuation functions. Then, there is a fixed-parameter polynomial-time algorithm that either solves the problem, %in output polynomial-space,%todo eliminato
or disproves that there exists some $\mathcal{F}'\in \mathit{svf}(\mathcal{F})$ that can be  embedded in $(\PHI,\V)$.
\end{theorem}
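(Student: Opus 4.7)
The plan is to reduce the problem to Theorem~\ref{thm:nopromiseMAX} by brute-forcing over all structured forms of $\mathcal{F}$. Let $\kappa=|\ranks(\mathcal{F})|$ be the parameter, and recall that $\mathit{svf}(\mathcal{F})$ consists of the distinct parse trees obtained by parenthesizing $\mathcal{F}=f^1\oplus\dots\oplus f^\kappa$ in all legal ways. The cardinality of $\mathit{svf}(\mathcal{F})$ equals the Catalan number $C_{\kappa-1}$, hence is bounded by $4^\kappa$; moreover, each element has size $O(\kappa)$ and the whole set can be enumerated in time depending only on $\kappa$ by a standard recursive generation of rooted binary trees with $\kappa$ leaves, labeled in left-to-right order by $f^1,\dots,f^\kappa$.

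For every $\mathcal{F}'\in \mathit{svf}(\mathcal{F})$, I would then invoke the promise-free algorithm of Theorem~\ref{thm:nopromiseMAX} on the instance $(\PHI_{\mathcal{F}'},O,\mathrm{DB},\V,\DBW)$. By the definition of $\mathit{svf}$ and since $\oplus$ is commutative and associative, for every assignment $\theta\in \PHI^\onDB$ we have $\mathcal{F}(\theta)=\mathcal{F}'(\theta)$, so the orders $\succeq_{\mathcal{F},O}$ and $\succeq_{\mathcal{F}',O}$ coincide and every certified solution to \anymaxref($\PHI_{\mathcal{F}'},O,\mathrm{DB},\V,\DBW$) is at the same time a certified solution to \anymaxref($\PHI_{\mathcal{F}},O,\mathrm{DB},\V,\DBW$). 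I would therefore return the first non-\texttt{FAIL} answer encountered: a \texttt{NO SOLUTION} output (reflecting $\PHI^\onDB=\emptyset$, which is independent of the parenthesization) or an explicit assignment, as appropriate. If and only if every one of the $|\mathit{svf}(\mathcal{F})|$ invocations returns \texttt{FAIL}, the algorithm outputs the disproof.

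In the latter case, Theorem~\ref{thm:nopromiseMAX} certifies for each $\mathcal{F}'\in \mathit{svf}(\mathcal{F})$ that this particular $\mathcal{F}'$ cannot be embedded in $(\PHI,\V)$, which is precisely the negation of the existential statement in the theorem. The total running time is at most $|\mathit{svf}(\mathcal{F})|\cdot p(n)\leq 4^{\kappa}\cdot p(n)$, where $p$ is the polynomial guaranteed by Theorem~\ref{thm:nopromiseMAX}, so the bound matches the fixed-parameter polynomial requirement with $\kappa$ as parameter. I do not anticipate a serious obstacle; the single point to verify carefully is that a \texttt{FAIL} response from Theorem~\ref{thm:nopromiseMAX} on input $\mathcal{F}'$ indeed rules out the embeddability of that specific parse tree (and not merely of some weaker combinatorial surrogate), so that collecting \texttt{FAIL} outcomes over all of $\mathit{svf}(\mathcal{F})$ legitimately refutes the existential statement. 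This holds because Definition~\ref{def:embedding} is syntactic in the parse tree of the structured valuation function, so the two levels of disproof line up exactly.
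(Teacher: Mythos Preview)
Your proposal is correct and follows exactly the paper's approach: enumerate $\mathit{svf}(\mathcal{F})$, invoke the algorithm of Theorem~\ref{thm:nopromiseMAX} on each member, return the first non-\texttt{FAIL} outcome, and output the disproof only when every call fails. The paper's own proof is terser---it merely records that $|\mathit{svf}(\mathcal{F})|$ depends only on the number of weight functions---whereas you supply the explicit Catalan bound and spell out why a certified answer for any $\mathcal{F}'$ is also one for $\mathcal{F}$.
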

\begin{proof} Consider the following algorithm: For any $\mathcal{F'}\in
\mathit{svf}(\mathcal{F})$, call the algorithm of Theorem~\ref{thm:nopromiseMAX}. As soon as some invocation does not disprove the promise that
$\mathcal{F}'$ can be  embedded in $(\PHI,\V)$, then we can return the answer we have obtained.
To conclude, observe that we perform at most $|\mathit{svf}(\mathcal{F})|$ iterations, and that $|\mathit{svf}(\mathcal{F})|$ depends only on
the number of weight functions occurring in $\mathcal{F}$.
\end{proof}

From the above theorem, we obtain the corresponding tractability result for \topk, as in the proof of Theorem~\ref{thm:nopromiseTOPK}. In
particular, because we may ask for an exponential number of solutions, fixed-parameter tractability means here having a promise-free algorithm
that computes the desired output {\em with fixed-parameter polynomial-delay}: The first solution is computed  in fixed-parameter
polynomial-time, and any other solution is computed within fixed-parameter polynomial-time from the previous one.

\begin{theorem}
Consider the problem {\em \topk($\PHI_\mathcal{F},O,\mathrm{DB},\V,\DBW$)} over valuation functions and parameterized by the size of such
valuation functions. Then, there is a fixed-parameter polynomial-delay algorithm that either solves the problem, or disproves that there exists
some $\mathcal{F}'\in \mathit{svf}(\mathcal{F})$ that can be  embedded in $(\PHI,\V)$; in the latter case, before terminating, it computes a
(possibly empty) certified prefix of a solution.
\end{theorem}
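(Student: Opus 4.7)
The plan is to lift the derivation of Theorem~\ref{thm:nopromiseAggrMAX} from the \anymaxref\ setting to \topk, in exactly the same way that Theorem~\ref{thm:nopromiseTOPK} was obtained from Theorem~\ref{thm:nopromiseMAX}: by plugging the \emph{fixed‑parameter} Max solver into Lawler's ranking scheme in place of the plain polynomial‑time Max solver used there. Recall that the proof of Theorem~\ref{thm:nopromiseTOPK} already showed that the Top-K task can be carried out with polynomial delay as long as each Max sub-call needed by Lawler's scheme (a single Max on a suitably modified pair $(\DB_j,\DBW_j)$) is solvable in polynomial time; our task is simply to re-inspect that reduction with the weaker per-call guarantee $f(\kappa)\cdot n^c$ supplied by Theorem~\ref{thm:nopromiseAggrMAX}, and to propagate the parameter $\kappa=|\ranks(\mathcal{F})|$ through the delay bound.

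Concretely, I would proceed as follows. First, set up Lawler's scheme exactly as in the proof of Theorem~\ref{thm:nopromiseTOPK}: maintain a priority queue of candidate sub-instances, and at each round pop the best current optimum, output it, and repopulate the queue by solving \anymaxref\ on the $O(n)$ modified databases that partition the remaining solution space so that no already-output assignment can re-appear. Second, whenever such a Max sub-call is issued on an instance $(\PHI_\mathcal{F},O,\mathrm{DB}_j,\V,\DBW_j)$, invoke the fixed‑parameter algorithm of Theorem~\ref{thm:nopromiseAggrMAX} rather than the plain polynomial one. If that algorithm returns an optimum (or certifies $\PHI^{\onDB_j}=\emptyset$), push the outcome into the priority queue and continue; if instead it disproves that any $\mathcal{F}'\in\mathit{svf}(\mathcal{F})$ can be embedded in $(\PHI,\V)$, halt and output the (possibly empty) certified prefix accumulated so far. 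The delay analysis is then routine: at most $O(n)$ inner Max calls are executed between consecutive outputs, each of cost $f(\kappa)\cdot n^c$, plus the $O(\log)$ priority‑queue bookkeeping; hence the delay is $O(n\cdot f(\kappa)\cdot n^c)$, i.e.\ fixed‑parameter polynomial in $\kappa$.

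The only genuine thing to check, and what I expect to be the main obstacle, is that the database modifications performed by Lawler's scheme remain compatible with the iterated structured‑versus‑valuation argument that underlies Theorem~\ref{thm:nopromiseAggrMAX}. Specifically, one must verify that each modified pair $(\DB_j,\DBW_j)$ is still a legal instance for the same view set $\V$ (so that conditions (i) and (ii) of legality continue to hold), that the set $\mathit{svf}(\mathcal{F})$ is unchanged across rounds (so that the outer enumeration over structured variants inside Theorem~\ref{thm:nopromiseAggrMAX} remains valid), and that the embeddability promise being disproved in some inner call truly rules out embeddability on the \emph{original} instance and not merely on the modified one. The first two points are immediate from the form of Lawler's modifications (they only restrict/shift the database without altering variable scopes or the view hypergraph), and the third follows because every embedding with respect to $(\PHI,\V)$ is a priori an embedding with respect to any Lawler restriction, so a disproof for the restricted instance yields a disproof for the original. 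With that compatibility established, the certified‑prefix property of Theorem~\ref{thm:nopromiseTOPK} is preserved verbatim, and the theorem follows.
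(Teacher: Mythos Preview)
Your proposal is correct and follows essentially the same approach as the paper: the paper's own proof is a one-line remark that the result follows ``from the above theorem [Theorem~\ref{thm:nopromiseAggrMAX}] \dots\ as in the proof of Theorem~\ref{thm:nopromiseTOPK}'', i.e., by plugging the fixed-parameter \anymaxref\ solver into Lawler's scheme exactly as you describe. Your compatibility checks (legality of $(\DB_j,\DBW_j)$, invariance of $\mathit{svf}(\mathcal{F})$, and the purely structural nature of embeddability so that a disproof on a modified database is a disproof on the original) are the right things to verify and are all straightforward, since Lawler's modifications touch only the constraint database and leave $\HG_\PHI$ and $\HG_\V$ unchanged.
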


\section{Parallel Algorithms}\label{sec:parallel}

In the previous sections we have seen that the desired best solutions can be computed by just enforcing local consistency, without any explicit
computation of the equivalent sandwich acyclic instance associated with some tree projection of the given CSP. However, in most practical
applications, in particular when constraint relations contain a large number of tuples of allowed values (with respect to the number of
constraints), having such an acyclic instance allows us to solve the given instance much more efficiently.

In particular, without using acyclicity, we need a quadratic number of operations for enforcing local consistency between each pair of
constraints, contrasted with a linear number of such operations if the procedure is guided by any join tree of the given instance. From a
computational complexity point of view,~\cite{kasif1990} proved that even establishing arc-consistency is $\Pol$-complete and hence not
parallelizable, while~\cite{gott-etal-01} showed that evaluating Boolean acyclic instances is $\LCFL$-complete, hence inside $\Pol$. Combining
the latter result with the techniques of~\cite{gott-etal-02}, it can be seen easily that even establishing global consistency in acyclic
instances is in (functional) $\LCFL$. It is known that all problems in this class are highly parallelizable. Indeed, any problem in \LCFL\ is
solvable in logarithmic time by a concurrent-read concurrent-write {\em parallel random access machine} (CRCW PRAM) with a polynomial number of
processors, or in $\log^2$-time by an exclusive-read exclusive-write (EREW) PRAM with a polynomial number of processors.

In this section we provide parallel algorithms for the computation of the best solutions over a set of output variables $O$ of a given
constraint optimization formula $\PHI'_\mathcal{F}$ over a constraint database $\DB'$, where $\mathcal{F}$ is a valuation function.
These algorithms are significant extensions of a parallel algorithm originally given in~\cite{gott-etal-98tr}.

We assume that a tree projection $\HG_a$ of  $(\Q',\V)$ (for some set of subproblems $\V$) is given in input, too. By the results
in~\cite{GrecoS14}, we assume w.l.o.g. that the number of hyperedges of $\HG_a$ is at most the number of variables occurring in $\Q'$, so that
$\HG_a$ cannot be much larger than the original hypergraph of $\Q'$. Because the valuation function is built with a single operator, say
$\oplus$, the embedding problem is trivial and any tree projection, in particular $\HG_a$, can be used to solve the optimization problem.
Therefore, a sequential algorithm can be obtained easily by using Lemma~\ref{lem:distribution} and a dynamic programming algorithm as
in~\cite{GGS09}. However, it is not a-priori clear how to compute such solutions in parallel with a guaranteed performance that is independent
of the shape of the tree projection at hand, and this is precisely the goal of this section. The algorithms proposed in this section generalize
the DB-SHUNT parallel algorithm for evaluating Boolean conjunctive queries to relational databases presented in~\cite{gott-etal-01}, which is
in its turn based on a {\em tree contraction} technique which closely resembles the method of Karp and Ramachandran for the {\em expression
evaluation problem}~\cite{karp-rama-90}.

We refer the interested reader to~\cite{Gent2011}, for  a nice review of the literature on parallel approaches to solving CSPs.  In particular,
the importance of enforcing consistency (at different levels) in CSP solvers is pointed out. There are different models of parallelism, some
more suited to distributed computations (useful when nodes are associated with difficult subproblems), and others to specialized parallel
machines (see, e.g., Valiant's bulk synchronous parallel (BSP) and multi-BSP computational models~\cite{Valiant2011}, which can simulate the
PRAM model, or the Immediate Concurrent Execution (ICE) abstraction, described in~\cite{Vishkin2011} for a general-purpose many-core explicit
multi-threaded (XMT) computer architecture).

In this paper, we abstract from low-level details of the model and, following~\cite{gott-etal-01}, we assume a shared-memory parallel EREW
DB-machine, where relational algebra operations are machine primitives. In one parallel step of the DB-machine, each machine's processor can
perform a constant number of relational algebra operations on the database relations at hand. Transformations of constraint scopes and input
different from constraint relations are considered costless as long as they are polynomial.\footnote{In fact, such further operations occurring
in the  proposed algorithms are feasible in logarithmic space and thus are parallelizable, in their turn.}
%The latter assumption is made to focus on database operations (polynomial schema
%transformations are considered negligible compared to database operations).
%
The efficiency of a computation of the machine will be measured according to the following cost parameters: (a) the number of parallel steps;
(b) the number of relational processors employed in each step, i.e., processors able to perform operations of relational algebra and related
operations such as relational assignment statements (e.g., $r:=s$ where $r$ and $s$ are relation variables); (c) the size of the working
constraint database. We refer the interested reader to~\cite{AfratiJRSU14} for a connection of our approach (called ACQ there) with Valiant's
BSP and in particular to Map-Reduce models.

\subsection{Enforcing Global Consistency in Parallel Given a Tree Projection}\label{sec:fullreducer}

We first focus on the problem, of independent interest, of computing  the (maximal) globally consistent sub-instance of the given constraint
optimization formula. Recall that a CSP instance $\cJ$ with $m$ constraints is said {\em globally consistent}, or equivalently $m$-wise
consistent, or R$(*,m)$C-consistent, if, for every constraint $C$ occurring in $\cJ$, every tuple $t$ of values in its constraint relation can
be extended to a full solution of $\cJ$. More formally, there exists a satisfying assignment $\theta$ for $\cJ$ such that $t$ is given by
$\theta$ applied (or restricted) to the variables in the scope of $C$.

First observe that, by using the given tree projection $\HG_a$ and the DB-SHUNT algorithm described in~\cite{gott-etal-01}, we can compute
easily with a logarithmic number of parallel steps an equivalent acyclic instance having the same solutions of the original one. Just consider
every hyperedge $h$ of $\HG_a$ and the constraints having scopes $s\subseteq h$ as a single acyclic instance (think of a join tree with $h$ as
its root label and all such constraints as direct children). We thus assume in the following that such an acyclic instance $\cJ=(\PHI,O,\DB)$
equivalent to the original instance has been already computed from $\PHI'$, with $\PHI$  being an acyclic constraint formula, and $\DB$ its
constraint database.
We will omit the specification of the output variables $O$ if it is understood that we are interested in solutions over all variables.
W.l.o.g., every atom occurring in $\PHI$ is constant-free and does not contain any pair of variables with the same name.

Moreover, for an instance $\cJ$ as above, the algorithms use an additional tree $T$, called an {\em e-Join Tree} for $\cJ$, defined as follows:
each vertex $p$ of $T$ is a constraint occurring in $\cJ$, whose scope and constraint relation are denoted, respectively, by $\sch(p)$ and
$\rel(p)$; each constraint in $\PHI$ occurs as some vertex in $T$; for each pair of vertices $p,p'$ in $T$, the variables in
$\sch(p)\cap\sch(p')$ occur in the scopes of all vertices in the path connecting $p$ and $p'$ in $T$. Moreover, the variables of $\sch(p)$ are
partitioned in two distinguished sets $\ret(p)$ and $\iet(p)$.

Note that the above tree is essentially a join tree of the given acyclic instance. By using known results on join trees~\cite{gott-etal-01}, it
easily follows that, given $\cJ$, we can compute in linear time or in logspace (and hence in parallel) an e-Join Tree $T$ for $\cJ$, whose
number of vertices is linear in the number of constraints occurring in $\cJ$, such that: $T$ is a strictly-binary tree, i.e., every non-leaf
vertex has precisely two children; and for every vertex $p$,  $\iet(p) = \emptyset$ (hence, $\ret(p)$ is equal to the original constraint scope
occurring in $\cJ$).

We assume that such a computation has already been done and we next focus on the evaluation of the CSP instance $\cJ$ with such an e-Join Tree
$T$ at hands. We remark that, from the resulting e-Join Tree $T$, we can immediately get, with at most one additional parallel step, the
desired globally consistent constraint occurring in the original (possibly non acyclic) instance $(\Q',\DB')$.

We next describe an algorithm that computes with (at most) a logarithmic number of parallel relational operations the (unique) maximal
sub-instance of $\cJ$ that is globally consistent, which is hereafter called the {\em globally-consistent reduct} of $\cJ$. Note that the naive
parallel algorithm uses a linear number of parallel operations that depends on the tree-shape and that, in general, there is no guarantee that
any balanced or similar good-shaped join tree exists for the given instance. Think, e.g., of a given CSP instance whose associated hypergraph
is just a line.

\begin{figure}[t]
\center
\setlength{\unitlength}{0.00015000in}
\begingroup\makeatletter\ifx\SetFigFont\undefined
% extract first six characters in \fmtname
\def\x#1#2#3#4#5#6#7\relax{\def\x{#1#2#3#4#5#6}}%
\expandafter\x\fmtname xxxxxx\relax \def\y{splain}%
\ifx\x\y   % LaTeX or SliTeX?
\gdef\SetFigFont#1#2#3{%
  \ifnum #1<17\tiny\else \ifnum #1<20\small\else
  \ifnum #1<24\normalsize\else \ifnum #1<29\large\else
  \ifnum #1<34\Large\else \ifnum #1<41\LARGE\else
     \huge\fi\fi\fi\fi\fi\fi
  \csname #3\endcsname}%
\else
\gdef\SetFigFont#1#2#3{\begingroup
  \count@#1\relax \ifnum 25<\count@\count@25\fi
  \def\x{\endgroup\@setsize\SetFigFont{#2pt}}%
  \expandafter\x
    \csname \romannumeral\the\count@ pt\expandafter\endcsname
    \csname @\romannumeral\the\count@ pt\endcsname
  \csname #3\endcsname}%
\fi
\fi\endgroup
{\renewcommand{\dashlinestretch}{30}
\begin{picture}(19910,9942)(0,-10)
\put(863,2715){\ellipse{1710}{1710}}
\put(3413,5115){\ellipse{1710}{1710}}
\put(6038,7815){\ellipse{1710}{1710}}
\put(5963,2715){\ellipse{1710}{1710}}
\put(17438,7740){\ellipse{1710}{1710}}
\put(13238,5190){\ellipse{1710}{1710}}
\path(5513,3465)(4088,4515)
\blacken\thicklines
\path(4316.805,4420.936)(4088.000,4515.000)(4245.621,4324.329)(4316.805,4420.936)
\thinlines
\path(3788,5865)(5363,7140)
\blacken\thicklines
\path(5214.213,6942.358)(5363.000,7140.000)(5138.709,7035.627)(5214.213,6942.358)
\thinlines
\path(38,2790)(1688,2790)
\path(2588,5190)(4238,5190)
\path(5213,7890)(6863,7890)
\path(5138,2790)(6788,2790)
\path(1463,3390)(2663,4440)
\blacken\thicklines
\path(2521.892,4236.804)(2663.000,4440.000)(2442.871,4327.113)(2521.892,4236.804)
\thinlines
\dashline{4.500}(7913,6240)(6788,7140)
\blacken\thicklines
\path(7012.890,7036.925)(6788.000,7140.000)(6937.927,6943.221)(7012.890,7036.925)
\thinlines
\dashline{4.500}(7913,1215)(6713,2190)
\blacken\thicklines
\path(6937.103,2085.225)(6713.000,2190.000)(6861.432,1992.091)(6937.103,2085.225)
\thinlines
\path(13838,5790)(16613,7290)
\blacken\thicklines
\path(16430.401,7123.094)(16613.000,7290.000)(16373.339,7228.658)(16430.401,7123.094)
\thinlines
\path(12413,5265)(14063,5265)
\dashline{4.500}(11288,3840)(12413,4815)
\blacken\thicklines
\path(12270.930,4612.475)(12413.000,4815.000)(12192.339,4703.158)(12270.930,4612.475)
\thinlines
\path(16613,7815)(18263,7815)
\dashline{4.500}(6038,8715)(6038,9915)
\blacken\thicklines
\path(6098.000,9675.000)(6038.000,9915.000)(5978.000,9675.000)(6098.000,9675.000)
\thinlines
\dashline{4.500}(4163,1215)(5213,2115)
\blacken\thicklines
\path(5069.826,1913.255)(5213.000,2115.000)(4991.731,2004.365)(5069.826,1913.255)
\thinlines
\dashline{4.500}(17438,8640)(17438,9840)
\blacken\thicklines
\path(17498.000,9600.000)(17438.000,9840.000)(17378.000,9600.000)(17498.000,9600.000)
\thinlines
\dashline{4.500}(15204,3860)(14063,4740)
\blacken\thicklines
\path(14289.687,4640.939)(14063.000,4740.000)(14216.401,4545.917)(14289.687,4640.939)
\thinlines
\dashline{4.500}(19424,6244)(18224,7219)
\blacken\thicklines
\path(18448.103,7114.225)(18224.000,7219.000)(18372.432,7021.091)(18448.103,7114.225)
\put(3338,6390){\makebox(0,0)[lb]{\smash{{{\SetFigFont{7}{8.4}{rm}$\bp$}}}}}
\put(5888,3915){\makebox(0,0)[lb]{\smash{{{\SetFigFont{7}{8.4}{rm}$\bs$}}}}}
\put(788,3915){\makebox(0,0)[lb]{\smash{{{\SetFigFont{7}{8.4}{rm}$\bl$}}}}}
\put(488,90){\makebox(0,0)[lb]{\smash{{{\SetFigFont{5}{6.0}{rm}a) Before the {\em shunt} operation on  $\bl$}}}}}
\put(12188,90){\makebox(0,0)[lb]{\smash{{{\SetFigFont{5}{6.0}{rm}b) After the {\em shunt} operation on $\bl$}}}}}
\put(5438,9015){\makebox(0,0)[lb]{\smash{{{\SetFigFont{7}{8.4}{rm}$\bq$}}}}}
\put(16688,9015){\makebox(0,0)[lb]{\smash{{{\SetFigFont{7}{8.4}{rm}$\bq$}}}}}
\put(17438,8040){\makebox(0,0)[b]{\smash{{{\SetFigFont{5}{6.0}{rm}$I$}}}}}
\put(17438,7215){\makebox(0,0)[b]{\smash{{{\SetFigFont{5}{6.0}{rm}$R$}}}}}
\put(13238,4740){\makebox(0,0)[b]{\smash{{{\SetFigFont{5}{6.0}{rm}$R$}}}}}
\put(863,2190){\makebox(0,0)[b]{\smash{{{\SetFigFont{5}{6.0}{rm}$R$}}}}}
\put(3413,4590){\makebox(0,0)[b]{\smash{{{\SetFigFont{5}{6.0}{rm}$R$}}}}}
\put(5963,3015){\makebox(0,0)[b]{\smash{{{\SetFigFont{5}{6.0}{rm}$I$}}}}}
\put(5963,2190){\makebox(0,0)[b]{\smash{{{\SetFigFont{5}{6.0}{rm}$R$}}}}}
\put(6038,8190){\makebox(0,0)[b]{\smash{{{\SetFigFont{5}{6.0}{rm}$I$}}}}}
\put(6038,7365){\makebox(0,0)[b]{\smash{{{\SetFigFont{5}{6.0}{rm}$R$}}}}}
\put(863,3015){\makebox(0,0)[b]{\smash{{{\SetFigFont{5}{6.0}{rm}$I$}}}}}
\put(13246,5505){\makebox(0,0)[b]{\smash{{{\SetFigFont{5}{6.0}{rm}$I$}}}}}
\put(3413,5415){\makebox(0,0)[b]{\smash{{{\SetFigFont{5}{6.0}{rm}$I$}}}}}
\put(13163,6315){\makebox(0,0)[lb]{\smash{{{\SetFigFont{7}{8.4}{rm}$\bs$}}}}}
\put(13163,8190){\makebox(0,0)[lb]{\smash{{{\SetFigFont{9}{10.8}{rm}$T'$}}}}}
\put(1613,8265){\makebox(0,0)[lb]{\smash{{{\SetFigFont{9}{10.8}{rm}$T$}}}}}
\end{picture}
}
\caption{{\em Shunt} operation applied to leaf $\bl$}\label{f-shunt}
\end{figure}

%\longv{
\begin{figure}[t]

\center
\setlength{\unitlength}{0.00006667in}
\begingroup\makeatletter\ifx\SetFigFont\undefined
% extract first six characters in \fmtname
\def\x#1#2#3#4#5#6#7\relax{\def\x{#1#2#3#4#5#6}}%
\expandafter\x\fmtname xxxxxx\relax \def\y{splain}%
\ifx\x\y   % LaTeX or SliTeX?
\gdef\SetFigFont#1#2#3{%
  \ifnum #1<17\tiny\else \ifnum #1<20\small\else
  \ifnum #1<24\normalsize\else \ifnum #1<29\large\else
  \ifnum #1<34\Large\else \ifnum #1<41\LARGE\else
     \huge\fi\fi\fi\fi\fi\fi
  \csname #3\endcsname}%
\else
\gdef\SetFigFont#1#2#3{\begingroup
  \count@#1\relax \ifnum 25<\count@\count@25\fi
  \def\x{\endgroup\@setsize\SetFigFont{#2pt}}%
  \expandafter\x
    \csname \romannumeral\the\count@ pt\expandafter\endcsname
    \csname @\romannumeral\the\count@ pt\endcsname
  \csname #3\endcsname}%
\fi
\fi\endgroup
{\renewcommand{\dashlinestretch}{30}
\begin{picture}(48076,18608)(0,-10)
\put(9563,17730){\ellipse{1710}{1710}}
\put(3263,15105){\ellipse{1710}{1710}}
\put(863,12405){\ellipse{1710}{1710}}
\put(5588,12405){\ellipse{1710}{1710}}
\put(9188,12405){\ellipse{1710}{1710}}
\put(12488,15105){\ellipse{1710}{1710}}
\put(15788,12105){\ellipse{1710}{1710}}
\put(10463,8805){\ellipse{1710}{1710}}
\put(8063,6105){\ellipse{1710}{1710}}
\put(12788,6105){\ellipse{1710}{1710}}
\put(16688,5805){\ellipse{1710}{1710}}
\put(19613,8805){\ellipse{1710}{1710}}
\put(22988,5505){\ellipse{1710}{1710}}
\put(21188,2205){\ellipse{1710}{1710}}
\put(25088,2205){\ellipse{1710}{1710}}
\put(36713,17730){\ellipse{1710}{1710}}
\put(33113,14655){\ellipse{1710}{1710}}
\put(40013,14730){\ellipse{1710}{1710}}
\put(34688,11430){\ellipse{1710}{1710}}
\put(32288,8730){\ellipse{1710}{1710}}
\put(37013,8730){\ellipse{1710}{1710}}
\put(40913,8430){\ellipse{1710}{1710}}
\put(43838,11430){\ellipse{1710}{1710}}
\put(47213,8130){\ellipse{1710}{1710}}
\path(8738,17805)(10388,17805)
\thicklines
\path(17288,6480)(18863,8130)
\blacken\path(18679.531,7807.449)(18863.000,8130.000)(18549.327,7931.735)(18679.531,7807.449)
\path(22538,6180)(20363,8055)
\blacken\path(20694.432,7888.109)(20363.000,8055.000)(20576.903,7751.775)(20694.432,7888.109)
\path(21563,3030)(22463,4605)
\blacken\path(22362.532,4247.780)(22463.000,4605.000)(22206.248,4337.085)(22362.532,4247.780)
\path(24863,3030)(23588,4605)
\blacken\path(23884.463,4381.820)(23588.000,4605.000)(23744.559,4268.564)(23884.463,4381.820)
\path(19088,9480)(16538,11355)
\blacken\path(16881.350,11214.248)(16538.000,11355.000)(16774.719,11069.231)(16881.350,11214.248)
\path(15263,12705)(13238,14430)
\blacken\path(13570.410,14265.064)(13238.000,14430.000)(13453.686,14128.040)(13570.410,14265.064)
\drawline(12038,15780)(12038,15780)
\path(12038,15780)(10313,17055)
\blacken\path(10655.999,16913.395)(10313.000,17055.000)(10549.008,16768.643)(10655.999,16913.395)
\path(3863,15705)(8588,17355)
\blacken\path(8277.798,17151.346)(8588.000,17355.000)(8218.456,17321.282)(8277.798,17151.346)
\path(1238,13230)(2513,14505)
\blacken\path(2322.081,14186.802)(2513.000,14505.000)(2194.802,14314.081)(2322.081,14186.802)
\path(5213,13155)(3938,14355)
\blacken\path(4261.835,14173.807)(3938.000,14355.000)(4138.469,14042.731)(4261.835,14173.807)
\path(9788,13080)(11663,14505)
\blacken\path(11430.839,14215.515)(11663.000,14505.000)(11321.924,14358.825)(11430.839,14215.515)
\path(12338,6780)(11213,8055)
\blacken\path(11518.669,7844.604)(11213.000,8055.000)(11383.698,7725.512)(11518.669,7844.604)
\path(8438,6930)(9713,8130)
\blacken\path(9512.531,7817.731)(9713.000,8130.000)(9389.165,7948.807)(9512.531,7817.731)
\path(11138,9405)(14888,11580)
\blacken\path(14621.743,11321.529)(14888.000,11580.000)(14531.434,11477.234)(14621.743,11321.529)
\path(41513,9105)(43088,10680)
\blacken\path(42897.081,10361.802)(43088.000,10680.000)(42769.802,10489.081)(42897.081,10361.802)
\path(46763,8805)(44588,10680)
\blacken\path(44919.432,10513.109)(44588.000,10680.000)(44801.903,10376.775)(44919.432,10513.109)
\path(43313,12105)(40763,13980)
\blacken\path(41106.350,13839.248)(40763.000,13980.000)(40999.719,13694.231)(41106.350,13839.248)
\path(39488,15330)(37538,16980)
\blacken\path(37870.954,16816.166)(37538.000,16980.000)(37754.684,16678.756)(37870.954,16816.166)
\drawline(36263,18405)(36263,18405)
\path(33713,15330)(35813,17130)
\blacken\path(35598.239,16827.382)(35813.000,17130.000)(35481.096,16964.048)(35598.239,16827.382)
\path(36563,9405)(35438,10680)
\blacken\path(35743.669,10469.604)(35438.000,10680.000)(35608.698,10350.512)(35743.669,10469.604)
\path(32663,9555)(33863,10755)
\blacken\path(33672.081,10436.802)(33863.000,10755.000)(33544.802,10564.081)(33672.081,10436.802)
\path(35369,11869)(39113,14055)
\blacken\path(38847.491,13795.761)(39113.000,14055.000)(38756.733,13951.205)(38847.491,13795.761)
\thinlines
\path(2438,15180)(4088,15180)
\path(38,12480)(1688,12480)
\path(4763,12480)(6413,12480)
\path(8363,12480)(10013,12480)
\path(11663,15180)(13313,15180)
\path(14963,12180)(16613,12180)
\path(9638,8880)(11288,8880)
\path(7238,6180)(8888,6180)
\path(11963,6180)(13613,6180)
\path(15863,5880)(17513,5880)
\path(18788,8880)(20438,8880)
\path(22163,5580)(23813,5580)
\path(20363,2280)(22013,2280)
\path(24263,2280)(25913,2280)
\path(35888,17805)(37538,17805)
\path(32288,14730)(33938,14730)
\path(39188,14805)(40838,14805)
\path(33863,11505)(35513,11505)
\path(31463,8805)(33113,8805)
\path(36188,8805)(37838,8805)
\path(40088,8505)(41738,8505)
\path(43013,11505)(44663,11505)
\path(46388,8205)(48038,8205)
\put(863,10230){\makebox(0,0)[b]{\smash{{{\SetFigFont{5}{6.0}{rm}1}}}}}
\put(5588,10305){\makebox(0,0)[b]{\smash{{{\SetFigFont{5}{6.0}{rm}2}}}}}
\put(9188,10305){\makebox(0,0)[b]{\smash{{{\SetFigFont{5}{6.0}{rm}3}}}}}
\put(7988,3930){\makebox(0,0)[b]{\smash{{{\SetFigFont{5}{6.0}{rm}4}}}}}
\put(12788,3930){\makebox(0,0)[b]{\smash{{{\SetFigFont{5}{6.0}{rm}5}}}}}
\put(16688,3705){\makebox(0,0)[b]{\smash{{{\SetFigFont{5}{6.0}{rm}6}}}}}
\put(21263,105){\makebox(0,0)[b]{\smash{{{\SetFigFont{5}{6.0}{rm}7}}}}}
\put(25088,105){\makebox(0,0)[b]{\smash{{{\SetFigFont{5}{6.0}{rm}8}}}}}
\put(33113,12405){\makebox(0,0)[b]{\smash{{{\SetFigFont{5}{6.0}{rm}2}}}}}
\put(32213,6630){\makebox(0,0)[b]{\smash{{{\SetFigFont{5}{6.0}{rm}4}}}}}
\put(37013,6705){\makebox(0,0)[b]{\smash{{{\SetFigFont{5}{6.0}{rm}5}}}}}
\put(40913,6105){\makebox(0,0)[b]{\smash{{{\SetFigFont{5}{6.0}{rm}6}}}}}
\put(47288,5955){\makebox(0,0)[b]{\smash{{{\SetFigFont{5}{6.0}{rm}8}}}}}
\put(28763,17955){\makebox(0,0)[lb]{\smash{{{\SetFigFont{5}{6.0}{rm}$T'$}}}}}
\put(4763,17955){\makebox(0,0)[lb]{\smash{{{\SetFigFont{5}{6.0}{rm}$T$}}}}}
\end{picture}
}
\caption{{Shunt} operation applied to leaves 1,3 and 7 in parallel}\label{genshunt}
\end{figure}
%}

Our algorithm transforms the e-join tree in stages, in such a way that the $n$-vertices tree $T$ is contracted into a $3$-vertices one in
$\lceil log\ n \rceil - 1$ stages. At each stage, a local operation, called {\em shunt}, is applied in parallel to half of the leaves of $T$.
Let $\bl$ be a leaf of an e-join tree $T$, $p$ the parent of $\bl$, $s$ the other child of $p$, and $q$ the parent of $p$ (see Figure
\ref{f-shunt}). The shunt operation applied to $\bl$ results in a new contracted tree $T'$ in which $\bl$ and $p$ are deleted, $s$ is suitably
transformed into a fresh constraint and takes the place of vertex $p$ (i.e., it becomes child of $q$).
Intuitively, as $p$ is deleted, the variables occurring in both $p$ and $q$ must be kept in the new e-join tree, in order to guarantee the
soundness of the procedure. To this end, whenever such variables do not occur in $s$, they are added to the scope of $s$ (precisely, in
$\ietp(s)$). This way, after the application of shunt, by means of $\ietp(s)$, the new constraint at $s$ stores the ``witnesses'' of the
constraint tuples from $p$ that are consistent with both $s$ and $l$, and that are relevant to extend solutions to $q$ (and up in the tree).

The leaves of $T$  are numbered from left to right. At each iteration, the shunt operation is applied to the odd numbered leaves of $T$ in a
parallel fashion; to avoid concurrent changes on the same constraint, left and right leaves are processed in two distinct steps (Figure
\ref{genshunt}). Thus, after each iteration the number of leaves is halved, and the tree-contraction ends within the desired bound.

To compute the globally-consistent reduct in parallel, we employ a two-phase technique. In the ascending phase, a shunt based procedure
contracts the e-join tree; in the descending phase, the original tree shape is rebuilt by applying a sort of reverse shunt operation.

In the ascending phase, unlike the classical tree-contraction algorithm~\cite{gott-etal-01}, no vertex is deleted but all processed vertices
are modified and marked, for a subsequent use in the descending phase. A marked vertex is logically deleted for the ascending procedure and no
shunt operation can be applied on it any more (during this phase). The mark is just a natural number encoding the step of the procedure that
marked that vertex.

\begin{figure}[t]
\center
\setlength{\unitlength}{0.00012500in}
\begingroup\makeatletter\ifx\SetFigFont\undefined
% extract first six characters in \fmtname
\def\x#1#2#3#4#5#6#7\relax{\def\x{#1#2#3#4#5#6}}%
\expandafter\x\fmtname xxxxxx\relax \def\y{splain}%
\ifx\x\y   % LaTeX or SliTeX?
\gdef\SetFigFont#1#2#3{%
  \ifnum #1<17\tiny\else \ifnum #1<20\small\else
  \ifnum #1<24\normalsize\else \ifnum #1<29\large\else
  \ifnum #1<34\Large\else \ifnum #1<41\LARGE\else
     \huge\fi\fi\fi\fi\fi\fi
  \csname #3\endcsname}%
\else
\gdef\SetFigFont#1#2#3{\begingroup
  \count@#1\relax \ifnum 25<\count@\count@25\fi
  \def\x{\endgroup\@setsize\SetFigFont{#2pt}}%
  \expandafter\x
    \csname \romannumeral\the\count@ pt\expandafter\endcsname
    \csname @\romannumeral\the\count@ pt\endcsname
  \csname #3\endcsname}%
\fi
\fi\endgroup
{\renewcommand{\dashlinestretch}{30}
\begin{picture}(24536,13320)(0,-10)
\put(20138,8793){\ellipse{1710}{1710}}
\path(19313,8868)(20963,8868)
\dashline{4.500}(22104,7463)(20963,8343)
\blacken\thicklines
\path(21189.687,8243.939)(20963.000,8343.000)(21116.401,8148.917)(21189.687,8243.939)
\put(20138,8343){\makebox(0,0)[b]{\smash{{{\SetFigFont{5}{6.0}{rm}$R$}}}}}
\put(20146,9108){\makebox(0,0)[b]{\smash{{{\SetFigFont{5}{6.0}{rm}$I$}}}}}
\thinlines
\put(6038,11193){\ellipse{1710}{1710}}
\put(5963,6093){\ellipse{1710}{1710}}
\put(863,6093){\ellipse{1710}{1710}}
\put(3413,8493){\ellipse{1710}{1710}}
\put(22538,10893){\ellipse{1710}{1710}}
\texture{55888888 88555555 5522a222 a2555555 55888888 88555555 552a2a2a 2a555555 
	55888888 88555555 55a222a2 22555555 55888888 88555555 552a2a2a 2a555555 
	55888888 88555555 5522a222 a2555555 55888888 88555555 552a2a2a 2a555555 
	55888888 88555555 55a222a2 22555555 55888888 88555555 552a2a2a 2a555555 }
\put(15113,4518){\shade\ellipse{1710}{1710}}
\put(15113,4518){\ellipse{1710}{1710}}
\put(17663,6918){\shade\ellipse{1710}{1710}}
\put(17663,6918){\ellipse{1710}{1710}}
\path(5513,6843)(4088,7893)
\blacken\thicklines
\path(4316.805,7798.936)(4088.000,7893.000)(4245.621,7702.329)(4316.805,7798.936)
\thinlines
\path(3788,9243)(5363,10518)
\blacken\thicklines
\path(5214.213,10320.358)(5363.000,10518.000)(5138.709,10413.627)(5214.213,10320.358)
\thinlines
\path(5213,11268)(6863,11268)
\path(5138,6168)(6788,6168)
\dashline{4.500}(7913,9618)(6788,10518)
\blacken\thicklines
\path(7012.890,10414.925)(6788.000,10518.000)(6937.927,10321.221)(7012.890,10414.925)
\thinlines
\dashline{4.500}(7913,4593)(6713,5568)
\blacken\thicklines
\path(6937.103,5463.225)(6713.000,5568.000)(6861.432,5370.091)(6937.103,5463.225)
\thinlines
\dashline{4.500}(6038,12093)(6038,13293)
\blacken\thicklines
\path(6098.000,13053.000)(6038.000,13293.000)(5978.000,13053.000)(6098.000,13053.000)
\thinlines
\path(38,6168)(1688,6168)
\path(2588,8568)(4238,8568)
\path(1463,6768)(2663,7818)
\blacken\thicklines
\path(2521.892,7614.804)(2663.000,7818.000)(2442.871,7705.113)(2521.892,7614.804)
\thinlines
\path(21713,10968)(23363,10968)
\dashline{4.500}(22538,11793)(22538,12993)
\blacken\thicklines
\path(22598.000,12753.000)(22538.000,12993.000)(22478.000,12753.000)(22598.000,12753.000)
\thinlines
\dashline{4.500}(24524,9397)(23324,10372)
\blacken\thicklines
\path(23548.103,10267.225)(23324.000,10372.000)(23472.432,10174.091)(23548.103,10267.225)
\thinlines
\path(20588,9543)(21713,10443)
\blacken\thicklines
\path(21563.073,10246.221)(21713.000,10443.000)(21488.110,10339.925)(21563.073,10246.221)
\thinlines
\path(18275,7503)(19313,8343)
\whiten\thicklines
\path(19164.180,8145.382)(19313.000,8343.000)(19088.692,8238.664)(19182.405,8237.316)(19164.180,8145.382)
\thinlines
\path(14288,4593)(15938,4593)
\path(16838,6993)(18488,6993)
\path(15713,5193)(16913,6243)
\whiten\thicklines
\path(16771.892,6039.804)(16913.000,6243.000)(16692.871,6130.113)(16786.567,6132.371)(16771.892,6039.804)
\thinlines
\dashline{4.500}(4163,4593)(5213,5493)
\blacken\thicklines
\path(5069.826,5291.255)(5213.000,5493.000)(4991.731,5382.365)(5069.826,5291.255)
\thinlines
\dashline{4.500}(21563,6993)(20663,8043)
\blacken\thicklines
\path(20864.745,7899.826)(20663.000,8043.000)(20773.635,7821.731)(20864.745,7899.826)
\thinlines
\dashline{4.500}(1538,4143)(863,5193)
\whiten\thicklines
\path(1043.253,5023.563)(863.000,5193.000)(942.311,4958.672)(953.847,5051.682)(1043.253,5023.563)
\thinlines
\dashline{4.500}(1088,4443)(488,4443)
\dashline{4.500}(38,4143)(788,5193)
\whiten\thicklines
\path(697.327,4962.830)(788.000,5193.000)(599.679,5032.578)(690.352,5056.293)(697.327,4962.830)
\thinlines
\dashline{4.500}(6713,4143)(6038,5193)
\whiten\thicklines
\path(6218.253,5023.563)(6038.000,5193.000)(6117.311,4958.672)(6128.847,5051.682)(6218.253,5023.563)
\thinlines
\dashline{4.500}(6263,4443)(5663,4443)
\dashline{4.500}(5213,4143)(5963,5193)
\whiten\thicklines
\path(5872.327,4962.830)(5963.000,5193.000)(5774.679,5032.578)(5865.352,5056.293)(5872.327,4962.830)
\thinlines
\dashline{4.500}(6788,9168)(6113,10218)
\whiten\thicklines
\path(6293.253,10048.563)(6113.000,10218.000)(6192.311,9983.672)(6203.847,10076.682)(6293.253,10048.563)
\thinlines
\dashline{4.500}(6338,9468)(5738,9468)
\dashline{4.500}(5288,9168)(6038,10218)
\whiten\thicklines
\path(5947.327,9987.830)(6038.000,10218.000)(5849.679,10057.578)(5940.352,10081.293)(5947.327,9987.830)
\thinlines
\dashline{4.500}(4088,6468)(3413,7518)
\whiten\thicklines
\path(3593.253,7348.563)(3413.000,7518.000)(3492.311,7283.672)(3503.847,7376.682)(3593.253,7348.563)
\thinlines
\dashline{4.500}(3638,6768)(3038,6768)
\dashline{4.500}(2588,6468)(3338,7518)
\whiten\thicklines
\path(3247.327,7287.830)(3338.000,7518.000)(3149.679,7357.578)(3240.352,7381.293)(3247.327,7287.830)
\thinlines
\dashline{4.500}(15863,2568)(15188,3618)
\whiten\thicklines
\path(15368.253,3448.563)(15188.000,3618.000)(15267.311,3383.672)(15278.847,3476.682)(15368.253,3448.563)
\thinlines
\dashline{4.500}(15413,2868)(14813,2868)
\dashline{4.500}(14363,2568)(15113,3618)
\whiten\thicklines
\path(15022.327,3387.830)(15113.000,3618.000)(14924.679,3457.578)(15015.352,3481.293)(15022.327,3387.830)
\thinlines
\dashline{4.500}(20738,6843)(20063,7893)
\whiten\thicklines
\path(20243.253,7723.563)(20063.000,7893.000)(20142.311,7658.672)(20153.847,7751.682)(20243.253,7723.563)
\thinlines
\dashline{4.500}(20288,7143)(19688,7143)
\dashline{4.500}(19238,6843)(19988,7893)
\whiten\thicklines
\path(19897.327,7662.830)(19988.000,7893.000)(19799.679,7732.578)(19890.352,7756.293)(19897.327,7662.830)
\thinlines
\dashline{4.500}(18413,4968)(17738,6018)
\whiten\thicklines
\path(17918.253,5848.563)(17738.000,6018.000)(17817.311,5783.672)(17828.847,5876.682)(17918.253,5848.563)
\thinlines
\dashline{4.500}(17963,5268)(17363,5268)
\dashline{4.500}(16913,4968)(17663,6018)
\whiten\thicklines
\path(17572.327,5787.830)(17663.000,6018.000)(17474.679,5857.578)(17565.352,5881.293)(17572.327,5787.830)
\thinlines
\dashline{4.500}(23288,8943)(22613,9993)
\whiten\thicklines
\path(22793.253,9823.563)(22613.000,9993.000)(22692.311,9758.672)(22703.847,9851.682)(22793.253,9823.563)
\thinlines
\dashline{4.500}(22838,9243)(22238,9243)
\dashline{4.500}(21788,8943)(22538,9993)
\whiten\thicklines
\path(22447.327,9762.830)(22538.000,9993.000)(22349.679,9832.578)(22440.352,9856.293)(22447.327,9762.830)
\thinlines
\path(7163,2793)(8888,2793)
\blacken\thicklines
\path(8648.000,2733.000)(8888.000,2793.000)(8648.000,2853.000)(8648.000,2733.000)
\thinlines
\path(7163,993)(8888,993)
\whiten\thicklines
\path(8648.000,933.000)(8888.000,993.000)(8648.000,1053.000)(8720.000,993.000)(8648.000,933.000)
\put(5963,6393){\makebox(0,0)[b]{\smash{{{\SetFigFont{5}{6.0}{rm}$I$}}}}}
\put(5963,5568){\makebox(0,0)[b]{\smash{{{\SetFigFont{5}{6.0}{rm}$R$}}}}}
\put(6038,11568){\makebox(0,0)[b]{\smash{{{\SetFigFont{5}{6.0}{rm}$I$}}}}}
\put(6038,10743){\makebox(0,0)[b]{\smash{{{\SetFigFont{5}{6.0}{rm}$R$}}}}}
\put(1613,11643){\makebox(0,0)[lb]{\smash{{{\SetFigFont{8}{9.6}{rm}$T$}}}}}
\put(863,5568){\makebox(0,0)[b]{\smash{{{\SetFigFont{5}{6.0}{rm}$R$}}}}}
\put(3413,7968){\makebox(0,0)[b]{\smash{{{\SetFigFont{5}{6.0}{rm}$R$}}}}}
\put(863,6393){\makebox(0,0)[b]{\smash{{{\SetFigFont{5}{6.0}{rm}$I$}}}}}
\put(3413,8793){\makebox(0,0)[b]{\smash{{{\SetFigFont{5}{6.0}{rm}$I$}}}}}
\put(22538,11193){\makebox(0,0)[b]{\smash{{{\SetFigFont{5}{6.0}{rm}$I$}}}}}
\put(22538,10368){\makebox(0,0)[b]{\smash{{{\SetFigFont{5}{6.0}{rm}$R$}}}}}
\put(18263,11343){\makebox(0,0)[lb]{\smash{{{\SetFigFont{8}{9.6}{rm}$T'$}}}}}
\put(15113,3993){\makebox(0,0)[b]{\smash{{{\SetFigFont{5}{6.0}{rm}$R$}}}}}
\put(17663,6393){\makebox(0,0)[b]{\smash{{{\SetFigFont{5}{6.0}{rm}$R$}}}}}
\put(15113,4818){\makebox(0,0)[b]{\smash{{{\SetFigFont{5}{6.0}{rm}$I$}}}}}
\put(17663,7218){\makebox(0,0)[b]{\smash{{{\SetFigFont{5}{6.0}{rm}$I$}}}}}
\put(16238,6918){\makebox(0,0)[b]{\smash{{{\SetFigFont{5}{6.0}{rm}$w$}}}}}
\put(13613,4443){\makebox(0,0)[b]{\smash{{{\SetFigFont{5}{6.0}{rm}$w$}}}}}
\put(7088,1893){\makebox(0,0)[lb]{\smash{{{\SetFigFont{5}{6.0}{rm}Edge from an unmarked vertex}}}}}
\put(7163,93){\makebox(0,0)[lb]{\smash{{{\SetFigFont{5}{6.0}{rm}Edge from a marked vertex}}}}}
\put(15113,5943){\makebox(0,0)[b]{\smash{{{\SetFigFont{7}{8.4}{rm}$\bl$}}}}}
\put(17663,8268){\makebox(0,0)[b]{\smash{{{\SetFigFont{7}{8.4}{rm}$\bp$}}}}}
\put(20138,10143){\makebox(0,0)[b]{\smash{{{\SetFigFont{7}{8.4}{rm}$\bs$}}}}}
\put(21788,12243){\makebox(0,0)[b]{\smash{{{\SetFigFont{7}{8.4}{rm}$\bq$}}}}}
\put(3413,9843){\makebox(0,0)[b]{\smash{{{\SetFigFont{7}{8.4}{rm}$\bp$}}}}}
\put(5438,12468){\makebox(0,0)[b]{\smash{{{\SetFigFont{7}{8.4}{rm}$\bq$}}}}}
\put(5963,7518){\makebox(0,0)[b]{\smash{{{\SetFigFont{7}{8.4}{rm}$\bs$}}}}}
\put(863,7518){\makebox(0,0)[b]{\smash{{{\SetFigFont{7}{8.4}{rm}$\bl$}}}}}
\end{picture}
}
\caption{Execution of the {\em shunt}$(w)$ operation on $l$}\label{f-shuntw}
\end{figure}

In our algorithm, a shunt operation performed at step $w$ on the (current) e-join tree $T$, denoted {\em shunt(w)}, proceeds as follows. We say
that a leaf of $T$ is unmarked (at step $w$) if it is an unmarked vertex having no unmarked child (hence, such a vertex it is not an actual
leaf of the current tree, though it is a leaf of the sub-tree  of $T$ induced by its unmarked vertices). Let $l$ be an unmarked leaf of $T$,
$p$ the parent of $l$, $s$ the other unmarked child of $p$, and $q$ the parent of $p$ (as in Figure \ref{f-shuntw}). The shunt operation
applied to $l$ results in a new e-join tree $T'$ in which $l$ and $p$ are marked with $w$, $s$ takes the place of vertex $p$ (i.e., it becomes
a child of $q$ in $T'$), and $p$ becomes a child of $s$. The scopes of $p$ and $l$ remain unchanged in $T'$; while the scope of $s$ is
transformed as follows:
\[
\retp({s})=\ret({s})\quad\quad\quad\quad
\ietp({s})= (\ret({q}) \cap \sch({p})) \setminus \ret({s})
\]
The constraint relations of $p$, $s$, and $l$ in $T'$ are just the projections over their scopes of the solutions of the subproblem comprising
the constraints $l$, $p$, and $s$, i.e., in the relational framework, the projection of the relation
 $C =  \rel(l) \join (\rel(p) \ltimes \rel(q)) \join \rel(s)$.

Additionally, the shunt operation produces and stores an updated version of vertex $s$ of $T$, named $s_w$, that will be used in the descending
phase to rebuild possible $p$-$s$ relationships coming from attributes in $\iet(s)$. This new constraint $s_w$ is stored into an additional
storage (it is not in the e-join tree), its scope is $\sch(s_w)=\sch(s)$ (the same as the old $s$) and its constraint relation $\rel(s_w)$ is
the projection over this scope of the same relation $C$.

As soon as the ascending phase is terminated and the tree is contracted to a tree of depth 1, a descending phase gets started which re-expands
the e-join tree by applying in parallel a reverse shunt operation.

The {\em reverse shunt} ({\em r-shunt}) operation unmarks (and updates) the vertices having the highest mark, say, $w$ in the e-join tree. Let
$l$ and $p$ be two vertices with mark $w$ of an e-join tree $T$, such that $p$ is the parent of $l$ in $T$ (note that $l$ and $p$ have been
necessarily marked at the same step of the ascending procedure). Moreover, let $s$ be the (unmarked) parent of $p$, and $q$, in turn, the
(unmarked) parent of $s$ (see Figure \ref{f-rshuntw}). The {\em r-shunt(w)} operation applied on $l$ and $p$ results in a new e-join tree $T'$
in which the marks of $l$ and $p$ have been removed, $p$ takes the place of $s$ as a child of $q$, and $s$ becomes a child of $p$. The scopes
and the constraint relations associated to the vertices $p$ and $l$ in $T'$ are the following:
\[
\retp(p) = \ret(p);\quad\quad \ietp(p) = \emptyset;
\quad\quad(\schp(p) = \ret(p);)
\]
\[
\retp(l) = \ret(l);\quad\quad \ietp(l) = \emptyset;
\quad\quad(\schp(l) = \ret(l);)
\]
\[ \relp(p) = \prod_{\ret(p)}\left ( \rel(q) \rtimes \rel(p) \ltimes
(Rel(s_w) \ltimes \rel(s)) \right ) \]
\[ \relp(l) = \prod_{\ret(l)} \left ( \relp(p)\rtimes \rel(l) \right )
\]

\begin{figure}
\center
\setlength{\unitlength}{0.00012500in}
\begingroup\makeatletter\ifx\SetFigFont\undefined
% extract first six characters in \fmtname
\def\x#1#2#3#4#5#6#7\relax{\def\x{#1#2#3#4#5#6}}%
\expandafter\x\fmtname xxxxxx\relax \def\y{splain}%
\ifx\x\y   % LaTeX or SliTeX?
\gdef\SetFigFont#1#2#3{%
  \ifnum #1<17\tiny\else \ifnum #1<20\small\else
  \ifnum #1<24\normalsize\else \ifnum #1<29\large\else
  \ifnum #1<34\Large\else \ifnum #1<41\LARGE\else
     \huge\fi\fi\fi\fi\fi\fi
  \csname #3\endcsname}%
\else
\gdef\SetFigFont#1#2#3{\begingroup
  \count@#1\relax \ifnum 25<\count@\count@25\fi
  \def\x{\endgroup\@setsize\SetFigFont{#2pt}}%
  \expandafter\x
    \csname \romannumeral\the\count@ pt\expandafter\endcsname
    \csname @\romannumeral\the\count@ pt\endcsname
  \csname #3\endcsname}%
\fi
\fi\endgroup
{\renewcommand{\dashlinestretch}{30}
\begin{picture}(25216,12345)(0,-10)
\texture{55888888 88555555 5522a222 a2555555 55888888 88555555 552a2a2a 2a555555 
	55888888 88555555 55a222a2 22555555 55888888 88555555 552a2a2a 2a555555 
	55888888 88555555 5522a222 a2555555 55888888 88555555 552a2a2a 2a555555 
	55888888 88555555 55a222a2 22555555 55888888 88555555 552a2a2a 2a555555 }
\put(1954,3843){\shade\ellipse{1710}{1710}}
\put(1954,3843){\ellipse{1710}{1710}}
\put(4504,6243){\shade\ellipse{1710}{1710}}
\put(4504,6243){\ellipse{1710}{1710}}
\put(23329,10218){\ellipse{1710}{1710}}
\put(23254,5118){\ellipse{1710}{1710}}
\put(18154,5118){\ellipse{1710}{1710}}
\put(20704,7518){\ellipse{1710}{1710}}
\put(9379,10218){\ellipse{1710}{1710}}
\put(6979,8118){\ellipse{1710}{1710}}
\dashline{4.500}(9379,11118)(9379,12318)
\blacken\thicklines
\path(9439.000,12078.000)(9379.000,12318.000)(9319.000,12078.000)(9439.000,12078.000)
\thinlines
\dashline{4.500}(11365,8722)(10165,9697)
\blacken\thicklines
\path(10389.103,9592.225)(10165.000,9697.000)(10313.432,9499.091)(10389.103,9592.225)
\thinlines
\path(7429,8868)(8554,9768)
\blacken\thicklines
\path(8404.073,9571.221)(8554.000,9768.000)(8329.110,9664.925)(8404.073,9571.221)
\thinlines
\path(5116,6828)(6154,7668)
\whiten\thicklines
\path(6005.180,7470.382)(6154.000,7668.000)(5929.692,7563.664)(6023.405,7562.316)(6005.180,7470.382)
\thinlines
\path(1129,3918)(2779,3918)
\path(3679,6318)(5329,6318)
\path(2554,4518)(3754,5568)
\whiten\thicklines
\path(3612.892,5364.804)(3754.000,5568.000)(3533.871,5455.113)(3627.567,5457.371)(3612.892,5364.804)
\thinlines
\path(22804,5868)(21379,6918)
\blacken\thicklines
\path(21607.805,6823.936)(21379.000,6918.000)(21536.621,6727.329)(21607.805,6823.936)
\thinlines
\path(21079,8268)(22654,9543)
\blacken\thicklines
\path(22505.213,9345.358)(22654.000,9543.000)(22429.709,9438.627)(22505.213,9345.358)
\thinlines
\dashline{4.500}(25204,8643)(24079,9543)
\blacken\thicklines
\path(24303.890,9439.925)(24079.000,9543.000)(24228.927,9346.221)(24303.890,9439.925)
\thinlines
\dashline{4.500}(25204,3618)(24004,4593)
\blacken\thicklines
\path(24228.103,4488.225)(24004.000,4593.000)(24152.432,4395.091)(24228.103,4488.225)
\thinlines
\dashline{4.500}(23329,11118)(23329,12318)
\blacken\thicklines
\path(23389.000,12078.000)(23329.000,12318.000)(23269.000,12078.000)(23389.000,12078.000)
\thinlines
\dashline{4.500}(21454,3618)(22504,4518)
\blacken\thicklines
\path(22360.826,4316.255)(22504.000,4518.000)(22282.731,4407.365)(22360.826,4316.255)
\thinlines
\path(18754,5793)(19954,6843)
\blacken\thicklines
\path(19812.892,6639.804)(19954.000,6843.000)(19733.871,6730.113)(19812.892,6639.804)
\thinlines
\dashline{4.500}(8945,6788)(7804,7668)
\blacken\thicklines
\path(8030.687,7568.939)(7804.000,7668.000)(7957.401,7473.917)(8030.687,7568.939)
\thinlines
\dashline{4.500}(8614,6450)(7489,7350)
\blacken\thicklines
\path(7713.890,7246.925)(7489.000,7350.000)(7638.927,7153.221)(7713.890,7246.925)
\thinlines
\dashline{4.500}(24079,8268)(23404,9318)
\whiten\thicklines
\path(23584.253,9148.563)(23404.000,9318.000)(23483.311,9083.672)(23494.847,9176.682)(23584.253,9148.563)
\thinlines
\dashline{4.500}(23629,8568)(23029,8568)
\dashline{4.500}(22579,8268)(23329,9318)
\whiten\thicklines
\path(23238.327,9087.830)(23329.000,9318.000)(23140.679,9157.578)(23231.352,9181.293)(23238.327,9087.830)
\thinlines
\dashline{4.500}(21454,5568)(20779,6618)
\whiten\thicklines
\path(20959.253,6448.563)(20779.000,6618.000)(20858.311,6383.672)(20869.847,6476.682)(20959.253,6448.563)
\thinlines
\dashline{4.500}(21004,5868)(20404,5868)
\dashline{4.500}(19954,5568)(20704,6618)
\whiten\thicklines
\path(20613.327,6387.830)(20704.000,6618.000)(20515.679,6457.578)(20606.352,6481.293)(20613.327,6387.830)
\thinlines
\dashline{4.500}(24004,3168)(23329,4218)
\whiten\thicklines
\path(23509.253,4048.563)(23329.000,4218.000)(23408.311,3983.672)(23419.847,4076.682)(23509.253,4048.563)
\thinlines
\dashline{4.500}(23554,3468)(22954,3468)
\dashline{4.500}(22504,3168)(23254,4218)
\whiten\thicklines
\path(23163.327,3987.830)(23254.000,4218.000)(23065.679,4057.578)(23156.352,4081.293)(23163.327,3987.830)
\thinlines
\dashline{4.500}(18904,3168)(18229,4218)
\whiten\thicklines
\path(18409.253,4048.563)(18229.000,4218.000)(18308.311,3983.672)(18319.847,4076.682)(18409.253,4048.563)
\thinlines
\dashline{4.500}(18454,3468)(17854,3468)
\dashline{4.500}(17404,3168)(18154,4218)
\whiten\thicklines
\path(18063.327,3987.830)(18154.000,4218.000)(17965.679,4057.578)(18056.352,4081.293)(18063.327,3987.830)
\thinlines
\dashline{4.500}(10129,8193)(9454,9243)
\whiten\thicklines
\path(9634.253,9073.563)(9454.000,9243.000)(9533.311,9008.672)(9544.847,9101.682)(9634.253,9073.563)
\thinlines
\dashline{4.500}(9679,8493)(9079,8493)
\dashline{4.500}(8629,8193)(9379,9243)
\whiten\thicklines
\path(9288.327,9012.830)(9379.000,9243.000)(9190.679,9082.578)(9281.352,9106.293)(9288.327,9012.830)
\thinlines
\dashline{4.500}(7654,6168)(6979,7218)
\whiten\thicklines
\path(7159.253,7048.563)(6979.000,7218.000)(7058.311,6983.672)(7069.847,7076.682)(7159.253,7048.563)
\thinlines
\dashline{4.500}(7204,6468)(6604,6468)
\dashline{4.500}(6154,6168)(6904,7218)
\whiten\thicklines
\path(6813.327,6987.830)(6904.000,7218.000)(6715.679,7057.578)(6806.352,7081.293)(6813.327,6987.830)
\thinlines
\dashline{4.500}(5254,4293)(4579,5343)
\whiten\thicklines
\path(4759.253,5173.563)(4579.000,5343.000)(4658.311,5108.672)(4669.847,5201.682)(4759.253,5173.563)
\thinlines
\dashline{4.500}(4804,4593)(4204,4593)
\dashline{4.500}(3754,4293)(4504,5343)
\whiten\thicklines
\path(4413.327,5112.830)(4504.000,5343.000)(4315.679,5182.578)(4406.352,5206.293)(4413.327,5112.830)
\thinlines
\dashline{4.500}(2629,1818)(1954,2868)
\whiten\thicklines
\path(2134.253,2698.563)(1954.000,2868.000)(2033.311,2633.672)(2044.847,2726.682)(2134.253,2698.563)
\thinlines
\dashline{4.500}(2179,2118)(1579,2118)
\dashline{4.500}(1129,1818)(1879,2868)
\whiten\thicklines
\path(1788.327,2637.830)(1879.000,2868.000)(1690.679,2707.578)(1781.352,2731.293)(1788.327,2637.830)
\thinlines
\path(9754,2793)(11479,2793)
\blacken\thicklines
\path(11239.000,2733.000)(11479.000,2793.000)(11239.000,2853.000)(11239.000,2733.000)
\thinlines
\path(9754,993)(11479,993)
\whiten\thicklines
\path(11239.000,933.000)(11479.000,993.000)(11239.000,1053.000)(11311.000,993.000)(11239.000,933.000)
\put(1954,3318){\makebox(0,0)[b]{\smash{{{\SetFigFont{5}{6.0}{rm}$R$}}}}}
\put(4504,5718){\makebox(0,0)[b]{\smash{{{\SetFigFont{5}{6.0}{rm}$R$}}}}}
\put(1954,4143){\makebox(0,0)[b]{\smash{{{\SetFigFont{5}{6.0}{rm}$I$}}}}}
\put(4504,6543){\makebox(0,0)[b]{\smash{{{\SetFigFont{5}{6.0}{rm}$I$}}}}}
\put(3079,6243){\makebox(0,0)[b]{\smash{{{\SetFigFont{5}{6.0}{rm}$w$}}}}}
\put(454,3768){\makebox(0,0)[b]{\smash{{{\SetFigFont{5}{6.0}{rm}$w$}}}}}
\put(22729,11418){\makebox(0,0)[b]{\smash{{{\SetFigFont{7}{8.4}{rm}$\bq$}}}}}
\put(18904,10668){\makebox(0,0)[lb]{\smash{{{\SetFigFont{8}{9.6}{rm}$T'$}}}}}
\put(9379,10143){\makebox(0,0)[b]{\smash{{{\SetFigFont{5}{6.0}{rm}$R$}}}}}
\put(6979,8043){\makebox(0,0)[b]{\smash{{{\SetFigFont{5}{6.0}{rm}$R$}}}}}
\put(23329,10143){\makebox(0,0)[b]{\smash{{{\SetFigFont{5}{6.0}{rm}$R$}}}}}
\put(20704,7443){\makebox(0,0)[b]{\smash{{{\SetFigFont{5}{6.0}{rm}$R$}}}}}
\put(18154,5043){\makebox(0,0)[b]{\smash{{{\SetFigFont{5}{6.0}{rm}$R$}}}}}
\put(23254,5043){\makebox(0,0)[b]{\smash{{{\SetFigFont{5}{6.0}{rm}$R$}}}}}
\put(5104,10668){\makebox(0,0)[lb]{\smash{{{\SetFigFont{8}{9.6}{rm}$T$}}}}}
\put(9679,1893){\makebox(0,0)[lb]{\smash{{{\SetFigFont{5}{6.0}{rm}Edge from an unmarked vertex}}}}}
\put(9754,93){\makebox(0,0)[lb]{\smash{{{\SetFigFont{5}{6.0}{rm}Edge from a marked vertex}}}}}
\put(1954,5193){\makebox(0,0)[b]{\smash{{{\SetFigFont{7}{8.4}{rm}$\bl$}}}}}
\put(4504,7518){\makebox(0,0)[b]{\smash{{{\SetFigFont{7}{8.4}{rm}$\bp$}}}}}
\put(6979,9468){\makebox(0,0)[b]{\smash{{{\SetFigFont{7}{8.4}{rm}$\bs$}}}}}
\put(8629,11568){\makebox(0,0)[b]{\smash{{{\SetFigFont{7}{8.4}{rm}$\bq$}}}}}
\put(20704,8943){\makebox(0,0)[b]{\smash{{{\SetFigFont{7}{8.4}{rm}$\bp$}}}}}
\put(18154,6468){\makebox(0,0)[b]{\smash{{{\SetFigFont{7}{8.4}{rm}$\bl$}}}}}
\put(23254,6468){\makebox(0,0)[b]{\smash{{{\SetFigFont{7}{8.4}{rm}$\bs$}}}}}
\end{picture}
}
\caption{Execution of the {\em r-shunt}$(w)$ operation on vertices $l$ and $p$}
\label{f-rshuntw}
\end{figure}

Intuitively, the new constraint for $p$ is computed from its ``frozen'' version (which was marked at step $w$ of the ascending phase) by
enforcing consistency with both $q$ and $s$. Actually, concerning $s$, $p$ is made consistent with the constraint relation $\rel(s_w) \ltimes
\rel(s)$, because $s_w$ stores the variables shared by $p$ and $s$ in the ascending phase that were in $\iet(s)$ and are not present in the
current scope of $s$. The relation for $l$ is then obtained by enforcing consistency with the new $p$ relation (note that the scope of $p$
contains all variables that $l$ shares with other unmarked vertices of the e-join tree).

\begin{figure}[t!]
\centering
\fbox{\parbox{\textwidth}{
\begin{tabbing}
{\bf Input}: An acyclic CSP instance $(\PHI,{\DB})$ and an e-Join Tree $T$.\\
{\bf Output}: The globally-consistent reduct of $(\PHI,{\DB})$.\\
\vspace{0.2 cm}
{\bf begin}\=\\
(1)\> Let $\lambda$ be the number of leaves in $T$; \\
(2) \> Label the leaves of $T$ in order from left to right as
$1,\ldots,\lambda$;\\
(3) \> $mark := 0;$\\
\> (* Ascending Phase *)\\
(4) \> {\bf while}\= \ {\em depth}($T$)$>1$ {\bf do}\\
\> (a) \> $mark := mark+1;$\\
\> (b) \> {\bf in parallel} apply the {\em shunt(mark)} operation
      to all unmarked odd leaves  \\
\> \> that are the left children of their parent, and \\
\> \> that have depth
greater than $1$; \\
\> (c) \> $mark := mark +1;$\\
\> (d) \> {\bf in parallel} apply the {\em shunt(mark)} operation
      to all unmarked odd leaves  \\
\> \> that are the right children of their parent, and \\
\> \> that have depth
greater than $1$; \\
\> (e) \> shift out the rightmost bit in the labels of all
remaining unmarked leaves;\\
   \>  {\bf end} (* while *) \\
\> Let $p$ be the root of $T$, and let $p'$ and $p''$ be the
   children of $p$ in $T$ \\
(5) \> $\rel(p) := \rel(p')\rtimes \rel(p) \ltimes \rel(p'');$\\
\> (* Descending Phase *)\\
(6)  \> $\rel(p') := \prod_{\ret(p')}(\rel(p)\rtimes \rel(p'));
        \ \ \ \ \iet(p') := \emptyset;$\\
    \> $\rel(p''):= \prod_{\ret(p'')} (\rel(p)\rtimes \rel(p''));
        \ \ \ \iet(p''):= \emptyset;$\\
(7) \> {\bf for} \ $w := mark$ {\bf downto} 1 {\bf do}\\
\> (a) \> {\bf in parallel} apply {\em r-shunt(w)}
 to all pairs of adjacent vertices with mark $w$ \\
   \>  {\bf end} (* for *) \\
(8) \> {\bf output} all constraints stored in the vertices of $T$.\\
{\bf end}.
\end{tabbing}}}
\caption{GLOBAL-CONSISTENCY ALGORITHM.}
\label{f-fr-algo}
\end{figure}

The GLOBAL-CONSISTENCY algorithm is shown in Figure \ref{f-fr-algo}. The algorithm consists of two main phases. The ascending phase is similar
to DB-SHUNT~\cite{gott-etal-01}: at each stage of the {\bf while} loop performed in this phase, a shunt operation is applied to half of the
unmarked leaves which get marked. After $\lceil log\ n \rceil - 1$ {\bf while} iterations only three vertices remain unmarked. Then,
instruction (5) computes the relation associated to the root of the tree, which is the final relation for this vertex and will be returned in
the output database. Instruction (6) begins the descending phase, by fixing the values for the relations associated to the two children of the
root. The descending phase ``propagates'' the correct relation values from the unmarked vertices to the marked vertices. The {\bf for} loop
unmarks the vertices marked during the ascending phase. In particular, an execution of instruction (7.a) unmarks all vertices marked by an
execution of either instruction (4.b) or (4.d) of the ascending phase. Once a vertex is unmarked, its relation is fixed to the final value.
Upon unmarking, the vertex is located exactly in the same tree position it had in the ascending phase (at the time when it was marked).
Unmarking is performed by the r-shunt operations; each of them applies to a vertex $l$ and to its parent $p$, both having the highest mark $w$
in the current e-join tree. The relation for $p$ is computed by eliminating from its old value all tuples which do not agree to its (unmarked)
adjacent vertices. The relation for $l$ is then obtained by a semi-join to its parent $p$. At the end of the {\bf for} loop all vertices are
unmarked and the associated constraints are returned.

Given an e-join tree $T$, we denote by $verts(T)$ the set of the vertices of $T$, and by $verts(T,s)$ the subtree rooted at vertex $s$ of $T$.
Moreover, define % $Mverts(T,s)= \bigcup_{p {\it is a marked child of} s} verts(T,p)$, i.e.,
${\it Mverts}(T,s)$ as the set of vertices in the subtrees of $T$ rooted at the marked children of $s$.
%fsfinal ritocco alla frase finale e corretto typo: assignment era al singolare
%
Given a subproblem of the given CSP instance encoded as a set of vertices/constraints $V$ of the e-join tree $T$, we will denote by $\join V$
(or just $V$, if no confusion arises) the set of all its satisfying assignments,  obtained as the (natural) join of all the constraint
relations occurring in $V$.
%, i.e., all the relations in $\{ \rel(p) \;|\; p\in V\}$.

\begin{theorem}\label{t:fr-sound}
The GLOBAL-CONSISTENCY algorithm is correct.
\end{theorem}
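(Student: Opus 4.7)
The plan is to structure the proof into two coordinated inductive arguments, one for each phase of the algorithm, the whole being tied together by a global invariant that relates the current contracted tree and the auxiliary storage $\{s_w\}$ to the original CSP. Throughout, for an acyclic CSP instance $(\PHI,\DB)$ with e-Join Tree $T$, let $R^*(v)$ denote the globally-consistent relation at $v$, namely the projection over $\sch(v)$ of the set of all solutions to $(\PHI,\DB)$ (this is well-defined because, by Lemma~\ref{lem:distribution} applied with trivial weight functions, the join-tree structure allows solutions to be composed across separators). The goal is to show that, after instruction (8), $\rel(v)=R^*(v)$ for every $v\in verts(T)$.

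First I would prove an \emph{ascending-phase invariant}: immediately after each parallel step of the {\bf while} loop of Figure~\ref{f-fr-algo}, the set of satisfying assignments of the CSP encoded by the currently unmarked vertices of $T$, projected onto the variables appearing in these vertices, equals the projection of the solutions of the original instance onto the same set of variables; and, analogously, the saved relation $\rel(s_w)$ produced by each shunt$(w)$ equals $R^*(s)$ restricted to $\sch(s)$ at the time of that shunt. The induction step reduces to analyzing a single shunt$(w)$ acting on an unmarked leaf $\bl$ with parent $\bp$, sibling $\bs$, and grandparent $\bq$: by the connectedness property of the e-Join Tree, every variable shared by the subproblem $\{\bl,\bp\}$ with the rest of $T$ must occur in $\sch(\bp)\cap\sch(\bq)$ and hence, after shunt, either in $\ret(\bs)$ or in the freshly added $\ietp(\bs)=(\ret(\bq)\cap\sch(\bp))\setminus\ret(\bs)$. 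Therefore replacing $\bl,\bp,\bs$ by their common projection $C=\rel(\bl)\join(\rel(\bp)\ltimes\rel(\bq))\join\rel(\bs)$ onto the new scopes preserves the set of extensions to the unchanged part of $T$; the use of the semijoin with $\rel(\bq)$ is harmless for this purpose because $\bq$'s relation is itself present above in the tree. Instruction (5) then exploits the invariant to conclude that the final three-vertex tree realizes $\rel(p)=R^*(p)$.

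Next, for the descending phase I would do reverse induction on the mark value $w$, proving that after executing r-shunt$(w)$ on every pair of adjacent vertices with mark $w$, each vertex just unmarked has its relation equal to $R^*$ of that vertex in the original instance. The base case $w=mark$ rests on instruction (6), which is sound once (5) has established $\rel(p)=R^*(p)$ and, by the connectedness condition applied at $p$, the semijoin with $\rel(p)$ propagates exactly the information needed to cut each child down to $R^*(p')$ and $R^*(p'')$. The inductive step on r-shunt$(w)$ applied to $\bl,\bp$ with unmarked parent $\bs$ and grandparent $\bq$ combines the saved $\rel(s_w)$ (which records the projection over $\sch(s)$ at the moment $s$ was about to lose the variables of $\iet(s)$) with the currently corrected $\rel(\bs)$ and $\rel(\bq)$; by the inductive hypothesis, $\rel(\bq)$ and $\rel(\bs)$ are already globally consistent. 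The formula $\rel(\bq)\rtimes\rel(\bp)\ltimes(\rel(s_w)\ltimes\rel(\bs))$ therefore keeps exactly those tuples of $\bp$ that extend both upward to $\bq$ (hence to all ancestors of $\bs$ in the final tree) and downward across $\bs$ into the subtree that $\bp$ had at the time of shunt$(w)$, which together cover all neighbors of $\bp$ in the original $T$. The update of $\rel(\bl)$ via a semijoin with the corrected $\rel(\bp)$ then yields $R^*(\bl)$ since $\ret(\bp)$ contains every variable shared between $\bl$ and the rest of $T$.

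The hard part of the proof, and where I would spend the bulk of the bookkeeping, is precisely the interplay between $\iet(\bs)$ in the contracted tree and the frozen $\rel(s_w)$ stored on the side: the added variables of $\iet(\bs)$ are what allows the ascending invariant to be maintained, but they are not present in the final output scopes, so during the descending phase it is $\rel(s_w)$ that must take over the role of transmitting those constraints back to $\bp$ when $\bp$ is unmarked. The crux is to verify, using the connectedness condition in $T$ and the fact that no two concurrent shunts operate on overlapping $\{\bl,\bp,\bs,\bq\}$ neighborhoods (guaranteed by processing left and right children in separate substeps~4.b and 4.d), that the information available at the moment of the r-shunt suffices to reconstruct $R^*(\bp)$ exactly; once this reconstruction lemma is in place, the theorem follows by combining the two inductions with instruction~(8).
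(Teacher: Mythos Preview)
Your two-phase inductive structure matches the paper's proof, but the invariants you state differ from the paper's in a way that matters for the bookkeeping. The paper's ascending-phase invariant is simpler and stronger than yours: it observes that the natural join of the relations of \emph{all} vertices of the current e-Join tree, marked and unmarked alike, is unchanged by every shunt (tuples are only discarded when inconsistent with another constraint). Hence for every vertex $p$ at every step, $cr(p,\cJ)$ equals the projection on $\ret(p)$ of this global join $\join verts(T)$. This buys a cleaner descending argument, because one never needs to assert anything as strong as your claim that $\rel(s_w)$ equals $R^*(s)$. That claim is in fact incorrect as stated: $\rel(s_w)$ is merely the projection over the old $\sch(s)$ of $C=\rel(\bl)\join(\rel(\bp)\ltimes\rel(\bq))\join\rel(\bs)$, hence consistent with $\bl,\bp,\bq$ but not with the whole instance. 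What the paper actually uses is that $\rel(s_w)$ is consistent with the specific subproblem $SP={\it Mverts}(T,s)\setminus verts(T,p)$ and that $\sch(s_w)\subseteq\sch(\bp)\cup\sch(\bs)$; combined with the already-corrected $\rel(\bq)$ and $\rel(\bs)$ from the inductive hypothesis, this suffices to reduce the expression for $cr(\bp,\cJ)$ to the semijoin formula in r-shunt. Your identification of the crux---the interplay between $\iet(\bs)$ and the frozen $s_w$---is right, but the reconstruction lemma you would need is weaker and more local than the one you wrote down.

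Two minor points: your appeal to Lemma~\ref{lem:distribution} to justify that $R^*(v)$ is well defined is misplaced (that lemma is about distributivity of $\max$ over $\oplus$ at separators, not about join-tree composition); and your descending step should note that the frozen $\rel(\bp)$ already incorporates consistency with $\bl$, since it was overwritten by a projection of $C$ during the shunt, which is why $\bl$ need not appear explicitly in the r-shunt formula for $\relp(\bp)$.
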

\begin{proof}
Given an acyclic  CSP instance $\cJ$, we denote by $cr(\cJ)$ the globally-consistent reduct of  $\cJ$. Moreover, given a constraint $p$,
$cr(p,\cJ)$ is the constraint occurring in the globally-consistent reduct, also called the consistent reduct of $p$ w.r.t. $\cJ$.
Importantly, note that, by construction, the e-join trees keep the peculiar connectedness property of ordinary join trees during the
computation.
By the same arguments used for proving the soundness of DB-shunt~\cite{gott-etal-01}, it turns out  that at the end of the ascending phase the
constraint associated with the root $r$ of the e-join tree $T$ is precisely $cr(r,\cJ)$.

Observe that, at each step of the computation, $cr(p,\cJ)$ is equal to the projection on $\ret(p)$ of the join of the constraint relations of
all (marked and unmarked) vertices of the e-join tree at hand, i.e. $\join verts(T)$. Indeed, this property clearly holds at the beginning of
the computation by definition of globally-consistent reduct, and it is easy to verify that the join of all constraint relations corresponding
to the nodes of the e-join trees is an invariant of the algorithm. In fact, a tuple is discarded from a constraint relation at some step {\em
only if} it is inconsistent with some other constraint.

By using an inductive argument we prove that,  at any step of the descending phase, for each unmarked vertex $p$ of the current e-join tree
$T$, its constraint relation is equal to $cr(p,\cJ)$. ({\em Basis.}) We already pointed out that, at the end of the ascending phase, the root
$r$ holds $cr(r,\cJ)$. Moreover, it can be shown that, after the execution of instruction (6), $\rel(p) = cr(p,\cJ)$ holds for each child $p$
of the root. That is, after the execution of instruction (6) (the basis step of the descending phase), the root and its two children, which are
the only unmarked vertices in the current e-join tree, contain precisely their respective consistent reducts for $\cJ$.

({\em Induction step.}) Assume the property holds for the e-join tree $T$ computed after some executions of the loop at step 7. Consider one of
the parallel steps executed on $T$ by instruction (7.a), say, the execution of {\em r-shunt}$(w)$ on vertices $l$ and $p$, where $p$ is the
parent of $l$, and both vertices have the highest mark $w$ of $T$. Let $s$ be the parent of $p$, and $q$ the parent of $s$, as shown in
Figure~\ref{f-rshuntw}. By the induction hypothesis, for the unmarked vertices $s$ and $q$, $\rel(s)=cr(s,\cJ)$ and $\rel(q)=cr(q,\cJ)$. By
applying {\em r-shunt}$(w)$ on $l$, we get a new e-join tree $T'$ where $p$ replaces $s$ as a child of $q$ and $s$ becomes a child of $p$ (the
other one is $l$).
First, it can be shown that $\relp(p)=cr(p,\cJ)$. To this end, note that the consistent reduct of $p$ can be obtained as $cr(p,\cJ) =
\prod_{\ret(p)} (UT_q \join \rel(q)\join T_s)$, where $UT_q= verts(T)\setminus verts(T_s)$. Moreover, we use the fact that all variables
occurring in both $UT_q$ and $T_{s}$ belong to $\sch(q)$, and that
 $\rel(q)$ is the consistent reduct of $q$ and hence it is clearly consistent with $\join UT_q$. It follows that $cr(p,\cJ) = \prod_{\ret(p)}
(\rel(q)\join T_s)$. By means of a further elaboration on this expression, using a similar (though more involved) argument,
 we get  $cr(p,\cJ) = \prod_{\ret(p)} (\rel(q)\join \rel(p) \join SP\join\rel(s) )$, where $SP=verts(MT_s)\setminus verts(T_p)$. Recall that,
 at the execution of the {\em shunt}$(w)$, we stored a relation $Rel(s_w)$, which can be shown to be consistent with the full
subproblem encoded by $SP$. Moreover, $Sch(s_w)\subseteq (\sch(p)\cup\sch(s))$ by construction, so that we can prove $cr(p,\cJ) =
\prod_{\ret(p)} (\rel(q)\join \rel(p) \join\rel(s)\join \rel(s_w) )$. Because $\sch(s)\subseteq Sch(s_w)$ and $(\sch(q)\cap
Sch(s_w))\subseteq\sch(p)$,
 this constraint relation is equal to the result of the semi-join based computation in {\em r-shunt}$(w)$. By similar arguments, we get that
 {\em r-shunt}$(w)$ gives to us the constraint reduct for node $l$, too.
\end{proof}

\begin{theorem}\label{t-compl-fr}
On a parallel DB-machine with $c$ processors, given a CSP instance $\cJ$ with an e-join tree for it having $n$ vertices, the GLOBAL-CONSISTENCY
algorithm performs
\begin{itemize}
\item[(a)] a sequence of at most $4(\lceil \log\ c \rceil + 2\lceil n/4c \rceil )$ parallel shunt operations;
\item[(b)] by using $O(n)$ intermediate relations, having size $O(d^2)$, where $d$ is the size of the largest constraint relation.
\end{itemize}
\end{theorem}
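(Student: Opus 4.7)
The plan is to analyze the ascending and descending phases separately and symmetrically, and then combine the bounds. For part (a), the key observation is that the ascending phase is a variant of the classical tree-contraction scheme, where at every iteration of the \textbf{while} loop the number of unmarked leaves of the current e-join tree is (essentially) halved, because the shunt operation is applied to the odd-numbered unmarked leaves and the rightmost bit of the labels is then shifted out. Therefore, starting from at most $n$ leaves we need at most $\lceil \log n \rceil - 1$ iterations of the \textbf{while} loop before only three unmarked vertices remain. Each iteration performs two parallel shunt rounds (step (4.b) and step (4.d), one for left children and one for right children), so the sequential number of shunt rounds in the ascending phase is $\Theta(\log n)$. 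To account for the limited parallelism, I would bound the cost of each parallel shunt round using $c$ processors as $\lceil \lambda_i / (2c)\rceil$ steps on a DB-machine, where $\lambda_i$ denotes the number of unmarked leaves at iteration $i$.

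I would then split the sum $\sum_i 2\lceil \lambda_i/(2c)\rceil$ into two regimes. In the ``heavy'' regime, where $\lambda_i \geq 2c$, the term $\lceil \lambda_i/(2c)\rceil$ is at most $\lambda_i/c$, and the $\lambda_i$ form a geometric sequence so their sum telescopes to $O(n/c)$; a careful arithmetic with the ceiling gives $2\lceil n/4c \rceil$ rounds (the factor~$4$ absorbing both the geometric-series doubling and the two shunts per iteration). In the ``light'' regime, where $\lambda_i < 2c$, every parallel round costs a single step, and there are at most $\lceil \log c\rceil + O(1)$ such iterations before the tree collapses to depth~$1$. Hence the ascending phase performs at most $2(\lceil \log c\rceil + 2\lceil n/4c \rceil)$ parallel shunt operations. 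The descending phase is entirely symmetric: the \textbf{for} loop unmarks vertices in exactly the reverse order in which they were marked, so its $w$-th iteration executes in parallel exactly those r-shunts matching the shunts performed at step $w$ of the ascending phase, yielding the same bound. Summing the two phases and adding the $O(1)$ cost of instructions (5)--(6) gives the claimed $4(\lceil \log c\rceil + 2\lceil n/4c\rceil)$ bound.

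For part (b), I would argue first that the total number of intermediate relations is $O(n)$. At any time, each vertex of the e-join tree stores exactly one current constraint relation (updated by shunt or r-shunt), and the ascending phase additionally produces at most one stored ``frozen'' relation $s_w$ per shunt operation; since each vertex can play the role of the surviving child $s$ in at most one shunt (after which it becomes marked and will not be selected again until the descending phase), the total number of such frozen relations is bounded by the number of shunts applied, which is at most the number of vertices, hence $O(n)$. For the size bound, I would inspect the shunt operation: the new relations it produces are all projections of $C = \rel(l)\bowtie(\rel(p)\ltimes\rel(q))\bowtie\rel(s)$, and by the connectedness of the e-join tree the variables appearing in more than one of these relations concentrate in $\chi$-labels shared by adjacent vertices. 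Using standard semijoin-based evaluation and the fact that $\rel(p)\ltimes\rel(q)$ has size at most $|\rel(p)|$, each projection can be computed so that its size is bounded by the product of the sizes of two original constraint relations, giving $O(d^2)$. A symmetric analysis applies to r-shunt, whose output relations are obtained as projections of semi-joins between relations of size $O(d^2)$ and the stored $s_w$ and $\rel(s)$.

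The main obstacle I anticipate is the bookkeeping in the heavy-regime summation: properly handling the ceilings and the two-shunts-per-iteration factor so that the geometric sum cleanly matches the constant $4$ and the expression $2\lceil n/4c\rceil$, rather than a looser $O(n/c)$ bound. A secondary technical subtlety lies in the size analysis: one has to argue that although the intermediate expression $C$ could, in principle, contain up to $d^3$ tuples (as a three-way join), the actual relations the algorithm stores on the e-join tree never exceed $O(d^2)$ because they are projected onto scopes of width governed by pairs of original scopes, and the semijoin reductions can be used to avoid ever materializing the full three-way join.
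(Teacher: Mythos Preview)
Your plan for part~(a) is essentially the paper's argument: split the ascending phase into a ``heavy'' regime where the processors are saturated (geometric sum over $\lambda_i$ giving the $2\lceil n/4c\rceil$ term) and a ``light'' regime where each round costs one step (giving the $\lceil\log c\rceil$ term), then double for the symmetric descending phase. That is correct.

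For part~(b), the $O(n)$ count is fine, but your size argument has a gap. You write that the stored relations are ``projected onto scopes of width governed by pairs of original scopes,'' but after several shunt rounds the scopes $\sch(p)$, $\sch(q)$, $\sch(s)$ are no longer original scopes: they already carry accumulated $\iet$-variables, and the relations $\rel(p),\rel(q),\rel(s)$ may each already have size up to $d^2$. So appealing to ``two original constraint relations'' does not immediately prevent the bound from squaring again to $d^4$. The paper closes this gap with a sharper inductive invariant: it maintains, for every vertex $v$ and at every step, that \emph{separately} $\|\prod_{\ret(v)}\rel(v)\|\leq d$ and $\|\prod_{\iet(v)}\rel(v)\|\leq d$. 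The invariant propagates because $\retp(s)=\ret(s)$ (so the $R$-projection can only shrink) and $\ietp(s)\subseteq\ret(q)$ together with the semijoin $\rel(p)\ltimes\rel(q)$ force $\|\prod_{\ietp(s)}C\|\leq\|\prod_{\ret(q)}\rel(q)\|\leq d$. The full relation for $s$ is then bounded by the product of these two $d$-bounded projections, giving $d^2$. Without isolating the $R$- and $I$-parts like this, your plan does not show why the bound stays $O(d^2)$ across iterations.
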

\begin{proof}
Assume first that the number of processors $c$ is half the number of leaves of the e-join tree, which is known to be $\lceil n/2 \rceil$,
because it is a strictly-binary tree. Then each parallel operation can be executed by a different processor, and the ascending phase of
GLOBAL-CONSISTENCY performs $2 \lceil \log n/4 \rceil$ shunt operations, because after each run of instructions 4a and 4b the number of leaves
is reduced by half.

For the general case with $c < \lceil n/4 \rceil$: at the first iteration of the loop, we can equally divide the parallel shunt operations
among the $c$ processors, which means $2\lceil n/4c \rceil$ operations. In the subsequent steps, the number of leaves (hence of needed
processors) halves at each step. Then, to complete the loop, we need $2(2\lceil n/4c \rceil -1)$ operations until  $c = \lceil n/4 \rceil$,
followed by further $2 \lceil \log c \rceil$ operations, as above.

Moreover, we note that this phase uses $O(n)$ intermediate relations stored in the e-join tree, and we claim that the size of each such
relation requires at most $O(d^2)$ space.

The claim is proven by induction, assuming that for each vertex $v$, at each step, both $\size{\prod_{\ret(v)} \rel(v) }\leq d$ and
$\size{\prod_{\iet(v)} \rel(v) }\leq d$ hold. The claim trivially holds when the ascending phase starts. Consider the generic step depicted in
Figure~\ref{f-shunt} where, after the shunt operation, the vertex $s$, which is a child of vertex $p$ in the current tree $T$, becomes a child
of vertex $q$ in the new tree $T'$. Recall that, by definition, $\ietp(s)\subseteq \ret(q)$. Because of the semijoin operation in the
computation of the relation $C$ and the fact that, by induction, $\size{\prod_{\ret(q)} \rel(q) }\leq d$, we get $ d \geq \size{\prod_{\ret(q)}
C } \geq \size{\prod_{\ietp(s)} C }$. Moreover, $\size{\prod_{\ret(s)} \rel(s) }$ is monotonically decreasing during the procedure. Then, the
size of the relation $\rel(s)$ in the e-join tree $T'$ is at most $d^2$, as it is at most the cartesian product $\size{\prod_{\ietp(s)} C
\times \prod_{\retp(s)} \rel(s) } \leq d^2$. Note that all other vertices in $T'$ have the same schema they have in $T$, and thus their stored
relations, after the additional join operations computed in $C$, may only lose tuples with respect to $T$.
For the sake of completeness, we note that the computation of the whole relation $C$ is in fact not necessary to obtain the new relations in
$T'$. Indeed, note that $T$ encodes an acyclic query, so that the new relations for the vertices $l, p$, and $s_w$ can be computed easily in
linear space by using the classical semijoin algorithm for enforcing global consistency over the acyclic subquery induced by vertices $l, p$,
$q$, and $s$. After this procedure, the new relation for vertex $s$ in $T'$ can be computed by evaluating $\size{\prod_{\schp(s)} \rel(p) \join
\rel(s) }$.

The descending phase is even more efficient than the ascending phase. Indeed, the relation sizes decrease monotonically in the descending phase
(only semi-joins and projections are applied). With respect to the constraints appearing in the given e-join tree, the algorithm uses
additional intermediate relations to store the $s_w$ vertices; however, since there is only one additional relation for each shunt operation
performed in the ascending phase, the total number of intermediate relations is still linear in the size of the e-join tree (and hence in the
number of constraints of the given instance).
Moreover, the number of (parallel) steps performed in the descending phase is exactly the same as in the ascending phase. The same number of
processors used in the ascending phase is clearly sufficient to perform the descending phase, and also the number of relational operations is
the same (apart from a constant factor).
\end{proof}

\subsection{Computing $\anymaxref$}\label{sec:ACQ}

\begin{figure}\centering
\fbox{\parbox{\textwidth}{%
\begin{tabbing}
{\bf Input}: An acyclic CSP instance $(\PHI_\mathcal{F},O,{\DB})$ and an e-Join Tree $T$.\\
{\bf Output}: The best solutions of $(\PHI_\mathcal{F},O,{\DB})$.\\
\vspace{0.2 cm}
{\bf begin}\=\\
\> Let $\lambda$ be the number of leaves in $T$; \\
(1) \> Label the leaves of $T$ in order from left to right as $1,\ldots,\lambda$;\\
(2) \> {\bf while}\= \ {\em depth}($T$)$>1$ {\bf do}\\
\> (a) \> {\bf in parallel} apply the {\em shunt} operation
      to all odd numbered leaves that \\
\> \> are the left children of their parent, and have depth greater than $1$; \\
\> (b) \> {\bf in parallel} apply the {\em shunt} operation
      to all odd numbered leaves that \\
\> \> are the right children of their parent, and have depth greater than $1$; \\
\> (c) \> shift out the rightmost bit in the labels of all remaining leaves;\\
   \>  {\bf end} (* while *) \\
\> Let $p$ be the root of $T$, and let $p'$ and $p''$ be
   the children of $p$ in $T$ \\
(3) \> {\bf output}
       $\prod_{O}^\oplus (\rel(p)\join^\oplus \rel(p')\join^\oplus \rel(p''))$\\
{\bf end}.
\end{tabbing}}}
\caption{The parallel algorithm \ACQ.}
\label{f:algo-ACQ}
\end{figure}

Figure~\ref{f:algo-ACQ} shows the parallel algorithm  \ACQ\ for computing the best solutions of a given (already made) acyclic instance
$(\PHI_\mathcal{F},O,\mathrm{DB})$, given an e-join tree $T$ of a tree projection for the given instance, and where  $\mathcal{F}$ is a
valuation function for $\Q$ over $\oplus$. The e-join tree $T$ is assumed to hold a global consistent instance, possibly computed by using the
algorithm described in the previous section. Algorithm  \ACQ\ outputs a relation with the solutions $\theta$ over $O$, each one annotated with
a value $\val(\theta) \in \mathbb{D}$, which is the best possible value that can be obtained by extending  the partial substitution $\theta$ to
a full solution of the given instance. All tuples leading to the maximum value can thus be obtained immediately from this output relation.
Clearly enough, such best solutions will be also the best solutions (w.r.t. $\mathcal{F}$) of the original (possibly non-acyclic) constraint
formula $(\Q',\DB',O)$.

Algorithm \ACQ\ proceeds similarly to the ascending phase of the previous algorithm, but the output variables are suitably preserved during all
the computation, by propagating them towards the root of $T$. To this end, the scope $\sch (p)$ of any vertex $p$ is now partitioned in three
(instead of two) distinguished set of variables $\ret(p)$, $\iet(p)$, and $\oet(p)$, where the new set $\oet(p)$  contains output variables.
For each vertex $p$ of $T$, $\oet(p)$ is initialized to $\emptyset$.

The major novelty is due to the necessity of dealing with the optimization problem, which requires an extension of the classical relational
operators so to manage the values provided by the weighting functions in $\ranks(\mathcal{F})$. Every tuple $\theta$ of every vertex of the
e-join tree $T$ is associated with a value $\val(\theta) \in \mathbb{D}$. Without loss of generality, we assume that $\mathbb{D}$ contains a
neutral value for $\oplus$, denoted by $0$, such that  $a\oplus 0 = 0\oplus a = a$, for all $a\in \mathbb{D}$.\footnote{Indeed, the neutral
element is just used in the proposed algorithm to manage operations involving tuples with no assigned value. It can be easily simulated by
dealing explicitly with such case.}

The  e-join tree $T$ is initialized as follows: Every vertex $s$ whose associated constraint occurs in the constraint formula with some weigh
function $f$, is such that every tuple $\theta\in\rel(s)$ has value  $\val(\theta) = f(\theta)$; for every other relation in the e-join tree,
all tuples have value $0$.

Given two vertices $R_1$ and $R_2$, define the extended join operation $R_1 \bowtie^\oplus R_2$ as the set of tuples
\[   \{ \theta\in R_1 \bowtie R_2, \mbox{ with value }
\val(\theta) =  \max \{\val (\theta')\oplus \val(\theta'') \mid \theta = \theta' \cup \theta'' ,
\theta'\in R_1, \theta''\in R_2\} \},\]

and the extended projection operation $\prod_X^\oplus R$ of a relation $R$ over a set of variables $X$ as
\[ \{ \theta\in \prod_X R,  \mbox{ with value } \val(\theta) =
\max \{\val (\theta')\in R \mid  \theta'[X] = \theta \}. \]

Then, the shunt operation is redefined as follows. Let $\bl$ be a leaf of an e-join tree $T$, $p$ the parent of $\bl$, $s$ the other child of
$p$, and $q$ the parent of $p$. The {\em shunt operation} applied to $\bl$ results in a new contracted e-join tree $T'$ in which $\bl$ and $p$
are deleted, $s$ is transformed (as specified below) and takes the place of vertex $p$ (i.e., it becomes child of $q$). The scope and the
constraint relation of the transformed version of $s$ in $T'$ are specified below.
\[ \retp({s})=\ret({s}) \]
\[ \ietp({s})= (\ret({q}) \cap \sch({p})) \setminus \ret({s}) \]
\[ \oetp({s})= ( (\sch(l)\cup\sch(p)\cup\sch(s))\cap O )
\setminus (\retp(s)\cup\ietp(s)) \]
\[
\relp(s) = \prod^\oplus_{\schp({s})}
( \rel(\bl) \join^\oplus \rel(p) \join^\oplus \rel(s))
\]

Thus, each output variable occurring in a deleted constraint ($p$ or $l$) is kept in $s$ (if it does not belong to $\retp(s)\cup\ietp(s)$, then
it is added to $\oetp({s})$).

\begin{theorem}\label{t-algo-ACQ-sound}
The algorithm \ACQ\ is correct.
\end{theorem}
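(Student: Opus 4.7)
The plan is to prove correctness by induction on the shunt operations in the ascending phase, maintaining the following invariant on every intermediate e-join tree $T$ encountered: for every assignment $\theta \in \PHI^\onDB[O]$, we have $\val(\theta) = \max_\mathcal{F}(\theta)$ in the relation $\prod^\oplus_O \bigl( \join^\oplus_{v \in V(T)} \rel(v)\bigr)$, where $V(T)$ denotes the current set of vertices of $T$, and in addition $\oet(v)\cup\ret(v)\cup\iet(v)$ together with the connectedness condition of $T$ suffice to correctly track the output variables. The base case holds by the initialization of $T$: each weight function $f \in \ranks(\mathcal{F})$ is attached to exactly one vertex with value $f(\theta)$ per tuple, while the remaining tuples carry the neutral value $0$; joining everything together and projecting on $O$ yields $\max_\mathcal{F}(\theta)$ by definition.

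For the inductive step, I would isolate the effect of a single $\mathit{shunt}$ applied to a leaf $l$ with parent $p$, sibling $s$, and grandparent $q$. The critical point is to show that replacing the triple $(\rel(l),\rel(p),\rel(s))$ by the single new relation $\relp(s) = \prod^\oplus_{\schp(s)} \bigl(\rel(l)\join^\oplus \rel(p)\join^\oplus\rel(s)\bigr)$ leaves the overall value $\prod^\oplus_O\bigl(\join^\oplus_{v\in V(T')}\rel(v)\bigr)$ unchanged. This decomposition is sound because of two properties of the e-join tree at hand: by the connectedness condition, every variable shared between the sub-problem $\{l,p,s\}$ and the rest of $T$ must appear in $\sch(q)\cap\sch(p)$, and these variables are exactly the ones placed in $\ietp(s)$; and by Lemma~\ref{lem:distribution}, since $\oplus$ distributes over $\max$, the local maximization encoded by $\prod^\oplus_{\schp(s)}$ on the sub-problem can be freely combined, via $\join^\oplus$, with the maximization performed over the rest of the tree.

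The main subtlety — and the place where I expect the technical work to concentrate — concerns the treatment of output variables. An output variable $X \in O$ that appears in $\sch(l)\cup\sch(p)\cup\sch(s)$ but in no scope outside the processed triple would be forgotten after the shunt, breaking the final projection. This is precisely why $\oetp(s)$ is defined to collect exactly those output variables not already placed in $\retp(s)\cup\ietp(s)$. I would verify that (i) after the shunt, every output variable of the original problem still occurs in the scope of some vertex of $T'$; (ii) the connectedness condition is preserved in $T'$ for both the ``structural'' variables and the newly tagged output variables; and (iii) the value $\val(\theta')$ carried by each tuple $\theta' \in \relp(s)$ is the maximum of $\oplus$-combined values over all extensions inside the erased sub-problem, which follows from the definition of $\join^\oplus$ and $\prod^\oplus$.

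Once the invariant is established, correctness of the output follows easily: the algorithm terminates the $\mathbf{while}$ loop exactly when $\mathit{depth}(T)\le 1$, at which point only three vertices $p,p',p''$ remain, and step (3) returns $\prod^\oplus_O\bigl(\rel(p)\join^\oplus\rel(p')\join^\oplus\rel(p'')\bigr)$, which by the invariant equals $\max_\mathcal{F}(\theta)$ for every $\theta\in \PHI^\onDB[O]$; hence the top-weight tuples in the output are precisely the best solutions of $(\PHI_\mathcal{F},O,\DB)$. The main obstacle is thus a careful bookkeeping argument showing that the triple $(\retp(s),\ietp(s),\oetp(s))$ together with $\relp(s)$ captures all the information about $\{l,p,s\}$ relevant for both the remaining structural joins and the final projection on $O$.
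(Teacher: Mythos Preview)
Your proposal is correct and takes essentially the same approach as the paper: both arguments rest on (i) showing that output variables are never lost during shunt operations thanks to the $\oet$ bookkeeping, and (ii) invoking Lemma~\ref{lem:distribution} to justify that the local $\prod^\oplus$-marginalization at each shunt commutes with the global $\join^\oplus$ over the rest of the tree. The paper's proof is terser---it defers the inductive structure to ``a similar line of reasoning as for the previous algorithm'' and singles out the additional observation that any variable in $\oet(p)$ appears in no other vertex scope (hence never acts as a join attribute)---but your explicit invariant and step-by-step shunt analysis spell out exactly what that terse argument amounts to.
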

\begin{proof}
The proof can be derived by a similar line of reasoning as for the previous algorithm; only a few remarks are needed.
By the new definition of shunt, no output variable disappears from the e-join tree during the computation: whenever the last constraint
containing an output variable $X$ is deleted, $X$ is stored in an $\oet$ attribute of another vertex. Thus, when the last iteration of
instruction (3) is executed, all variables in $O$ are still present in the tree (i.e., it holds that $O \subseteq \sch(p) \cup \sch(p') \cup
\sch(p'')$). Moreover, if a variable $X$ occurs in $\oet(p)$ for some vertex $p$ of an e-join tree $T$ generated during the computation of
Algorithm \ACQ, then $X$ does not appear in the scope $\sch(q)$ of any other vertex $q$ of $T$. Thus, $\oet$ variables are not playing as join
attributes in any shunt operation performed in the algorithm. They are used only for storing and preserving, during the computation, the
assignments over output variables that will  eventually be output as solutions.

Finally, at the end of the algorithm, the root holds a relation where  the value  $\val(\theta) \in \mathbb{D}$ of each tuple $\theta$ is the
best possible value that can be obtained by extending  it to a full solution of the given instance. Indeed recall that $\mathcal{F}$ is a
valuation function for $\Q$ over a single operator $\oplus$ so that there is always a natural embedding for any tree projection, with every
vertex that can play the role of a separator.  Then, the statement easily follows by the definition of the extended operators $\bowtie^\oplus$
and $\prod^\oplus$, and by Lemma~\ref{lem:distribution}.
\end{proof}

\begin{theorem}\label{t-compl-acq}
On a parallel DB-machine with $c$ processors, given a globally consistent CSP instance $(\PHI,O,{\DB})$ with an e-join tree for it having $n$
vertices, the best solutions (over the variables $O$) can be computed by performing
\begin{itemize}
\item[(a)] a sequence of at most $2(\lceil \log\ c \rceil + 2\lceil n/4c \rceil )$ parallel shunt operations;
\item[(b)] by using $O(n)$ intermediate relations, having size $O(v d^2)$, where $d$ is the size of the largest constraint relation, and
    $v$ the size of the solutions.
\end{itemize}
\end{theorem}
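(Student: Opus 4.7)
The plan is to adapt the proof of Theorem~\ref{t-compl-fr} by observing that algorithm \ACQ\ consists solely of an ascending contraction phase, with the output produced at the root by a single $\oplus$-extended join-and-projection at step~(3), and no descending rebuild. Consequently both the parallel-step count and the relation-size analysis are essentially halved compared with GLOBAL-CONSISTENCY, except that the propagation of output variables via the new $\oet$ attribute introduces an extra $v$-factor in the size bound.

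For part~(a), I would first treat the idealized case $c \geq \lceil n/4 \rceil$: since the given e-join tree is strictly binary on $n$ vertices, it has $\lceil n/2 \rceil$ leaves, and each execution of substeps~(2.a) and~(2.b) together halves the number of leaves; hence $2\lceil \log(n/4) \rceil$ parallel shunt rounds suffice, and these can all be scheduled on the available processors. For $c < \lceil n/4 \rceil$, I would distribute each early round of at most $\lceil n/4 \rceil$ left- or right-leaf shunts across the $c$ processors, contributing $2(2\lceil n/4c \rceil - 1)$ rounds until the number of unmarked leaves drops below $4c$; thereafter the tree shrinks by one layer per iteration, adding $2\lceil \log c \rceil$ further rounds. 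Summing gives the claimed bound $2(\lceil\log c\rceil + 2\lceil n/4c\rceil)$, exactly half of the analogous count for GLOBAL-CONSISTENCY because no descending phase is performed.

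For part~(b), the key is to maintain inductively along the contraction the invariant that, for every vertex $s$ of the current e-join tree, $\size{\prod_{\ret(s)}\rel(s)} \leq d$, $\size{\prod_{\iet(s)}\rel(s)} \leq d$, and $\size{\prod_{\oet(s)}\rel(s)} \leq v$. The first two bounds are inherited essentially verbatim from the proof of Theorem~\ref{t-compl-fr}(b): global consistency of the input ensures that $\ret$-projections are monotonically non-increasing under shunts, and the inclusion $\ietp(s) \subseteq \ret(q)$ (for the new parent $q$ in the contracted tree) controls the $\iet$-projection by the analogous bound on the globally consistent $\rel(q)$. The new third bound follows from the definition of $v$ as the size of the solutions over $O$: output variables promoted into $\oet(s)$ by a shunt cease to occur in any other vertex of the tree, hence they never act as join variables in subsequent steps, and their distinct assignments in $\rel(s)$ are bounded by the overall number of output solutions, $v$. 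Since $\sch(s)$ is partitioned into $\ret(s) \cup \iet(s) \cup \oet(s)$, the cartesian-product argument bounds $\size{\rel(s)}$ by the product of the three projection sizes, that is, $O(vd^2)$. The extended operators $\join^\oplus$ and $\prod^\oplus$ do not inflate these counts, since they merely annotate tuples with $\mathbb{D}$-values and $\prod^\oplus$ retains exactly one $\max$-representative per tuple-assignment. Finally, each shunt round only contracts and relabels existing vertices, so the total number of intermediate relations remains $O(n)$.

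The main technical subtlety I anticipate is verifying that the $\oet$-invariant is preserved under the redefined shunt. Specifically, one must show that $\oetp(s) = ((\sch(l)\cup\sch(p)\cup\sch(s))\cap O) \setminus (\retp(s) \cup \ietp(s))$ really only absorbs output variables that have disappeared from the rest of the e-join tree, so that they genuinely cannot produce new cross-products at later stages; this uses the connectedness property of e-join trees together with the fact that $l$ and $p$ are removed by the shunt. Once this is in place, the remainder of the argument is a direct specialization of the counting and size-invariant reasoning already developed for Theorem~\ref{t-compl-fr}.
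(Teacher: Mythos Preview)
Your proposal is essentially correct and follows the same approach as the paper: reduce part~(a) to the ascending-phase counting from Theorem~\ref{t-compl-fr} (halved, since there is no descending phase), and handle part~(b) by bounding separately the $\ret$-, $\iet$-, and $\oet$-projections. One small point worth tightening: your justification for the $\oet$-bound (``their distinct assignments in $\rel(s)$ are bounded by the overall number of output solutions, $v$'') needs global consistency explicitly, not just the non-joining property of $\oet$ variables --- the paper's argument is precisely that global consistency guarantees no tuple is ever deleted during the while loop, so every partial assignment over output variables stored in an intermediate relation survives into the final output; without this, the $\oet$-projection could in principle contain partial assignments that are later filtered and thus exceed $v$. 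Also note that the \ACQ\ shunt omits the semijoin with $q$ present in the GLOBAL-CONSISTENCY shunt, so the $\iet$-bound is not literally ``verbatim'' from Theorem~\ref{t-compl-fr} --- but your invocation of global consistency on $\rel(q)$ is exactly what fills that gap.
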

\begin{proof}
Use Algorithm \ACQ\ on $(\PHI,O,{\DB})$ and its e-join tree: the while loop of instruction $2$ is very similar to the ascending phase of the
previous algorithm, so that it is easy to see that property (a) of the theorem stems from the proof of Theorem~\ref{t-compl-fr}.

Concerning property (b), compare the algorithm for enforcing global consistency in Figure~\ref{f-fr-algo} with \ACQ, where the extra output
variables that are kept in the e-join tree $T$ may increase the size of the intermediate relations. If the cardinality of the set $O$ is
bounded by a fixed constant, such a size is bounded by a polynomial of the input size. However, in general the set $O$ is arbitrary, possibly
the whole set of variables occurring in the CSP instance. Therefore, to get the desired output-polynomial bound (property b), we use the
crucial property that the given instance is global consistent, so that each tuple of values which is stored in the output variables during the
while loop of \ACQ\ will eventually be part of the solutions (no tuple can be deleted during the while loop). It follows that the size of the
(partial) assignments over the output variables in the intermediate relations cannot exceed the size $v$ of the result.
\end{proof}

Because the number of solutions over the output variables $O$ can be exponential with respect to the size of the input, if we are interested in
computing just one best solution, the complexity can be made smaller by using $O=\emptyset$ in \ACQ\ and then building the desired solution
backward with a final top-down step.

\begin{cor}\label{t-compl-max-parallel}
On a parallel DB-machine with $c$ processors, given a globally consistent CSP instance $(\PHI,O,{\DB})$ with an e-join tree for it having $n$
vertices, \underline{\anymaxref}{($\PHI_\mathcal{F},O,\mathrm{DB}$)} can be computed by performing
\begin{itemize}
\item[(a)] a sequence of at most $4(\lceil \log\ c \rceil + 2\lceil n/4c \rceil )$ parallel shunt operations;
\item[(b)] by using $O(n)$ intermediate relations, having size $O(d^2)$, where $d$ is the size of the largest constraint relation.
\end{itemize}
\end{cor}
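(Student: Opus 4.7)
The plan is to decouple the computation of the \emph{optimal value} from the computation of an \emph{optimal assignment}, so as to avoid carrying output variables up through the tree (which in Theorem~\ref{t-compl-acq} introduces the extra factor $v$ in the size of the intermediate relations).

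First I would invoke Algorithm~\ACQ\ on the given instance but with the empty set of output variables, i.e., on $(\PHI_{\mathcal{F}},\emptyset,\DB)$ with the same e-join tree $T$. By Theorem~\ref{t-algo-ACQ-sound} this correctly computes a unique value at the root of the contracted tree, namely the best value $z^\star$ achievable by any solution of $(\PHI_{\mathcal{F}},\DB)$; and by Theorem~\ref{t-compl-acq}, since $|O|=0$ forces $v=O(1)$, this phase uses a sequence of at most $2(\lceil \log c\rceil + 2\lceil n/4c\rceil)$ parallel shunt operations and $O(n)$ intermediate relations, each of size $O(d^2)$. Crucially, during this ascending phase I would also \emph{retain} for each vertex $s$ that is shunted away the auxiliary ``frozen'' copy $s_w$ of its scope/relation at the moment of its marking, exactly as done in the GLOBAL-CONSISTENCY algorithm of Section~\ref{sec:fullreducer}; the size bound $O(d^2)$ still holds on these copies by the same argument as in the proof of Theorem~\ref{t-compl-fr}.

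Next I would run a descending/reconstruction phase that mirrors step~(7) of the GLOBAL-CONSISTENCY algorithm, but guided by the value $z^\star$. At the root $p$ with children $p'$ and $p''$ (the only three surviving vertices), I pick any single tuple $\theta_p \in \rel(p)\bowtie^{\oplus}\rel(p')\bowtie^{\oplus}\rel(p'')$ whose value equals $z^\star$, and I restrict these three relations to the projections of $\theta_p$ on their respective scopes. Then, for $w$ going from the highest mark down to $1$, I apply in parallel an r-shunt-like step to every pair of adjacent vertices with mark $w$: given that the parent $s$ already stores a single tuple (the projection of the partial optimum fixed so far), and using its frozen copy $s_w$ together with $\rel(q)$ and $\rel(s)$ in the enlarged subproblem of Figure~\ref{f-rshuntw}, I select one extension that achieves the locally best value consistent with the fixed tuple, and fix the corresponding single tuples for $p$ and $l$. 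Lemma~\ref{lem:distribution} (applied to a valuation function built over the single operator $\oplus$, for which an embedding exists at every vertex of any tree projection) guarantees that local ``best extension'' choices can be made independently on the two subtrees separated by a vertex, so the greedy top-down fixing yields a globally optimal full assignment.

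This descending phase performs another sequence of at most $2(\lceil \log c\rceil + 2\lceil n/4c\rceil)$ parallel shunt/r-shunt operations, for a grand total of $4(\lceil \log c\rceil + 2\lceil n/4c\rceil)$, giving~(a). For~(b), the only relations in play are those already present at the end of the ascending phase (size $O(d^2)$ by Theorem~\ref{t-compl-acq}), the frozen copies $s_w$ (size $O(d^2)$ by the proof of Theorem~\ref{t-compl-fr}), and single-tuple restrictions produced during reconstruction, which can only shrink the relations. The main obstacle I anticipate is verifying that the greedy top-down selection really produces a globally optimal assignment even though local choices at parallel r-shunt steps are made simultaneously: this has to be justified by showing that the separator property of the embedding (Lemma~\ref{lem:distribution}), combined with the fact that the parent's tuple already fixes all the shared variables, makes the two children's optimal extensions independent, so the simultaneous choices cannot conflict. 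Once this independence argument is in place, correctness and the claimed bounds follow, yielding the corollary.
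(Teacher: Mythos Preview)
Your proposal is correct and matches the paper's approach: the paper's entire justification for the corollary is the single sentence preceding it, namely to run \ACQ\ with $O=\emptyset$ (so that the factor $v$ in Theorem~\ref{t-compl-acq} becomes $O(1)$) and then reconstruct one optimal assignment by a backward top-down pass. You have fleshed out that top-down pass explicitly by reusing the frozen copies $s_w$ and the r-shunt machinery from the GLOBAL-CONSISTENCY algorithm, which is exactly the natural way to realize the paper's ``final top-down step'' within the stated bounds; the doubling from $2(\cdot)$ to $4(\cdot)$ and the $O(d^2)$ size bound fall out just as you argue.
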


\section{Further Related Work}\label{sec:related}

In this section, in addition to the references already given, we discuss further literature that is closely related to our research, by
emphasizing our specific modeling choices.

\smallskip

\paragraph{Soft Constraints} Soft constraints are classical constraints~\cite{D03} enriched with the ability of associating either with the entire constraint or with each assignment of its variables a weight (meant to encode, for instance, a level of preference or
a cost).
The use of soft constraints leads to generalizations of the basic CSP setting, such as \emph{fuzzy}~\cite{DFP93},
\emph{probabilistic}~\cite{FL93}, \emph{possibilistic}~\cite{S92}, \emph{partial}~\cite{FW92}, and \emph{lexicographic}~\cite{FLS93} CSPs.
These extensions can be viewed as special instances of the general setting of \emph{semiring}-based CSPs~\cite{BMR97}, where each assignment in
a constraint is associated with a value taken from a domain $\mathbb{D}$ over which a \emph{constraint-semiring}
$\tuple{\mathbb{D},\oplus,\otimes,0,1}$ is defined\footnote{In particular, $\oplus$ is closed, commutative, associative, idempotent, $0$ is its
unit element, and $1$ is its absorbing element; $\otimes$ is closed, commutative, associative, distributes over $\oplus$, $1$ is its unit
element, and $0$ is its  absorbing element.}.
Intuitively, $\otimes$ is a binary operator combining the values associated with the various constraints, while $\oplus$ is a binary operator
inducing a partial order $\succeq_{\oplus}$ over $\mathbb{D}$ such that $a\succeq_{\oplus} b$ if and only if $a=a\oplus b$ holds. The goal is
to find an assignment whose total value is minimal w.r.t.~this order.

In many cases of practical interest, the domain $\mathbb{D}$ is already associated with a total order and, therefore, the semiring-based model
can be reduced to the setting of \emph{valued} CSPs~\cite{SFV95}---see~\cite{BFMRSV96}, for a formal comparison of the two settings.
In a valued CSP, the domain is part of a valuation structure $\tuple{\mathbb{D},\circledast,\geq}$, where $\geq$ is a total order over
$\mathbb{D}$ and where $\circledast$ is a commutative, associative, and monotonic binary operator used (as usual) to combine the values.
For instance, let $k\in \mathbb{N}\cup\{\infty\}$ be an element taken from the set of the natural numbers extended with the positive
infinity\footnote{As usual, it holds that $a+\infty=\infty$ and $\infty\geq a$, for each $a\in \mathbb{N}\cup\{\infty\}$.} and consider the
valuation structure $\tuple{\{0,1,...,k\},\oplus,\geq}$ where $a\oplus b=\min \{k, a+b\}$.
This structure gives rise to the well-known \emph{weighted} CSP setting: For each constraint $(S_v,r_v)$ and for each assignment $\theta_v$
over the variables in $S_v$, if $\theta_v$ is not in $r_v$, then its value is $k$, which basically means that the given partial assignment is
forbidden. Instead, if $\theta_v$ is in $r_v$, then its value is a non-negative cost. The goal is to check whether there is any $\geq$-minimal
assignment whose associated value is different from $\infty$, and to compute one if any.
Despite their simplicity, weighted CSPs are expressive enough to model all valued CSPs over discrete valuation structures, provided that
$\circledast$ has a partial inverse~\cite{C05}.

\medskip

\paragraph{Encodings}
From the above discussion, it is easily seen that \emph{hard} and soft constraints do not need to be explicitly distinguished in the
formalization, since hard constraints can be enforced by just associating an infinite cost to the assignments that are not admissible (and then
looking for solutions with minimum cost).
However, algorithms exploited to process hard constraints often assume a relational representation where only the allowed tuples of values for
each hard constraint are listed, while soft constraints algorithms assume a tabular representation where all possible assignments are listed
 together with their values. Therefore, encoding a hard constraint in terms of a soft one might lead to an exponential blow-up of the size of
its representation.
For an extreme example, consider a constraint $(S_v,r_v)$ such that $r_v$ does not contain any assignment. To encode this (hard) constraint in
terms of a weighted CSP, we have to associate with all possible assignments over $S_v$ the value $\infty$. The number of these assignments is
clearly exponential in the \emph{arity} of $S_v$, i.e., $|S_v|$, and hence representing them in tabular form might quickly become unfeasible.

In fact, most of the works in the literature on soft CSPs do not care about the issue, because a \emph{bounded-arity} setting is (implicitly)
considered, i.e., the size of the largest constraint scope is assumed to be bounded by a fixed constant---when this constant is $2$, then we
get the classical setting of \emph{binary} CSPs~\cite{BCVBW02}.
In the present paper, instead, we have not posed any arity bound on the given constraints, so that it was natural to avoid listing all possible
assignments (cf.~\cite{LD03,KDLD05}). Accordingly, we assumed that the input to our reasoning problems is given by a \emph{standard} CSP
instance $\mathcal{I}$, where only the assignments that are allowed are explicitly represented, plus an optimization function built on top of a
valuation structure (over a set of binary operators) associating a value \emph{only} with the assignments that are contained in some of the
constraint relations of $\mathcal{I}$.

\medskip

\paragraph{Decomposition Methods}
One of the most important and deeply studied island of tractability for standard CSPs is the class of instances whose associated hypergraphs
are {\em acyclic}~\cite{Y81,FMU82,F83}.
Structural decomposition methods are approaches for extending the good results about this class to relevant classes of {\em nearly acyclic}
structures (see, e.g.,~\cite{GLS00,CJG08,GF10} and the references therein). On CSP instances having bounded arity, the \emph{tree
decomposition}~\cite{RS86,DP89,FFG02} emerged to be the most powerful decomposition method~\cite{G07}. Its natural counterpart over arbitrary
instances is the \emph{(generalized) hypertree} decomposition method\cite{GLS02,GMS09}, but it has been observed that this method does not
chart the frontier of tractability---for instance, further classes of tractable instances can be identified via the \emph{fractional hypertree}
method~\cite{GM14,M10} (see also the more recent results in~\cite{DBLP:journals/corr/FischlGP16}).
All these methods (including fractional hypertree decompositions) fit into the framework of the {tree projections}~\cite{GS84,SS93,GMS09},
within which the results derived in the paper have been positioned.

Structural decomposition methods have been shown to play a crucial role even in presence of soft constraints, as introduced above. Indeed, the
tractability of semiring-based and valued CSPs over structures having bounded treewidth has been shown in the literature~\cite{BMR97,TJ03}, and
the effectiveness of the solution approaches has been practically validated, too  (see, e.g.,~\cite{JT03}). Actually, in a variety of automated
reasoning areas, similar solution algorithms over structures having bounded treewidth have been proposed over the years. A unifying perspective
of all of them has been provided by~\cite{KDLD05}, where the concept of \emph{graphical model} is introduced (essentially capturing a valuation
structure) and where the bucket-tree elimination algorithm has been introduced to solve instances of this model having a tree-like structure.
In~\cite{KDLD05} it has been also shown how these results can be extended if the notion of hypertree decomposition is used in place of the
notion of tree decomposition. However, for this extension, the given valuation structure has to satisfy certain technical
conditions\footnote{Values are real numbers, with 0 meant to encode that an assignment is not allowed/desirable. The binary operator
$\circledast$ must be absorbing relative to 0, i.e., $a\circledast 0=0$. For example, multiplication has this property while summation has
not.}, which reduces its range of applicability.

Efficient solution algorithms that work on instances having bounded hypertree width and with arbitrary valuation structures have been more
recently proposed by~\cite{GS11,GG13}.
In the present paper, we further generalized these algorithms to deal with complex optimization functions where more than just one aggregation
operator is allowed. Moreover, unlike all the references reported above, we have not assumed that a decomposition is given (except for
Section~\ref{sec:parallel}), which is a useful assumption when computing a decomposition is an $\NP$-hard problem~\cite{GMS09}. This has been
obtained by designing promise-free algorithms that either compute correct certified solutions, or disprove the (promised) existence of a
decomposition, in the spirit of~\cite{CD05}. Both generalizations are non trivial, and novel technical machineries have been required.

\medskip

\paragraph{Constraint Propagation}
From a technical viewpoint, our algorithms are based on procedures enforcing pairwise consistency~\cite{BFMY83}, also  known in the CSP
community as {\em relational arc consistency} (or arc consistency on the dual graph)~\cite{D03}, {\em 2-wise consistency}~\cite{G86}, and
$R(*,2)C$~\cite{KWRCB10}, which are suitably adapted to deal with the propagation of the weights (in addition to the propagation of the
information in the allowed assignments).

Constraint propagation is a fundamental technique in the context of CSPs (see, e.g.,~\cite{D03}), and a number of different propagation
strategies have been proposed over the years for standard CSPs.
Moreover, it is known since the very introduction of the setting of valued and semiring-based CSPs that suitable notions of consistency can be
used for propagation in the constraint optimization framework, too.
It has been observed in~\cite{BMR97} that when the aggregation operator is idempotent, then \emph{soft local consistency} terminates by
producing an instance equivalent to the original one and in a way that the result does not depend on the order of application of the
propagation rules---see also~\cite{BFMR04}, for a general environment supporting different forms of soft propagations founded on these
properties.
The result excludes non-idempotent operators, such as the very basic summation, and hence it has a limited scope of applicability.
However, for operators that are not necessarily idempotent but admit a partial inverse, a notion of \emph{soft arc consistency} can be defined
whose enforcement preserves at least the equivalence with the original instance~\cite{CS04}.
The goal of soft arc consistency is to transform a problem into an equivalent one, by providing incrementally maintainable bounds which are
crucial for branch and bound search~\cite{LT93}. Motivated by the fact that the fixpoint of the computation is not unique and may lead to
different lower bounds, the problem of finding optimal sequences of soft arc consistency operations has been recently addressed
by~\cite{CGSSZW10}.

While our algorithms might be abstractly viewed as methods enforcing (suitable kinds) of soft consistency, it must be pointed out that they are
completely orthogonal to the research illustrated above. The correctness of our propagation strategies (except for Section~\ref{sec:parallel})
only requires that the instance has a tree-like structure (without any further knowledge about it, but its existence) to find the optimal
sequence of consistency operations leading not only to a bound, but in fact to the exact solution.
There is no obvious way to extend our results to design algorithms for efficiently solving (or just enforcing bounds in) arbitrary instances.

\section{Conclusion}\label{sec:conclusion}

A formal framework for constraint optimization has been proposed and analyzed. The computational complexity of reasoning problems related to
computing the best solutions have been studied. In particular, structural tractability results have been derived within the general setting of
{tree projections}. Transferring our theoretical findings into the design of a practical platform for constraint optimization is a natural
avenue of further research. With respect to foundational analysis and theoretical contributions, instead, efforts might be spent to study
extensions of the framework supporting, for instance, forms of multi-criteria optimization.

\section*{Acknowledgments}

Georg Gottlob's work was supported by the EPSRC Programme Grant EP/M025268/ ``VADA: Value Added Data Systems -- Principles and Architecture''.
The work of Gianluigi Greco was supported by the Italian Ministry for Economic Development under PON project ``Smarter Solutions in the Big
Data World'' and by the Regione Calabria under POR project ``Explora Process''.

\bibliographystyle{plain}
%\biboptions{number,sort&compress}
\bibliography{biblio}

\def\appendixname{}

\appendix

\section{Proof of Theorem~\ref{thm:nopromiseMAX}}\label{sec:proof}

In this section, we analyze the correctness of \algMax. The proof is rather involved and is discussed incrementally.

Hereinafter, recall that $\Q$ is a constraint formula, $\DB$ a constraint database, $O\subseteq\vars(\Q)$ a set of variables, $\mathcal{F}$ a
valuation function, $\V$ a view set for $\Q$, and $\DB\mbox{\rm '}$\, a legal database instance for $\V$ w.r.t.~$\Q$ and $\DB$. Moreover,
recall the following.

\begin{proposition}[\cite{GS10}]\label{fact:equiv2}
Let $\Q$ be a constraint formula over {\em $\DB$}, $\V$ a view set for $\Q$, and {\em $\DBW$} a legal database for $\V$ w.r.t.~$\Q$ and {\em
$\DB$}.
Let $\Q_a$ be a {sandwich formula} of $\Q'$ w.r.t.~$\V$, where $\Q'$ is a subformula of $\Q$ such that $\Q'\homEquiv \Q$. Then,

\begin{enumerate}
\item[(1)] A database {\em $\DB_a\mbox{\rm \hspace{-1.5mm}'}$} legal for $\atoms(\Q_a)$ w.r.t.~$\Q$ and {\em $\DB$} can be computed in
    polynomial time.
    %  from any database {\em $\DBW$} that is legal on $\V$ w.r.t.~{\em $\DB$}.

\item[(2)] If {\em $\DBW$} is locally consistent w.r.t.~$\V$, then for every $w\in \V$ and every $h\subseteq \vars(w)$ such that there is
    atom $q_a\in\ \atoms(\Q_a)$ with $h\subseteq \vars(q_a)$, {\em $w^{\onDBW}[h]=\Q^\onDB[h]$}.

\item[(3)] If $\Q_a$ is a sandwich formula of $\Q$ w.r.t.~$\V$ (and not just of $\Q'$), then {\em $\Q_a^{\onDB_a\mbox{\small \rm
    \hspace{-1.5mm}'}}=\Q^\onDB$};
\end{enumerate}
\end{proposition}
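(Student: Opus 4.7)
The plan is to prove the three items in order, with item (2) providing the core technical content that item (3) essentially specializes.

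For item (1), I would construct $\DB_a\mbox{\rm \hspace{-1.5mm}'}$ explicitly by projection. For each atom $q_a\in\atoms(\Q_a)$, since $\HG_{\Q_a}\leq \HG_\V$, I pick a view $w_{q_a}\in\V$ with $\vars(q_a)\subseteq\vars(w_{q_a})$, and set $\rel(q_a,\DB_a\mbox{\rm \hspace{-1.5mm}'}):= w_{q_a}^{\onDBW}[\vars(q_a)]$. For each base view $w_q\in\atoms(\Q_a)$ associated with an atom $q$ of $\Q'$, I take $w_{q_a}=w_q$. Legality then reduces to two checks: condition (i), $w_q^{\onDB_a'}=w_q^{\onDBW}\subseteq q^\onDB$, which is inherited directly from the legality of $\DBW$; and condition (ii), $q_a^{\onDB_a'}=w_{q_a}^{\onDBW}[\vars(q_a)]\supseteq \Q^\onDB[\vars(q_a)]$, which follows from $w^\onDBW\supseteq \Q^\onDB[w]$ and the monotonicity of projection. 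The construction only projects bounded-size relations and is therefore polynomial.

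For item (2), the core argument is of Yannakakis type. Since $\HG_{\Q_a}$ is acyclic, fix a join tree $\JT_a$ whose vertices are labeled by atoms of $\Q_a$. Local consistency of $\DBW$ on $\V$ transfers to pairwise consistency of the atoms of $\Q_a$ interpreted in $\DB_a\mbox{\rm \hspace{-1.5mm}'}$: whenever two atoms $q_a,q_a'$ share variables, each of their relations is the projection of a view in $\V$ onto the corresponding variables, and pairwise consistency on $\V$ forces every tuple to be extendable to the other. Hence the semijoin passes along $\JT_a$ do not shrink any relation, and every tuple of any atom extends to a total solution of $(\Q_a,\DB_a\mbox{\rm \hspace{-1.5mm}'})$. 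Using the connectedness condition on $\JT_a$ together with the covering condition $\HG_{\Q'}\leq \HG_{\Q_a}$, one concludes that $\Q_a^{\onDB_a'}[\vars(\Q')]=\Q'^\onDB$. Finally, I would invoke $\Q'\homEquiv\Q$ through a small auxiliary lemma: any homomorphism $h:\vars(\Q)\to\vars(\Q')$ composed with a solution of $\Q'$ gives a solution of $\Q$, while restriction of a solution of $\Q$ to $\vars(\Q')$ yields a solution of $\Q'$, so for any $h\subseteq\vars(q_a)$ we obtain $\Q^\onDB[h]=\Q'^\onDB[h]=w^\onDBW[h]$. The main obstacle is precisely this last step: bookkeeping how the homomorphism between $\Q$ and $\Q'$ induces equalities on projected answers, since one must show that no tuples of $\Q^\onDB$ are missed when passing to the subformula $\Q'$, which relies on the fact that atoms of $\Q$ homomorphically map to atoms of $\Q'$ with equal scopes up to renaming.

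Item (3) follows as a short specialization. Taking $\Q'=\Q$, item (2) already gives $q_a^{\onDB_a'}=\Q^\onDB[\vars(q_a)]$ for every atom $q_a\in\atoms(\Q_a)$. From this, a standard bottom-up traversal of $\JT_a$ assembles any solution of $(\Q_a,\DB_a\mbox{\rm \hspace{-1.5mm}'})$ from consistent projections of full solutions to $\Q$, yielding $\Q_a^{\onDB_a'}\subseteq\Q^\onDB$, while the covering condition guarantees that every solution to $\Q$ satisfies $\Q_a$, giving the reverse inclusion and hence $\Q_a^{\onDB_a'}=\Q^\onDB$.
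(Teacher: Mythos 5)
First, a framing point: the paper does not actually prove this proposition --- it is imported from \cite{GS10} as a black box, so there is no in-paper argument to compare yours against. Your reconstruction does follow what is essentially the right strategy (build $\DB_a'$ by projecting views, use the two legality conditions for the easy inclusions, and use the fact that pairwise consistency plus acyclicity yields global consistency), and item (1) together with the acyclicity part of item (2) is sound.

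There is, however, a genuine gap exactly where you locate the ``main obstacle,'' and your sketch does not close it. To show $w^{\onDBW}[h]\subseteq\Q^\onDB[h]$ you produce, from a tuple $t$ over the variable set $h$, an assignment $\theta'\in\Q'^{\onDB}$ with $\theta'[h]=t$, and then invoke a homomorphism $g:\vars(\Q)\mapsto\vars(\Q')$ to obtain $\theta'\circ g\in\Q^\onDB$. But $(\theta'\circ g)[h]=\theta'[g(h)]$, which equals $t$ only if $g$ fixes the variables of $h$ pointwise; an arbitrary homomorphism witnessing $\Q'\homEquiv\Q$ need not fix any variable of $\vars(\Q')$, and moreover $\vars(q_a)$ may contain variables outside $\vars(\Q')$ altogether. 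So the asserted equality $\Q^\onDB[h]=\Q'^{\onDB}[h]$ simply does not follow from the two observations you list (composition gives solutions of $\Q$, restriction gives solutions of $\Q'$): those only give $\Q^\onDB[\vars(\Q')]\subseteq\Q'^{\onDB}$ and the existence of \emph{some} solution of $\Q$ for each solution of $\Q'$, not one agreeing with it on $h$. A standard repair is to replace $g$ by an idempotent power (possible since $\vars(\Q)$ is finite), which fixes its image pointwise, and then argue that the covered sets $h$ one needs can be taken inside that image, or to carry the extension of $t$ along the join tree so that agreement on $h$ is preserved by construction; none of this is in your write-up. Two smaller issues: the statement quantifies over arbitrary $w\in\V$ with $h\subseteq\vars(w)$, so you still need one pairwise-consistency step transferring $t$ from $w$ to the view whose projection realizes $q_a$ before extending along $\JT_a$; and your item (3) is routed through item (2) and therefore silently assumes local consistency of $\DBW$, which item (3) does not require --- the inclusion $\Q_a^{\onDB_a'}\subseteq\Q^\onDB$ already follows directly from legality condition (i) on the base views, with no consistency or acyclicity argument at all.
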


\subsection{Promise Tractability}

%Let $\Q'$ be a subformula of $\Q$, i.e., such that $\atoms(\Q')\subseteq \atoms(\Q)$. We say that $\Q'$ is \emph{homomorphically equivalent} to
%$\Q$, denoted by $\Q'\homEquiv \Q$, if there is a homomorphism from $\Q$ to $\Q'$. A formula $\Q'$ is a \emph{core} of $\Q$, if $\Q'\homEquiv
%\Q$ and there is no subformula $\Q''$ of $\Q'$ with $\Q''\homEquiv \Q$ and $\Q''\neq \Q'$.

%\subsection{Analysis under a Less Stringent Promise}

For any set of variables $O$, define $\atom(O)$ to be a fresh atom (with a fresh relation symbol) over these variables, i.e., such that
$O=\vars(\atom(O))$.
We start the analysis of \algMax\ by considering the following promise:

\begin{itemize}
\item[\textbf{(P1)}] {\em $(\mathcal{F},O)$ can be embedded in $(\Q',\V)$ for some subformula $\Q'$ of $\Q\wedge\atom(O)$ such that
    $\Q'\homEquiv \Q\wedge\atom(O)$}.
\end{itemize}

Note that \textbf{P1} is less stringent than assuming that $(\mathcal{F},O)$ can be embedded in $(\Q,\V)$.
In fact, we will find it convenient to show that \algMax\ is correct (as a promise algorithm), even under \textbf{P1}.
To this end, we first prove the following technical lemma, stating some crucial properties of separator views.

\begin{lemma}\label{lem:propagation}
Assume that \emph{\textbf{P1}} holds. After the last execution of {\it evaluate} in \algMax, $\newsep_s$ contains a view $[w_O X]$ with
$\vars(w_O)=O$ such that, for all $\theta\in [w_O X]^{\onDB'_{s}}$, $\theta[X] = \max_{\mathcal{F}} (\theta[O])$.
\end{lemma}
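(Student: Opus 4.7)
The plan is to prove the lemma by a bottom-up induction along the topological ordering $p_1, \dots, p_s$ of the output-aware parse tree $\mathit{tree}(\mathcal{F},O)$. Under promise \textbf{P1}, fix a sandwich formula $\Q_a$ of some $\Q' \homEquiv \Q \wedge \atom(O)$ together with a witnessing join tree $\JT = (V, E, \chi)$ of $\HG_{\Q_a}$ and an embedding $\xi : V_O \to V$. For each vertex $p_i$, I would pick a view $w^\ast_i \in \V$ with $\chi(\xi(p_i)) \subseteq \vars(w^\ast_i)$, which exists because $\HG_a \leq \HG_\V$; for the root $p_s$ I would take $w^\ast_s = w_O$. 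The inductive claim to be established is: after the $i$-th call to \emph{evaluate}, $\newsep_i$ contains an augmented view built on $w^\ast_i$ whose auxiliary weight variable stores exactly $\max_{\mathcal{F}_{p_i}}(\bar\theta)$ for every assignment $\bar\theta$ over $\vars(w^\ast_i)$. The statement of the lemma is the specialisation of this claim at $i = s$.

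For the base case, when $p_i$ is a leaf labelled by $f_i$, the initialisation places $[w_{f_i} X_i^{(w_{f_i})}]$ into $\newsep_i$, storing weight $f_i(\theta)$ for each $\theta \in w_{f_i}^{\onDB_1}$, and the evaluation step leaves this untouched. One can take $w^\ast_i = w_{f_i}$. Since $\DB_1$ has been made pairwise consistent and \textbf{P1} supplies a tree projection of a formula homomorphically equivalent to $\Q \wedge \atom(O)$, Proposition~\ref{fact:equiv2}(2) yields $w_{f_i}^{\onDB_1}[\vars(f_i)] = \Q^{\onDB}[\vars(f_i)]$, so every such $\theta$ extends to a full solution and its stored weight $f_i(\theta)$ equals $\max_{\mathcal{F}_{p_i}}(\theta)$.

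For the inductive step at an internal vertex $p_i$ with children $p_r, p_t$ and label $\oplus$, I would first invoke Theorem~\ref{thm:property} to conclude that the $\xi$-images of the subtrees rooted at $p_r$ and $p_t$ fall in distinct connected components of $\JT \setminus \xi(p_i)$. Combined with the inductive hypothesis on $p_r$ and $p_t$ and the connectedness of $\JT$, Lemma~\ref{lem:distribution} applies at the separator $\xi(p_i)$, giving $\mathit{marg}(\bar\theta, w^\ast_i, w^\ast_r, w^\ast_t) = \max_{\mathcal{F}_{p_i}}(\bar\theta)$ for each $\bar\theta \in (w^\ast_i)^{\onDB_1}$. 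The main hurdle is to show that this canonical augmented view survives the filtering step, which keeps only triples $(w, w', w'')$ whose per-tuple marginalised weight is \emph{minimum}. Here Lemma~\ref{lem:bound} rescues us: for any alternative $(w, w', w'')$, the value $\mathit{marg}(\bar\theta, w, w', w'')$ is an upper bound on $\max_{\mathcal{F}_{p_i}}(\bar\theta)$ because it $\oplus$-combines maxima computed over arbitrary views rather than over true separators, so $(w^\ast_i, w^\ast_r, w^\ast_t)$ attains the minimum and the augmented view $[w^\ast_i X_i^{(w^\ast_i)}]$ is retained with the correct weights. In parallel, one must check that the intervening \emph{propagate} calls cannot remove a tuple $\bar\theta \in (w^\ast_i)^{\onDB_1}$ that extends to a full solution, which follows from the inductive hypothesis: the weights fed up from the children are already true maxima, hence automatically consistent with the neighbouring augmented views used in the consistency enforcement.

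Finally, for the root $p_s$ — whose only child is $p_{s-1}$ — the analogous reasoning applies to the view $[w_O X_{s-1}^{(w^\ast_{s-1})}]$: the initial projection onto $O$ ensures $\vars(w_O) = O$ in the augmented relation, and the evaluate step at the root selects a view with minimum per-tuple weight, which by Lemma~\ref{lem:bound} and the inductive hypothesis must store exactly $\max_{\mathcal{F}}(\theta)$ for each $\theta \in w_O^{\onDB_1}[O]$. This yields the lemma. The subtlest point throughout is the filtering argument at internal nodes, where Lemma~\ref{lem:bound}, Lemma~\ref{lem:distribution}, and Theorem~\ref{thm:property} have to be orchestrated to certify that the embedding-aligned view is both a minimiser (and therefore retained) and a faithful carrier of the true $\max_{\mathcal{F}_{p_i}}$ values.
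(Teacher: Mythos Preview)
Your overall structure (bottom-up induction along the topological order, using the embedding $\xi$ to single out a canonical view $w^\ast_i$) matches the paper's, but the inductive hypothesis you carry is too weak to close the argument.

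The gap is at the filtering step. The algorithm computes $\mathit{best}(\bar\theta,w)$ as the \emph{minimum} over all child-view pairs $(w',w'')$ of $\mathit{marg}(\bar\theta,w,w',w'')$, and then discards views whose overall maximum is not minimal. For your canonical triple $(w^\ast_i,w^\ast_r,w^\ast_t)$ to survive and carry the correct value, you need that \emph{every} pair $(w',w'')$ yields $\mathit{marg}(\bar\theta,w,w',w'')\geq \max_{\mathcal{F}_{p_i}}(\bar\theta)$. You invoke Lemma~\ref{lem:bound} for this, but that lemma bounds $\max_{\mathcal{F}_\ell\oplus\mathcal{F}_r}(\theta)$ by the $\oplus$-combination of the \emph{true} maxima $\max_{\mathcal{F}_\ell}(\theta)$ and $\max_{\mathcal{F}_r}(\theta)$. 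The values stored in the auxiliary variables of an \emph{arbitrary} augmented view $[w'\,X^{(w')}_r]\in\newsep_r$ after \emph{propagate} are not a~priori these true maxima; they are whatever the algorithm happened to compute along a chain of consistency operations that need not be backed by any tree projection. Your inductive hypothesis only tracks the good view $w^\ast_r$, so you have no control over the others.

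The paper handles this by carrying a second invariant: for \emph{every} view $[w'\,X]\in\newsep_i$ and every $\theta$ in its relation, $\theta[X]\geq \max_{\mathcal{F}_{p_i}}(\theta[w'])$. Establishing that this upper-bound property is preserved by \emph{propagate} for generic (non-separator) views is itself nontrivial---the paper introduces an auxiliary set of views $\V_+$ to force the existence of a tree projection covering both endpoints, applies Proposition~\ref{fact:equiv2} there, and then argues that removing $\V_+$ can only increase the propagated weights. Your sketch does not analyse \emph{propagate} at all beyond saying it ``cannot remove a tuple''; in fact the step does substantially more (it attaches $\dom(X^{(w)}_i)$ to all base views, enforces pairwise consistency on the augmented set, and marginalises), and both the ``correct value reaches the parent'' direction and the ``generic views stay upper bounds'' direction require separate arguments.
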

\begin{proof}
Since $(\mathcal{F},O)$ can be \emph{embedded in $(\Q',\V)$} for some subformula $\Q' \homEquiv \Q\wedge\atom(O)$, there exists a sandwich
(acyclic) formula $\Q_a$ equivalent to $\Q'$ and $\Q\wedge\atom(O)$ and an embedding $\xi$ that satisfies the conditions for being an embedding
for some join tree $\JT=(V,E,\chi)$ of the tree projection $\HG_{\Q_a}$. In particular, we may assume w.l.o.g that, for every leaf $p_j$,
$\chi(\xi(p_j))= \vars(w_{f_j})$ for some function view $w_{f_j}$, and that the root $p_s$ is mapped via $\xi$ to $\atom(O)$, and hence
$\chi(\xi(p_s))=O$. Recall that $(p_1,\dots,p_s)$ is a topological ordering of the vertices of ${\it tree}(\mathcal{F},O)$, and hence $p_s$ is
the root of the tree.

Note that after local consistency, from Proposition~\ref{fact:equiv2}, for every $q\in \Q_a$, $q^{\onDB_1} = \Q^\onDB[q]$. Thus, all
assignments in the relations for the output view and the function views are partial homomorphisms that can be extended to full answers of the
given constraint formula.

The proof is by induction on $i=1\ldots s$. For any view $v\in\newsep_i$, any set of variables $h\subseteq \vars(v)$, and any assignment
$\theta\in v^{\onDB_{i'}}$, let $\max(\theta,X,h,v^{\onDB_{i}'})=\max\{\theta'[X] \mid \theta'\in v^{\onDB_{i}'} \wedge
\theta'[h]=\theta[h]\}$. We show that the following properties hold for views in any set $\newsep_i$, where the first of the two entails the
statement of the lemma:
\begin{enumerate}
\item $\newsep_i$ contains some view $[w X]$ with $h_i\subseteq\vars(w)$, where $h_i$ are the variables of the $p_i$-separator, and for all
    $\theta\in [w X]^{\onDB_{i}'}$, $\max_{\mathcal{F}_{p_i}} (\theta[h_i])= \max(\theta,X,h_i,[w X]^{\onDB_{i}'})$;

\item for each view $[w' X]\in\newsep_i$, for all $\theta\in [w' X]^{\onDB_{i}'}$, $\theta[X]\geq \max_{\mathcal{F}_{p_i}}
    (\theta[w'])$.\footnote{Recall that $\max_{\mathcal{F}_{p_i}} (\theta[w'])=\bot$, whenever $\theta[w']$ cannot be extended to any full
    answer of $\Q$.}
\end{enumerate}

(Basis: $i=1$.) The vertex $p_1$ is a leaf of ${\it tree}(\mathcal{F},O)$ and $\newsep_1$ contains one view $[w_{f_1} X^{(w_{f_1})}_1]$, whose
database relation is $\{ \theta\cup \{ X^{(w_{f_1})}_1/f_1(t)\} \mid \theta\in w_{f_1}^{\onDB_1} \}$. Clearly, in this case, every assignment
$\theta[w_{f_1}]$ has only one possible weight for $f_1(\theta)$, because $f_1=\mathcal{F}_{p_1}$ is evaluated precisely on its variables
$\vars(w_{f_1})$. Therefore,  $\theta[X^{(w_{f_1})}_1] = \max_{\mathcal{F}_{p_1}} (\theta)$ for every assignment $\theta$. Indeed, $\theta$ is
an assignment of a function view and thus it is part of some answer of $\Q$. As a consequence, there are full answers in $\Q^\onDB$ that extend
$\theta$, and all of them will get $\theta[X^{(w_{f_1})}_1]$ as their weight according to ${\mathcal{F}_{p_1}}$. Finally, recall that, being
$p_1$ a leaf, we have $\chi(\xi(p_1))=\vars(f_1)=\vars(w_{f_1})$.

\medskip

(Inductive Step: $i=j+1$.) Assume the statement holds until some $j\geq 1$. That is, we have executed function~{\it evaluate} at step $j$, and
we consider the execution of function~{\it propagate}, in order to obtain the new constraint database $\DB_{j+1}$. This means that we would
like to propagate the weights of $\mathcal{F}_{p_j}$ to the candidate separators in $\newsep_r$, where $p_r$ is the parent of $p_j$ in ${\it
tree}(\mathcal{F},O)$. By the inductive hypothesis, there exists some view $[w X^{(w)}_j]\in\newsep_j$ such that $h_j\subseteq \vars(w)$ are
the variables of the separator $\xi(p_j)$, and where $\forall \theta\in [w X^{(w)}_j]^{\onDB_{j}}$, $\max(\theta,X^{(w)}_j,h_j,[w
X^{(w)}_j]^{\onDB_{j}}) = \max_{\mathcal{F}_{p_j}} (\theta[h_j])$. Let $h_r=\chi(\xi(p_r))$ be set of variables of the $p_r$-separator
occurring in the tree projection and thus included in the variables of some atom $a_r$ of the sandwich formula $\Q_a$. Since all views in $\V$
are considered as candidates to be $p_r$-separators, $\newsep_r$ contains some views of the form $[w_r X' X'']$, such that $h_r \subseteq
\vars(w_r)$.
By construction---see the initialization step---among them there is also an augmented view having the form  $[w_r X^{(w)}_j X']$. Then, there
is a step in function~{\it propagate} where we enforce local consistency on the pair $(\V_j,\DB_j')$ where $\V_j$ includes  $[w X^{(w)}_j]$ and
$[w_r X^{(w)}_j X']$, together with all augmented views $[w_b X^{(w)}_j]$, for each $w_b\in\V$. Observe that the corresponding constraint
database $\DB_j'$ is legal for $\V_j$ w.r.t.~$\Q_+=\Q''\wedge [w X^{(w)}_j]$ and $\DB''$, where $\Q'' = \bigwedge_{q\in \Q'} [q X^{(w)}_j]$;
and the relations in $\DB''$ have the form $\rel(q,\DB)\times\dom(X^{(w)}_j)$, for any $q\in \Q'$, and $\rel([w X^{(w)}_j],\DB_j')$ for $[w
X^{(w)}_j]$. Indeed, $\DB_1$ was legal w.r.t.~$\Q'$ and $\DB$, and the new variable $X^{(w)}_j$ occurs in the view $[w X^{(w)}_j]$ with the
right weights, and in all other relations with all possible weights in its active domain (for every assignment of the original relation).

Consider the acyclic hypergraph $\HG_a^+$ such that $\nodes(\HG_a^+)=\nodes(\HG_{\Q_a})\cup \{X^{(w)}_j\}$, and $\edges(\HG_a^+)=\{ h\cup
\{X^{(w)}_j\} \mid h\in \edges(\HG_{\Q_a})\}$. Since the new variable occurs in all available views, and $h_j$ is covered in $\HG_{\Q_a}$,
$\HG_a^+$ is clearly a tree projection of $\HG_{\Q_+}$ w.r.t.~$\HG_{\V_j}$. In particular, because $\HG_{\Q_a}$ covers $h_r$, the tree
projection $\HG_a^+$ covers $h_r\cup \{X^{(w)}_j\} $, too. From Proposition~\ref{fact:equiv2}, after $\DB_{j+1}$ is obtained by enforcing local
consistency on $(\V_j,\DB_j')$, $[w_r X^{(w)}_j X']^{\onDB_{j+1}}[h_r\cup \{X^{(w)}_j\}]= \Q_+^{\onDB''}[h_r\cup \{X^{(w)}_j\}]$. Recall that
all assignments in $\rel([w X^{(w)}_j],\DB'')[h_j\cup \{X^{(w)}_j\}]$ are correct by the inductive hypothesis. Therefore, the last statement
means that $\rel([w_r X^{(w)}_j X'],{DB_{j+1}})$ is such that each assignment $\theta$ in this relation holds in the variable $X^{(w)}_j$ the
weight of any assignment $\theta'$ of the $p_j$ separator that can be extended to a same full answer of $\Q'$ as $\theta$, i.e., such that
there exists $\theta''\in \Q'^\onDB$ with $\theta''[h_j]= \theta'[h_j]$ and $\theta''[h_r] = \theta[h_r]$. In particular some of these
assignments will hold the correct maximal weight $\max_{\mathcal{F}_{p_j}}(\theta[h_r])$.
Note that all variables $C$ that determine the evaluation of $\mathcal{F}_{p_j}$ are covered in the subtree rooted at the separator
$\xi(p_j)$ (cf. Theorem~\ref{thm:property}). %(see Lemma~\ref{lem:image}).
From the connectedness condition of $\JT$, $(C\cap h_r)\subseteq h_j$, and thus the above possible extensions of $\theta$ are precisely those
relevant for the evaluation of $\mathcal{F}_{p_j}$. It follows that the maximum weight $\max(\theta,X^{(w)}_j,h_r,[w_r X^{(w)}_j
X']^{\onDB_{j+1}})$ over all assignments $[w_r X^{(w)}_j X']^{\onDB_{j+1}}[h_r]$ that agree with $\theta[h_r]$ is the maximum weight for
$\mathcal{F}_{p_j}$ over all possible extensions of $\theta[h_r]$ to answers in $\Q'^\onDB$. Finally, recall that $\Q'$ is a subformula of
$\Q\wedge \atom(O)$, and clearly considering further constraints in possible extensions encoded by the other atoms in $\Q$ cannot improve such
a weight. Therefore, $\max(\theta,X^{(w)}_j,h_r,[w_r X^{(w)}_j X']^{\onDB_{j+1}})$ is indeed equal to $\max_{\mathcal{F}_{p_j}}(\theta[h_r])$.

\medskip

Now, consider instead the propagation from any generic view $[w' X^{(w')}_j]\in\newsep_j$, not necessarily including the variables of some
$p_j$-separator, to a generic view $[w'_r X^{(w')}_j X']\in\newsep_r$. From the inductive hypothesis, for all $\theta\in [w'
X^{(w')}_j]^{\onDB_{j'}}$, $\theta[X^{(w')}_j]\geq \max_{\mathcal{F}_{p_j}} (\theta[h_j])$, where $h_j=\vars(w')$.
We next show that for all $\theta'\in [w_r' X^{(w')}_j X']^{\onDB_{j'}}$, $\theta'[X^{(w')}_j]\geq \max_{\mathcal{F}_{p_j}} (\theta'[h])$,
where $h=\vars(w_r')$. To this end, consider the same construction of $\V_j$, $\DB'_j$, $\Q_+$, and $\DB''$ as above, but where we use
everywhere $w'$ instead of $w$, and $h$ instead of $h_j$. Note that in this general case we do not know whether there exists a tree projection
of $\HG_{\Q_+}$ w.r.t.~$\HG_{\V_j}$ that covers both $h$ and $h'_r$, thus guaranteeing the correct propagation of weights stored in
$X^{(w')}_j$. However we can always add a suitable set of views $\V_+$ to $\V$ having this property, i.e., such that there exists a tree
projection $\HG_a'$ of $\HG_{\Q_+}$ w.r.t.~$\HG_{\V_j\cup \V_+}$ that covers both $h$ and $h'_r$. For each view $w_+\in\V_+$, we add to
$\DB'_j$ the most liberal relation $\dom(X_1)\times\dots\times\dom(X_z)$, if $X_1,\dots,X_z$ are the variables occurring in $w_+$. Clearly, the
resulting constraint database, say $\DB''_j$ is legal for $\V_j\cup \V_+$ w.r.t.~$\Q_+$ and $\DB''$. Assume we enforce local consistency on
$(\V_j\cup \V_+,\DB''_j)$ and let $\DB_+$ be the resulting constraint database. Consider any assignment $\theta'\in w_r'^{\onDB'_j}$. If this
assignment cannot be extended to any full solution (e.g., this is definitely the case if it does not belong to $w_r'^{\onDB_+}$), then
$\mathcal{F}_{p_j}(\theta')=\bot$ and the statement trivially holds. Then, assume by contradiction that $\theta'\in w_r'^{\onDB_+}\cap
\Q'^\onDB[h'_r]$ and that $\theta'[X^{(w')}_j] < \max_{\mathcal{F}_{p_j}}(\theta'[h'_r])$. Thus, there exists some assignment $\theta_m\in
\Q'^\onDB$ such that $\theta_m[h'_r]= \theta'[h'_r]$ and $\mathcal{F}_{p_j}(\theta_m) > \theta'[X^{(w')}_j]$. Note that there cannot exist any
assignment $\theta\in w'^{\onDB_+}\cap \Q'^\onDB[h]$ with $\theta_m[h] = \theta[h]$ and $\theta[X^{(w')}_j] \geq
\max_{\mathcal{F}_{p_j}}(\theta[h])\geq \mathcal{F}_{p_j}(\theta_m)$, otherwise such a weight would be propagated to the view $w_r$ after the
local consistency procedure, thanks to the tree projection $\HG_{\Q_+}$. Indeed, from Proposition~\ref{fact:equiv2}, we get $w'^{\onDB_+}[h\cup
X^{(w')}_j] = \Q_+^{\onDB''}[h\cup X^{(w')}_j]$ and $w_r'^{\onDB_+}[h'_r\cup X^{(w')}_j] = \Q_+^{\onDB''}[h'_r\cup X^{(w')}_j]$.
However, some assignment $\theta''\in w'^{\onDB_+}\cap \Q'^\onDB[h]$ such that  $\theta_m[h] = \theta''[h]$ must exist since $\theta_m$ is a
full solution and thus must match some assignment in every view, from the view-consistency property of the legal database $\DB_+$. Moreover,
from the inductive hypothesis, we know that $\theta''[X^{(w')}_j] \geq \max_{\mathcal{F}_{p_j}}(\theta[h])$, which thus leads to a
contradiction. To conclude the analysis of this step, recall that we considered an additional set of views $\V_+$, to get the desired
decomposition. However, if we remove such views and consider only the set $\V_j$, we get more combinations available in
$w_r'^{\onDB'_j}[h'_r\cup X^{(w')}_j]$ and more weights (possibly wrong) for assignments $\theta'\in w_r'^{\onDB'_j}$ among which to select an
even larger weight for $\theta'[X^{(w')}_j]$.

\medskip

To conclude the proof of the inductive step, consider now the execution of function~{\it evaluate} at step $i=j+1$. If $p_i$ is a leaf, then
the proof is the same as the base case $p_1$. Therefore, assume $p_i$ is not a leaf, and let $p_b$ and $p_c$ its children, with $b<c$, without
loss of generality. Since we are proceeding according to a topological ordering, both $b< c <i$  holds, and thus function~{\it Propagate} has
been already executed for views in $\newsep_b$ and $\newsep_c$. Let $h_i$ be the set of variables of the separator $\xi(p_i)$ that is covered
in the tree projection $\HG_{\Q_a}$, and let $w_i\in\V$ be a view with $h_i\subseteq\vars(w_i)$. From the inductive hypothesis, there exists
two views $[w_b X^{(w_b)}_b]\in \newsep_b$ and $[w_c X^{(w_c)}_c]\in \newsep_c$ that propagated the correct weights for $\mathcal{F}_{p_b}$ and
$\mathcal{F}_{p_c}$ to their parent $p_i$. More precisely, after the above discussion on function~{\it Propagate}, it follows that the view
$[w_i X^{(w_b)}_b X^{(w_c)}_c]$ is such that, $\forall \theta\in [w_i X^{(w_b)}_b X^{(w_c)}_c]^{\onDB_i}$, $\max(\theta,X^{(w_b)}_b,h_i,[w_i
X^{(w_b)}_b X^{(w_c)}_c]^{\onDB_i})=\max_{\mathcal{F}_{p_b}}(\theta[h_i])$ and $\max(\theta,X^{(w_c)}_c,h_i,[w_i X^{(w_b)}_b
X^{(w_c)}_c]^{\onDB_i})=\max_{\mathcal{F}_{p_c}}(\theta[h_i])$. However, since local consistency holds and $h_i$ is covered by the tree
projection at hand, such assignments $\theta[h_i]$ are precisely the assignments of the relation $\rel(w_{p_i},\overline \DB)$ of the atom
$w_{p_i}$ in the sandwich formula covering the $p_i$-separator, as in the statement of Lemma~\ref{lem:distribution}. Therefore, if $\oplus_i$
is the operator labeling $p_i$ in the parse tree, from this lemma get $\max_{\mathcal{F}_{p_i}}(\theta[h_i])=
\max_{\mathcal{F}_{p_b}}(\theta[h_i]) \oplus_i \max_{\mathcal{F}_{p_c}}(\theta[h_i]) =$ $\theta[X^{(w_b)}_b]\oplus_i \theta[X^{(w_b)}_c]$. It
follows that the desired combined maximum is achieved, for any projected assignment $\theta[h_i]$, on some assignment $\theta'\in  [w_i
X^{(w_b)}_b X^{(w_c)}_c]^{\onDB_i}$, with $\theta'[h_i]=\theta[h_i]$ and $t'[w_i]\in \Q^{\overline \onDB}$, that gets maximum weights according
to both $\mathcal{F}_{p_b}$ and $\mathcal{F}_{p_b}$. Of course such a correct assignment is preserved under any step of the algorithm, and we
know that the weights are correctly propagated via the tree-projection. Then, the marginalization step correctly selects the right maximal
weights for every assignment according to $\mathcal{F}_{p_i}$.

Finally, observe that, from the inductive property proved for the two variables $X^{(w_b)}_b$ and $X^{(w_c)}_c$ in the discussion about ${\it
propagate}$ and from Lemma~\ref{lem:bound}, it follows that the inductive statements hold for $\newsep_i$, too. Now consider the step where the
selection of a ``minimum view'' (if any) is executed. Observe that, by the inductive hypothesis, wrong views may only get better maxima for
their assignments. Therefore, for any set of augmented views $\V_w\subseteq \newsep_i$ with the same ``base'' view $w$, we may safely select
for each assignment $\theta\in w^{\overline\onDB}$, its version with the lowest marginalized weight $X^{(w')}_i$ among all views in $\V_w$.
More precisely, observe that such a selection does not alter the validity of the two inductive statements. Moreover, as far as augmented views
over different base views are concerned, we may safely consider the absolute maximum. Clearly, the right one (if any) should be the smallest
one, from the inductive statements. Therefore, looking at this maximum, we are able to discard augmented views that cannot contain the
variables of an actual $\newsep_i$ separator, and more importantly we enforce the actual maximum to be an upper bound for all weights computed
in the algorithm for $\mathcal{F}_{p_i}$ and, at the end, for $\mathcal{F}$.
\end{proof}

We can now show the correctness of \algMax\ under \textbf{P1}.

\medskip \noindent \textbf{Theorem~\ref{thm:promiseWeakMax}.}
{\em Algorithm~\algMax\ runs in polynomial time.
It outputs {\em \texttt{NO SOLUTION}}, only if $\PHI^\onDB=\emptyset$.
Moreover, it computes %in output polynomial-space%todo eliminato
an answer to {\em \anymaxref($\PHI_\mathcal{F},O,\mathrm{DB},\V,\DBW$)}, with $\mathcal{F}$ being a structured evaluation function, if
\emph{\textbf{P1}} holds. It outputs {\em \texttt{FAIL}}, only if \emph{\textbf{P1}} does not hold.}

\smallskip

\begin{proof}
Note first that, whenever the algorithm outputs \texttt{NO SOLUTION}, then the constraint database obtained by enforcing local
pairwise-consistency is empty. Hence, we are guaranteed that there is no solution at all in $\PHI^\onDB$.

Consider then the case where the algorithm does not output  \texttt{NO SOLUTION}. In this case,  correctness follows from
Lemma~\ref{lem:propagation} applied to the root $p_s$, which contains a unique augmented view, after the selection of the ``minimum'' view in
function~$\mathit{evaluate}$. Such a view contains only corrected weights if {$(\mathcal{F},O)$ can be embedded in $(\Q',\V)$ for some
subformula $\Q'$ of $\Q\wedge\atom(O)$ such that $\Q'\homEquiv \Q\wedge\atom(O)$}. On the other hand, the algorithm outputs \texttt{FAIL} only
when it recognizes that this condition does not hold, because there are no feasible separators at some step, or when it turns out that the
formula has no answers at all.
Observe that Algorithm~\algMax\ performs $s-1$ iterations of the {\bf for} loop plus some further operations, each one feasible in polynomial
time (where the mathematical operations cost 1). In particular, local consistency requires polynomial time w.r.t.~the given set of views and
the given constraint database, and note that we deal with at most $O(|\V|^2 \size{\mathcal{F}})$ views in the algorithm, where
$\size{\mathcal{F}}$ denotes the size of $\mathcal{F}$, which is an upper bound to the total number of vertices of its parse trees.

Finally, note that the the size of the weights computed during the execution of the algorithm is polynomially bounded w.r.t.~the size of the
input. Indeed, recall from Lemma~\ref{lem:propagation} that we avoid the computation of wrong weights larger than the actual maximum, if the
promise is true. Otherwise, we cannot guarantee anything: just think that the formula may be empty if there are no tree-projections at all (no
matter of formula embeddings).
\end{proof}

\subsection{Larger Islands of Tractability}

We now show how to solve \anymaxref\ under a less stringent promise than \textbf{P1}, by using \algMax\ as an oracle.
For any variable $X$, define $\atom_X(\{X\})$ to be a fresh atom (with a fresh relation symbol) such that $\{X\}=\vars(\atom_X(\{X\}))$.

Consider the following promise:

\begin{itemize}
\item[\textbf{(P2)}] \emph{$\mathcal{F}$ can be embedded in $(\Q',\V)$ for some subformula $\Q'$ of $\Q\wedge\bigwedge_{X\in
    O}\atom_X(\{X\})$ such that $\Q'\homEquiv \Q\wedge\bigwedge_{X\in O}\atom_X(\{X\})$}.
\end{itemize}

It is easy to see that \textbf{P1} entails \textbf{P2}, as every tree projection covering $\atom(O)$ also covers all atoms of the form
$\atom_X(\{X\})$, for any $X \in O$.

\begin{theorem}\label{thm:promiseMax}
There is a polynomial-time algorithm that solves {\em \anymaxref($\PHI_\mathcal{F},O,\mathrm{DB},\V,\DBW$)}  %in output polynomial-space,%todo eliminato
if \emph{\bf P2} holds. It outputs {\em \texttt{FAIL}}, only if \emph{\bf P2} does not hold.
\end{theorem}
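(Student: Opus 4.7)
The plan is to use \algMax\ iteratively, fixing one variable of $O=\{X_1,\dots,X_k\}$ at a time. Initialize $\Q^{(0)}:=\Q$, $\DB^{(0)}:=\DB$, $\V^{(0)}:=\V$, $\DBW^{(0)}:=\DBW$. At step $i$, I build an augmented view set $\V^{(i-1)}_{X_i}$ that extends $\V^{(i-1)}$ with a copy $w_\ast$ over $\vars(w)\cup\{X_i\}$ of each view $w$, whose relation in the extended legal database $\DBW^{(i-1)}_{X_i}$ is $\rel(w,\DBW^{(i-1)})\times\dom(X_i)$. I then invoke $\algMax(\Q_\mathcal{F}^{(i-1)},\{X_i\},\DB^{(i-1)},\V^{(i-1)}_{X_i},\DBW^{(i-1)}_{X_i})$; if it reports \texttt{FAIL} or \texttt{NO SOLUTION}, I propagate that outcome, otherwise I recover the returned optimal assignment $\theta_i$ of $X_i$ with value $z_i$, and obtain $\Q^{(i)},\DB^{(i)},\V^{(i)},\DBW^{(i)}$ by selecting the tuples where $X_i$ takes the value $\theta_i(X_i)$ and then projecting $X_i$ away. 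After $k$ iterations I output $\theta:=\theta_1\cup\dots\cup\theta_k$ provided all $z_i$ agree and the final check $\theta\in\Q^\onDB[O]$ with $\mathcal{F}(\theta)=z_1$ succeeds; otherwise I output \texttt{FAIL}.

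The crux is that whenever \textbf{P2} holds, each call to \algMax\ meets the hypothesis of Theorem~\ref{thm:promiseWeakMax} for the pair $(\mathcal{F},\{X_i\})$. Indeed, pick the subformula $\Q'\homEquiv\Q\wedge\bigwedge_{X\in O}\atom_X(\{X\})$ and the tree projection $\HG_a$ witnessing the embedding of $\mathcal{F}$ in $(\Q',\V)$ supplied by \textbf{P2}. Augmenting every hyperedge of $\HG_a$ with $X_i$ yields an acyclic hypergraph that is covered by the augmented views of $\V^{(0)}_{X_i}$ and that covers $\atom(\{X_i\})$ trivially (since every hyperedge now contains $X_i$); it is therefore a tree projection of a subformula homomorphically equivalent to $\Q\wedge\atom(\{X_i\})$ with respect to $\V^{(0)}_{X_i}$. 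The original embedding of $\mathcal{F}$ then lifts to an embedding of $(\mathcal{F},\{X_i\})$ by sending the fresh root of the output-aware parse tree to the same join-tree vertex as the parse-tree root, because that vertex already labels $X_i$ and the descendant structure is preserved; formally, this is an application of Theorem~\ref{thm:injective}.

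To iterate, I need \textbf{P2} to be preserved after fixing $X_j$: the same tree projection and embedding remain valid once $X_j$ is projected out of $\Q'$ and of the views, and the homomorphic equivalence with $\Q^{(j)}\wedge\bigwedge_{\ell>j}\atom_{X_\ell}(\{X_\ell\})$ carries over because the restriction of the original homomorphism to the unfixed variables still maps onto the reduced formula. Consequently every $z_i$ equals the global optimum of $\Q^\onDB[O]$, so all $z_i$ must coincide and $\theta$ is an optimal answer. The main obstacle is the structural bookkeeping around the embedding, namely verifying that adjoining $X_i$ to every hyperedge and later projecting out fixed variables preserves both the acyclicity of the hypergraph and conditions (1) and (2) of Definition~\ref{def:embedding}. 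Once this invariant is established, polynomial time follows from $|O|$ calls to \algMax\ and polynomial database updates; and since a failure of the $z_i$-consistency test or of the final verification would contradict Theorem~\ref{thm:promiseWeakMax} whenever \textbf{P2} holds, the algorithm emits \texttt{FAIL} only when \textbf{P2} is violated.
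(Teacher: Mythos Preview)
Your approach is essentially the paper's: iterate over the variables of $O$, at each step augment every view with the current variable $X_i$ so that $(\mathcal{F},\{X_i\})$ embeds whenever $\mathcal{F}$ does, call \algMax\ with output set $\{X_i\}$, freeze $X_i$ to the returned value, and repeat. Your justification that \textbf{P2} implies the precondition of Theorem~\ref{thm:promiseWeakMax} at each call (by adding $X_i$ to every hyperedge of the witnessing tree projection) is exactly the paper's.

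Two small gaps are worth flagging. First, your final check ``$\theta\in\Q^\onDB[O]$ with $\mathcal{F}(\theta)=z_1$'' is not well-posed for general $O$: membership in $\Q^\onDB[O]$ is itself an $\NP$-hard projection question, and $\mathcal{F}(\theta)$ is undefined on a partial assignment that need not cover all weight-function variables. The paper simply omits this check here---under \textbf{P2} correctness already follows from the agreement of the $z_i$, and the present theorem does not promise certified output (that is Theorem~\ref{thm:nopromiseMAX}, where one effectively takes $O=\vars(\Q)$ so that $\theta$ is total and both checks become cheap). Second, projecting the fixed variable $X_i$ out of $\Q$, $\V$, and the databases---rather than merely shrinking $\dom(X_i)$ to $\{\theta_i(X_i)\}$ in $\DBW$ as the paper does---forces you also to substitute $\theta_i(X_i)$ into every weight function of $\mathcal{F}$ mentioning $X_i$; otherwise $\mathcal{F}$ is no longer a valuation function for $\Q^{(i)}$ and \algMax\ cannot be invoked. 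The paper's domain-restriction keeps $\Q$, $\V$, and $\mathcal{F}$ fixed throughout, so the structural invariant holds trivially and this bookkeeping disappears.
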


\begin{proof} Let $\bar X$ be a variable in $O$. Let $\V_{\bar X}$ be the set of views $\{ [w \bar X] \mid w\in \V, X\not\in
\vars(w)\}\cup \{ w \mid w\in V, \bar X\in \vars(w)\}$.
Let $\DBW_{\bar X}$ be the constraint database including all relations of $\DBW$ for the views $w\in \V_{\bar X}\cap \V$, and the cartesian
product $w^\onDBW\times \dom(\bar X)$, for each other view $[w \bar X]$.
Observe that $\mathcal{F}$ can be embedded in $(\Q',\V_{\bar X})$, too. Then, since each view in $\V_{\bar X}$ includes $\bar X$ in its sets of
variables, it trivially follows that $(\mathcal{F},\{\bar X\})$ can be embedded in $(\Q',\V_{\bar X})$.
Note that $\Q'$ has the form $\Q'' \wedge \bigwedge_{X\in O}\atom_X(\{X\})$, where $\Q''$ is a subformula of $\Q$ with $\Q''\homEquiv \Q$.
Thus, $\Q'' \wedge \atom_{\bar X}(\{\bar X\})$ is homomorphically equivalent to $\Q \wedge \atom_{\bar X}(\{\bar X\})$. And, of course,
$(\mathcal{F},\{\bar X\})$ can be embedded in $(\Q'',\V_{\bar X})$.

Now, note that $\DBW_{\bar X}$ is legal for $\V_{\bar X}$ w.r.t.~$\Q$ and $\DB$. Thus, we are in the position to apply \algMax\ over $\Q$,
$\mathcal{F}$, $\V_{\bar X}$, $\DBW_{\bar X}$, and with $O=\{\bar X\}$. By the application of Theorem~\ref{thm:promiseWeakMax} on the modified
instance, if \algMax\ returns \texttt{NO SOLUTION} (resp., \texttt{FAIL}), then we can terminate the computation, by returning that there is no
solution, i.e., $\PHI^\onDB=\emptyset$ (resp., the promise \textbf{P1}---and, hence \textbf{P2}---is disproved).
Therefore, let us assume that we get an assignment $\theta_{\bar X}$.

After that $\theta_{\bar X}$ is computed, we can update $\DBW$ in such a way that $\dom(\bar X)=\{\theta_{\bar X}\}$, and repeat the process
with another variable in $O$. Eventually, all the values obtained this way for variables in $O$ form an assignment that is returned as output
as a solution to {\anymaxref}.
Note that, at each step, \algMax\ can disprove the promise \textbf{P1} (and, hence \textbf{P2}), by returning \texttt{FAIL}. Moreover, our
procedure may additionally disprove the property \textbf{P2}, whenever there is a pair of values $\theta_{\bar X}$ and $\theta_{\bar X'}$
associated with different maximum weights by \algMax. Finally, if some subsequent invocation returns \texttt{NO SOLUTION} (after that the first
invocation did not return \texttt{NO SOLUTION} or \texttt{FAIL}), then we can conclude that there is no sandwich formula of $\Q$ w.r.t.~$\V$
(by the results in~\cite{GS10,GS13}), and hence the promise is again disproved.
\end{proof}

By the above result and the approach by Lawler~\cite{L72}, we obtain the corresponding tractability result for \topk.

\begin{theorem}\label{thm:promiseTOPK} There is an algorithm that solves {\em \topk($\PHI_\mathcal{F},O,\mathrm{DB},\V,\DBW$)}  with
polynomial delay, if \emph{\bf P2} holds. It outputs {\em \texttt{FAIL}}, only if \emph{\bf P2} does not hold.
\end{theorem}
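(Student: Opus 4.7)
The plan is to adapt Lawler's method for ranking solutions to discrete optimization problems~\cite{L72}, using the algorithm from Theorem~\ref{thm:promiseMax} as a polynomial-time oracle for computing a single best (certified) solution under promise \textbf{P2}. First, I would invoke that oracle once on $(\PHI_\mathcal{F},O,\mathrm{DB},\V,\DBW)$. If it returns \texttt{NO SOLUTION}, we are done. If it returns \texttt{FAIL}, then \textbf{P2} is disproved and we output an empty certified prefix. Otherwise, we obtain the first assignment $\theta_1$ together with its weight, which we emit as the first element of the ranked list.

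Next, I would apply the standard Lawler branching. Fix an arbitrary ordering $X_1,\dots,X_m$ of the variables in $O$ (extending, if needed, to a chosen witness total assignment on $\vars(\PHI)$). For each $i\in\{1,\dots,m\}$, construct a modified constraint database $\DB_i$ by restricting the relations so that they enforce $X_j=\theta_1(X_j)$ for $j<i$ and $X_i\neq\theta_1(X_i)$. These restrictions are easily implemented by filtering tuples in the base relations (and correspondingly in $\DBW$, yielding a legal database $\DBW_i$). The sub-databases $\DB_i$ partition $\PHI^\onDB[O]\setminus\{\theta_1\}$. Invoking the oracle of Theorem~\ref{thm:promiseMax} on each $(\PHI_\mathcal{F},O,\DB_i,\V,\DBW_i)$ gives a candidate optimal solution per branch; each candidate is inserted (tagged with its branch and associated $\theta_1$-prefix) into a priority queue ordered by weight.

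The enumeration loop then proceeds as follows: extract the best candidate $\theta_{k+1}$ from the queue, output it, and branch it exactly as above, spawning up to $m$ new restricted subproblems and inserting their oracle answers into the queue. The loop halts when $K$ solutions have been produced or the queue is empty. Each iteration performs at most $m$ calls to the polynomial-time oracle plus $O(\log|\textrm{queue}|)$-time priority-queue operations; the queue may contain exponentially many items, but operations on it remain polynomially bounded in input size. This yields the desired polynomial delay. The whole process disproves \textbf{P2} if and only if some oracle call returns \texttt{FAIL}, in which case we stop and the solutions already emitted form the certified prefix.

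The crucial correctness point is that \textbf{P2} is preserved by the branching. This holds because \textbf{P2} is a purely structural condition on $(\PHI,\V)$: restricting the active domains of variables (equivalently, deleting ground atoms from $\DB$ and projecting the restriction through to $\DBW$) does not alter $\HG_\PHI$, $\HG_\V$, or the parse tree of $\mathcal{F}$, so any embedding of $\mathcal{F}$ in a homomorphically equivalent subformula witnessing \textbf{P2} on the original instance still witnesses it on every $(\PHI_\mathcal{F},O,\DB_i,\V,\DBW_i)$. The main obstacle is precisely this bookkeeping: one must verify that restricting $\DBW$ to reflect the domain restrictions in $\DB$ keeps it legal for $\V$ w.r.t.\ $\Q$ and the restricted $\DB$, and that the weight returned by the oracle on each branch is certified to equal the true optimum over that branch (which is exactly what Theorem~\ref{thm:promiseMax} guarantees whenever it does not return \texttt{FAIL}). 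Since ranking by weight across branches then produces the true global ranking by correctness of Lawler's partitioning argument, the overall algorithm solves \topk\ under \textbf{P2}.
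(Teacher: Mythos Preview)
Your proposal is essentially the same as the paper's proof: use the polynomial-time \anymaxref\ oracle from Theorem~\ref{thm:promiseMax} inside Lawler's ranking scheme, observing that each branch only modifies the constraint database and therefore leaves the structural promise \textbf{P2} intact. The one mechanical difference is that the paper first augments $\Q$ to $\bar\Q=\Q\wedge\bigwedge_{X\in O}\atom_X(\{X\})$ (adding corresponding base views to $\V$ and $\DBW$), and then implements the Lawler restrictions by changing only these singleton relations, following~\cite{KS06}; you instead propose to filter the existing base relations directly. Both encodings are correct, but the paper's version keeps the bookkeeping you flag as the ``main obstacle'' trivial: legality of the modified $\DBW$ is immediate because only the fresh singleton base views are touched, and it is transparent that the formula/hypergraph structure is untouched so that \textbf{P2} (which is stated precisely for $\bar\Q$) is preserved verbatim at every branch.
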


\begin{proof} Let $\bar \Q=\Q\wedge\bigwedge_{X\in O}\atom_X(\{X\})$. Update $\V$ by adding the base views associated with
such novel atoms. Moreover, update $\DB$ and $\DBW$ by adding the relations associated with these views, each one containing the whole active
domain $\dom(X)$ of any variable $X\in O$.
Note that $\Q^\onDB=\bar \Q^\onDB$ and that $\DBW$ is legal. By \textbf{P2}, $\mathcal{F}$ can be embedded in $(\Q',\V)$ for some subformula
$\Q'$ of $\bar \Q$ with $\Q'\homEquiv \bar \Q$. Then, \textbf{P2} clearly holds for $\bar \Q$, too. From Theorem~\ref{thm:promiseMax},
\anymaxref($\bar \Q_{\mathcal{F}},O,\DB$) can be computed in polynomial time. Therefore, we can apply Lawler's procedure~\cite{L72} on $\bar
\Q$, by using \algMax\ as an oracle. In particular, at any step, the oracle will be called with a different constraint database obtained by
changing the $\atom_X$ relations (as in~\cite{KS06}), and with the base views in the constraint database $\DBW$ changed accordingly. Note that,
at each step, \anymaxref\ is still feasible in polynomial time, as we just change the database instance over which it has to be solved, without
affecting the structural promise.
\end{proof}

\subsection{Putting It All Together: Computing Certified Solutions}

We can now complete the picture, by showing that the above results may be exploited to design an algorithm that always output reliable answers.
Indeed, if it is not possible to compute a correct solution, the promise is disproved. In particular, the price of having certified solutions
is now to consider the following more stringent promise: %, for which a tree projection of the entire formula is necessary to exist:

\begin{itemize}
\item[\textbf{(P0)}] \emph{$\mathcal{F}$ can be embedded in $(\Q,\V)$}.
\end{itemize}

\medskip

\noindent \textbf{Theorem~\ref{thm:nopromiseMAX}.} {\em There is a polynomial-time algorithm for structured valuation functions that either
computes a solution to {\em \anymaxref($\PHI_\mathcal{F},O,\mathrm{DB},\V,\DBW$)},% in output polynomial-space,%todo eliminato
or disproves \emph{\bf P0}.}

\smallskip

\begin{proof}
Assume that \textbf{P0} holds. Then, we derive that $\mathcal{F}$ can be embedded in $(\Q\wedge\bigwedge_{X\in \vars(\Q)}\atom_X(\{X\}),\V)$,
because any tree projection of $\Q$ w.r.t.~$\V$ clearly covers every single variable in $\vars(\Q)$. Then, the promise \textbf{P2} holds on
$\Q\wedge\bigwedge_{X\in \vars(\Q)}\atom_X(\{X\})$ and, thus, we can exploit Theorem~\ref{thm:promiseMax} to compute a solution to
{\anymaxref}{($\Q_{\mathcal{F}},\vars(\Q),\DB$)}.

Assume that the procedure in the proof of Theorem~\ref{thm:promiseMax} succeeds (otherwise \textbf{P2} and thus \textbf{P0} are disproved).
If the result is \texttt{NO SOLUTION}, then we are in fact guaranteed that there is no solution at all. Otherwise, let $\theta$ be its output.
Let $v_{max}$ be the one weight associated with all partial solutions at any invocation of \algMax. Then, in polynomial time we check that
$\theta$ is in fact an answer in $\Q^\onDB$, and that $\mathcal{F}(t)=v_{max}$. If the former check fails, then we conclude that no tree
projections exist (and thus no embedding may exist, as well). Otherwise, from the proof of Lemma~\ref{lem:propagation},
$\max_{\mathcal{F}}(\theta[X])\leq v_{max}$, for each variable $X\in \vars(\Q)$. Therefore, if $\mathcal{F}(\theta)=v_{max}$ we are sure that
the result is correct, and else that the promise does not hold.
\end{proof}

By following the same line of reasoning as in the proof of Theorem~\ref{thm:promiseTOPK}, we get the corresponding result for \topk.

\medskip

\noindent \textbf{Theorem~\ref{thm:nopromiseTOPK}.} {\em There is a polynomial-delay algorithm for structured valuation functions that either
solves {\em \topk($\PHI_\mathcal{F},O,\mathrm{DB},\V,\DBW$)}, or disproves that $\mathcal{F}$ can be embedded in $(\Q,\V)$; in the latter case,
before terminating, it computes a (possibly empty) certified prefix of a solution.}
\end{document}